\documentclass{article}

\usepackage[preprint]{neurips_2026}

\usepackage[utf8]{inputenc} % allow utf-8 input
\usepackage[T1]{fontenc}    % use 8-bit T1 fonts

\usepackage{url}            % simple URL typesetting
\usepackage{booktabs}       % professional-quality tables
\usepackage{amsfonts}       % blackboard math symbols
\usepackage{amsmath}        % amsmath
\usepackage{amsthm}
\usepackage{caption}        % caption
\usepackage{subcaption}     % subcaption
\usepackage{nicefrac}       % compact symbols for 1/2, etc.
\usepackage{multirow}       % multirow in tablet
\usepackage{microtype}      % microtypography
\usepackage{xcolor}         % colors
\usepackage{tikz-cd}        % commutation graphs
\usepackage{threeparttable}

\newtheorem{theorem}{Theorem} %[section]

\definecolor{deblue}{RGB}{11,132,147}
\definecolor{ocra}{RGB}{204, 119, 34}
\usepackage{enumitem}
\usepackage{tikz}
\newcommand{\fcircle}[2][red,fill=red]{\tikz[baseline=-0.5ex]\draw[#1,radius=#2] (0,0.03) circle ;}

\usepackage{MnSymbol,bbding,pifont}

\definecolor{neuripsblue}{rgb}{0.21,0.49,0.74}
\usepackage[pagebackref,breaklinks,colorlinks,citecolor=neuripsblue,linkcolor=neuripsblue]{hyperref}
 
\newtheorem{definition}{Definition}

\newtheorem{proposition}{Proposition}[theorem]
\newtheorem{corollary}{Corollary}[theorem]

\newtheorem{remark}{Remark}
\usepackage[table,xcdraw]{xcolor}
\setlist[enumerate,1]{leftmargin=2.0em}
\setlist[enumerate,2]{leftmargin=2.5em}
\setlist[enumerate,3]{leftmargin=3.0em}
\setlist[enumerate,4]{leftmargin=3.5em}
\setlist[enumerate,5]{leftmargin=4.0em}
\setlist[enumerate,6]{leftmargin=4.5em}
\makeatletter

\newcounter{enumv}
\newcounter{enumvi}
\makeatother

\title{Don't Fix the Basis -- Learn It: Spectral Representation with Adaptive Basis Learning for PDEs}

\author{%
  Xuxiang Zhao\\
  Qiuzhen College, Tsinghua University\\
  \texttt{zhaoxx24@mails.tsinghua.edu.cn}\\
  \And
  Angelica I. Aviles-Rivero\thanks{Corresponding author.}\\
  YMSC, Tsinghua University\\
  \texttt{aviles-rivero@tsinghua.edu.cn}\\
}

\begin{document}

\maketitle

\begin{abstract}
Spectral neural operators achieve strong performance for PDE learning, but rely on fixed global bases that limit their ability to represent spatially heterogeneous and multiscale dynamics. We propose Adaptive Basis Learning (ABLE), a framework that learns data-dependent spectral representations instead of relying on predefined bases. ABLE constructs a spatially adaptive Parseval frame via a learned ancillary density, enabling the operator to act in a lifted spectral space while preserving invertibility and maintaining $O(N\log N)$ complexity through FFT-based implementation. This shifts the source of expressivity from spectral coefficients to the representation itself, allowing the model to capture localized structures and non-translation-invariant interactions more efficiently.
ABLE integrates seamlessly into existing neural operator architectures as a drop-in replacement for spectral layers. Across a range of benchmarks ABLE  improves accuracy over strong baselines, with the largest gains in regimes characterized by sharp gradients and multiscale behavior. Moreover, augmenting existing models (e.g., U-FNO, HPM) with ABLE further enhances their performance, demonstrating its role as a general and complementary spectral refinement.
Our results highlight that the data-driven choice of representation, rather than operator complexity alone, is a key bottleneck in neural operator design. By learning the basis itself, ABLE provides a principled and efficient framework for improving spectral methods in PDE learning.
\end{abstract}

\addtocontents{toc}{\protect\setcounter{tocdepth}{-1}}

\section{Introduction}
Partial differential equations (PDEs)~\citep{evans2022partial,leveque2002finite,quarteroni1994numerical} underpin a vast range of physical and engineering systems, from fluid dynamics to porous media flow. Yet solving nonlinear PDEs in practice remains computationally demanding. High-fidelity numerical solvers are often required, and their cost quickly becomes prohibitive when solutions must be recomputed across different parameter settings. This challenge is further compounded by the sensitivity of many PDE systems to initial and boundary conditions: even small perturbations can lead to qualitatively different behaviors, necessitating recomputation for each new configuration.

Neural operators~\citep{li2020fourier,bryutkin2024hamlet,lu2019deeponet,wu2024transolver,cheng2025mamba} offer a different paradigm by learning mappings between function spaces directly from data, enabling the approximation of solution operators rather than individual solutions. 
This approach has been successfully applied across a range of domains, including fluid dynamics, porous media flow, and climate modeling ~\citep{pathak2022fourcastnet, wang2025fourier}. 
These developments have led to a family of models, among which spectral neural operators have emerged as particularly effective due to their ability to exploit structure in the frequency domain. In particular, the Fourier Neural Operator~\citep{li2020fourier} (FNO) and its variants~\citep{QI2024106239,Guibas:2021,Wen2021UFNOA,Li2022FourierNO,Fanaskov2022SpectralNO,Wang2025LaplacianEN} approximate solution operators in the spectral domain, enabling efficient inference once trained. When inputs are drawn from a common distribution, neural operators amortize the cost of repeated PDE solves and provide a scalable alternative to classical simulation pipelines.
A key ingredient behind the success of spectral neural operators is the use of a fixed basis. Functions are projected onto predetermined Fourier modes, and learning is restricted to modifying spectral coefficients within this representation. While this structure enables efficiency and stability, it also imposes a fundamental constraint: the representation is fixed independently of the data. As a result, operator expressivity is limited not only by the model, but by the choice of basis itself. This limitation becomes particularly severe in regimes characterized by strong spatial heterogeneity, sharp transitions, or multiscale interactions, revealing a mismatch between fixed representations and complex PDE dynamics. Rather than adapting the operator within a fixed representation, we ask: can the representation itself be learned?

We propose \textbf{Adaptive Basis Learning (ABLE)}, a framework in which the representation basis is no longer fixed, but learned from the input. This yields a spatially adaptive, overcomplete representation that couples spectral structure with local variability. Instead of synthesizing localized features through increasingly complex spectral coefficients, ABLE encodes them directly in the basis.
From a theoretical perspective, the learned basis forms a Parseval frame, and the associated transformation is invertible and isometric, ensuring that information is preserved. Moreover, classical Fourier-based operators are recovered as a special case, establishing ABLE as a strict generalization of existing spectral neural operators. By allowing the representation to adapt to the data, the model effectively redistributes complexity from the operator to the basis, enabling a richer class of operators while retaining the computational advantages of spectral methods.
This shift in representation has several consequences: it enables modeling of spatially heterogeneous, non-translation-invariant operators, improves the efficiency of representing localized structures, and introduces an implicit form of nonlinearity via the interaction between the adaptive basis and the operator.  Our contributions are:

\fcircle[fill=ocra]{2pt} We introduce Adaptive Basis Learning (ABLE), a framework in which the representation basis is learned from data rather than fixed \emph{a priori}, shifting the focus of learning from spectral coefficients to the underlying representation of the function space.

\fcircle[fill=ocra]{2pt} We formulate the adaptive basis as a Parseval tight frame and define the associated transformation, which is invertible and isometric. This ensures information preservation and provides a principled extension of classical Fourier analysis to data-dependent bases.

\fcircle[fill=ocra]{2pt} We propose the Adaptive Basis Learning Neural Operator (ABLE-NO) and show that it induces a strictly larger class of operators than Fourier Neural Operators, characterized by spatially adaptive and not necessarily translation-invariant kernels with inherently endowed nonlinearity.

\fcircle[fill=ocra]{2pt}  We demonstrate the effectiveness of ABLE on several challenging PDE benchmarks, including Burgers', Darcy, and Navier--Stokes equations, with consistent improvements in regimes exhibiting strong multiscale and localized phenomena.

\section{Related Work}
\textbf{Fixed‑basis Spectral Neural Operators.} The Fourier Neural Operator (FNO) \citep{DBLP:journals/corr/abs-2010-08895} represents the current state of practice for learning PDE solution operators. By assuming translation invariance, FNO diagonalises convolution kernels in the Fourier domain, achieving \(O(N\log N)\) complexity via the Fast Fourier Transform. The Fourier basis is orthonormal, self‑dual, and satisfies Parseval’s identity, which guarantees energy preservation. However, the basis is global and fixed a priori, limiting the representation of localised or multiscale features. Extensions to other orthonormal bases have been proposed~\citep{QI2024106239,Guibas:2021,Wen2021UFNOA,Li2022FourierNO,Fanaskov2022SpectralNO,Wang2025LaplacianEN}: the Spectral Neural Operator (SNO) \citep{Fanaskov2022SpectralNO} uses Chebyshev polynomials, the Laplacian Neural Operator (LE‑NO) \citep{Wang2025LaplacianEN} employs Laplacian eigenfunctions, and the Wavelet Neural Operator (WNO) \citep{Tripura2022WaveletNO} uses wavelets to capture both spatial and spectral information. \textit{These methods share the same structural limitation: the basis is predetermined and data‑agnostic.} Consequently, localised phenomena must be approximated through high‑frequency modes, which are often truncated for stability, discarding fine‑scale information.

\textbf{Spectral Filtering and Learnable Spectral Mixing Methods.}
Several works attempt to improve FNO \textit{without changing the underlying basis.} AFNO\citep{guibas2021adaptive} and GFNO \citep{QI2024106239} introduce learnable spectral masks to adapt the diagonal multiplier \(R(k)\), but they do not address the global nature of the basis. UFNO \citep{Wen2021UFNOA} and FreqMoE \citep{Chen2025FreqMoEDF} combine multiple spectral branches or mix spatial and spectral pathways. GeoFNO \citep{Li2022FourierNO} adds encoding/decoding maps to handle irregular geometries. While these modifications yield incremental gains, they leave the fixed Fourier framework intact and do not solve the fundamental lack of spatial adaptivity. Attention mechanisms \citep{Vaswani2017AttentionIA} have been applied to operator learning in models such as Galerkin Attention \citep{Cao2021ChooseAT}, Latent Spectral Models (LSM) \citep{Wu2023SolvingHP} and Spectral Attention Operator Transformer (SAOT)\citep{Zhou2025SAOTAE}. These methods typically incur \(O(N^2)\) cost. A notable recent approach is the Holistic Physics Mixer (HPM) \citep{Yue2024HolisticPS}, which uses a Laplacian eigenbasis together with a learnable softmax selection per spatial point. HPM achieves competitive accuracy but at a cost of \(18\times\) longer training and inference due to the loss of FFT compatibility; moreover, despite their learnable spectral selection mechanism,its basis remains pre‑computed and fixed.

\paragraph{Positioning of this work.}
In contrast to all prior approaches — whether fixed‑basis spectral operators, adaptive filtering, hybrid architectures, or attention‑based mixers — our method does not operate within a predetermined coordinate system. Instead, we learn the basis itself as a data‑dependent object. This shifts the fundamental locus of learning from the coefficients of a fixed expansion to the representation of the function space itself (see Figure~\ref{fig:FIG_TEASER}). ABLE introduces a learnable spectral representation via a Parseval tight frame, while preserving invertibility, isometry, and FFT efficiency. This yields a strictly more expressive class of operators, including spatially heterogeneous, non-linear and non-translation-invariant kernels, which cannot be captured by fixed-basis spectral methods.

\begin{figure*}[t!]
  \centering
  \includegraphics[width=1\textwidth]{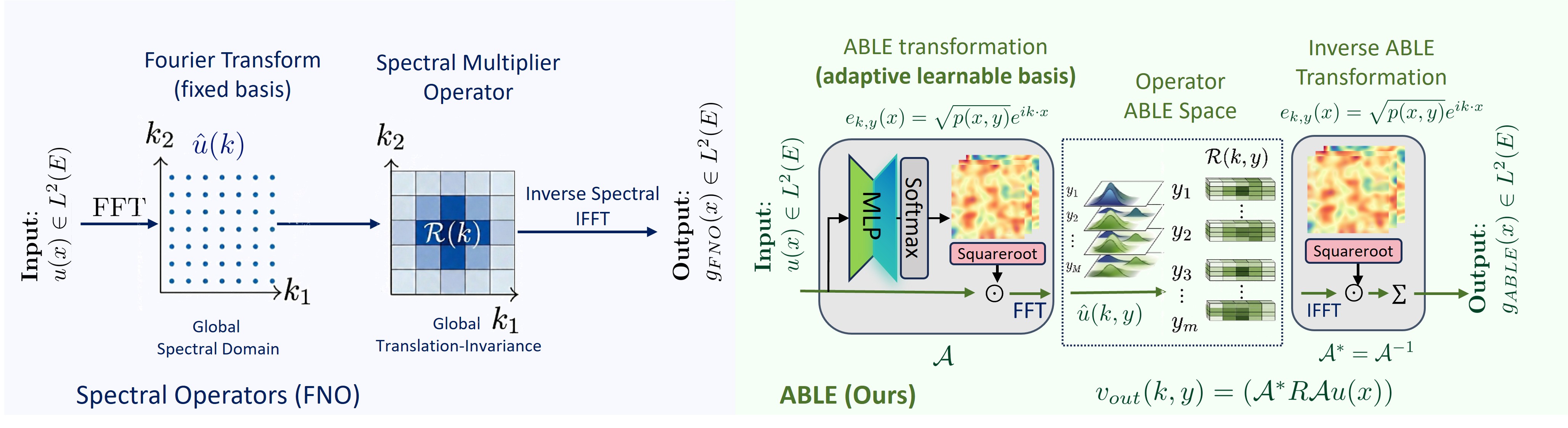}%{images/teaser.pdf} 
  \caption{
  \textbf{From global to adaptive spectral representations.} 
FNO operates on fixed Fourier modes $e^{ik\cdot x}$ in $k$-space, 
while ABLE lifts the representation to $(k,y)$ via learned basis functions 
$e_{k,y}(x)=\sqrt{p(x,y)}e^{ik\cdot x}$, enabling localized, data-dependent operator learning.
  }
  \label{fig:FIG_TEASER}
\end{figure*}

\section{Methodology}
\textbf{Problem Statement and Preliminaries.} Consider a parametric partial differential equation of the form
$
F\big(u(x,t), x, t, a, \nabla u, \nabla^2 u, \partial_t u, \ldots \big) = 0,
\quad (x,t)\in \Omega := E \times [0,T],
$
where $E \subset \mathbb{R}^d$ denotes the spatial domain, the parametric function
$
a : \partial \Omega \to \mathbb{R}^{d_a}
$
represents boundary or initial conditions, and the solution is denoted by
$
u : \Omega \to \mathbb{R}^{d_u}.
$
Under suitable well-posedness assumptions, such a PDE induces a nonlinear solution operator
\begin{equation}
\mathcal{G}^\dagger : \mathcal{A} \subset L^2(\partial \Omega; \mathbb{R}^{d_a}) 
\to \mathcal{U} \subset L^2(\Omega; \mathbb{R}^{d_u}), 
\quad a \mapsto u,
\end{equation}
which maps the parametric input function to the corresponding solution field. In general, $\mathcal{G}^\dagger$ is a nonlinear operator between infinite-dimensional function spaces.
Neural operator methods aim at constructing a parametric approximation $\mathcal{G}_\theta$ of $\mathcal{G}^\dagger$, typically through compositions of nonlinear integral operators acting on function spaces. More precisely, a neural operator layer is defined by
\begin{equation}
v_{l+1}(x) = G_l(v_l)(x) 
= \sigma\!\left( W_l v_l(x) + \int_E K_l(x,z)\, v_l(z)\, dz \right),
\end{equation}
where $W_l \in \mathbb{R}^{d_{l+1}\times d_l}$ is a linear operator acting pointwise, $K_l$ is an integral kernel, and $\sigma$ is a nonlinear activation function.
By composing sufficiently many such layers, one obtains a parameterized operator $\mathcal{G}_\theta$, which, under appropriate assumptions, satisfies a universal approximation property for nonlinear operators. In particular, for any $\varepsilon > 0$, there exists a parameter set $\theta$ such that
$
\|\mathcal{G}_\theta - \mathcal{G}^\dagger\|_{L^2(\mu)} \le \varepsilon,
$
where $\mu$ is a probability measure on the parameter space $\Theta$.
A fundamental class of neural operators arises by imposing translation invariance on the kernel, i.e.,
$
K_l(x,z) = K_l(x - z),
$
so that the integral operator reduces to a convolution. By the convolution theorem, such operators admit a spectral representation via the Fourier transform $\mathcal{F}$, namely
\begin{equation}
\int_E K_l(x-z) v_l(z)\,dz 
= \mathcal{F}^{-1}\!\big( R_l(k)\, \mathcal{F}(v_l)(k) \big)(x),
\end{equation}
where $R_l(k) = \mathcal{F}[K_l](k)$ is a diagonal operator in the spectral domain.
This yields the Fourier Neural Operator (FNO), which can be written in operator form as
$
\mathcal{T}_{FNO} = \mathcal{F}^{-1} \circ R \circ \mathcal{F}.
$
Since the Fourier transform is an isometry on $L^2(E)$, satisfying Parseval’s identity
$
\|\mathcal{F}(f)\|_{L^2(E)} = \|f\|_{L^2(E)},
$
the above construction preserves the energy of the signal and enables efficient computation through the Fast Fourier Transform with complexity $O(N \log N)$.
More generally, one may replace the Fourier basis with an arbitrary orthonormal basis $\{e_k\}$ of $L^2(E)$, leading to a generalized spectral transform
$
\mathcal{F}_{gen}(f)_k = \langle f, e_k \rangle,
$
and corresponding operator representations of the form
$
\mathcal{T} = \mathcal{F}_{gen}^{-1} \circ R \circ \mathcal{F}_{gen}.
$
However, all such constructions rely on a fixed basis $\{e_k\}$ that is independent of data. As a consequence, the representation remains globally defined, and localized structures must be captured indirectly through high-frequency modes, often requiring truncation and leading to loss of fine-scale information.
Therefore, the central problem is to construct operators acting on $L^2(E)$ whose underlying representation is data-dependent, while preserving invertibility, energy conservation, and computational efficiency comparable to Fourier-based methods.

\subsection{Adaptive Learnable Basis with Data-Dependent Representations}
Unlike existing works, \textit{we propose a fundamentally different representation for neural operators in which the underlying basis is no longer fixed, but learned and spatially dependent} (see Figure~\ref{fig:FIG_TEASER}).  In contrast to Fourier-based neural operators that rely on global, translation-invariant bases, the proposed construction introduces a data-driven Parseval frame that adapts to the geometry and distribution of solution space, while still preserving isometry, full fidelity and computational efficiency.

Firstly, we introduce a density function
$
p:E\times \chi \rightarrow \mathbb{R}_{\ge 0}, \, (\textit{\textbf{x}},y)\mapsto p(\textit{\textbf{x}},y),
$
where $E$ is spatial domain for the PDE, and $\chi$ is an auxiliary measurable space equipped with a probability measure $\mu$.($\chi$ is always taken as $\mathbb{R}$ or in implementation a discrete point set $[M]$.) We impose the normalization condition
$
\int_\chi p(\textit{\textbf{x}},y)\,\mu(dy)=1, \, \forall \textit{\textbf{x}}\in E.
$
We then construct \textbf{adaptive learnable basis}, an adaptive family of functions $\{e_{\textbf{k},y}\}_{\textbf{k}\in\mathbb{R}^d,\,y\in\chi}$ defined by
$
e_{\textit{\textbf{k}},y}(\textit{\textbf{x}})=\sqrt{p(\textit{\textbf{x}},y)}e^{i\textit{\textbf{k}}\cdot \textit{\textbf{x}}}.
$
Under the normalization condition on $p$, this family forms a Parseval frame for $L^2(E)$.
With respect to this adaptive frame, we define the transformation
$
\mathcal A: L^2(E)\rightarrow L^2(\mathbb{R}^d\times \chi)
$
by
\begin{equation}
\hat{f}_{k,y}
=
\mathcal{A}(f)_{k,y}
=
\langle e_{k,y},f\rangle
=
\frac{1}{(2\pi)^{d/2}} \int_E \sqrt{p(x,y)}\, e^{-ik\cdot x} f(x)\,dx.
\end{equation}
\textit{This construction departs from classical neural operator formulations by shifting the source of expressivity from learned spectral coefficients to the representation itself}: rather than just operating in a fixed global basis, the model also learns a spatially dependent frame through $p(\textbf{x},y)$. Consequently, localized and heterogeneous structures are encoded directly in the basis, instead of being approximated through increasingly high-frequency modes.
The inverse transformation $\mathcal A^{-1}: \mathrm{Im}(\mathcal A)\rightarrow L^2(E)$ is given by
\begin{equation}
f(x)
=
\mathcal{A}^{-1}(\hat f)(x)
=
\int_{\chi}
\sqrt{p(x,y)}
\left(
\frac{1}{(2\pi)^{d/2}} \int_{\mathbb{R}^d} e^{ik\cdot x}\,\hat f_{k,y}\,dk
\right)
\mu(dy).
\end{equation}

The invertibility is perserved, since we could verify that $\mathcal A^{-1}\mathcal A = \mathrm{Id}$:
\begin{align}
(\mathcal A^{-1}\mathcal A f)(x)
&=
\int_{\chi}\sqrt{p(x,y)}
\frac{1}{(2\pi)^{d/2}} \int_{\mathbb{R}^d} e^{ik\cdot x}
\left(
\frac{1}{(2\pi)^{d/2}} \int_E \sqrt{p(z,y)} e^{-ik\cdot z}f(z)\,dz
\right)
dk\,\mu(dy) \nonumber\\
&=
\int_E
\left[
\int_{\chi}\sqrt{p(x,y)p(z,y)}
\frac{1}{(2\pi)^d} \int_{\mathbb{R}^d} e^{ik\cdot(x-z)}dk
\,
\mu(dy)
\right]
f(z)\,dz.
\end{align}

Using
$
\frac{1}{(2\pi)^d} \int_{\mathbb{R}^d} e^{i\textbf{k}\cdot(\textbf{x}-\textbf{z})}dk = \delta(\textbf{x}-\textbf{z}),
$
and by the normalization setting $\int_\chi p(\textbf{x},y)\mu(dy)=1$, we obtain $(\mathcal A^{-1}\mathcal A f)(x)=f(x)$.

Secondly, $\mathcal A$ satisfies a Parseval identity:
\begin{align}
\|\mathcal A(f)\|_{L^2(\mathbb{R}^d\times \chi)}^2
&=
\int_{\chi}\int_{\mathbb{R}^d} |\hat f_{k,y}|^2\,dk\,\mu(dy) 
&=
\int_E |f(x)|^2
\left(
\int_\chi p(x,y)\mu(dy)
\right)
dx
=
\|f\|_{L^2(E)}^2.
\end{align}
Thus $\mathcal A$ is an isometry. This further shows that the transformation preserves the $L^2$ norm. 
We formalize this key property below.
\begin{theorem}[Isometry of the ABLE transformation]
Let ABLE transformation $\mathcal{A}: L^2(E) \rightarrow \mathrm{Im}(\mathcal{A}) \subset L^2(\mathbb{Z}^d\times\chi)$ be defined by
$f \mapsto \hat f_{\textbf{k},y}=\langle f,e_{\textbf{k},y}\rangle$. Under the normalization condition $\int_\chi p(\textbf{x},y)\mu(dy)=1$, 
transformation $\mathcal{A}$ is a bijection and an isometry onto its image, i.e.,
$
\|f\|_{L^2(E)} = \|\hat f\|_{L^2(E \times \chi)},
\quad 
\|f\|_{L^2(E)}^2 = \int_\chi \sum_{\textbf{k}\in \mathbb{Z}^d} |\hat f_{\textbf{k},y}|^2 \,\mu(dy).
$
Moreover, the inverse is given by
$
\mathcal{A}^{-1}(\hat f)(\textbf{x}) = \int_\chi \sum_{\textbf{k}\in \mathbb{Z}^d} \hat f_{\textbf{k},y} e^{i\textbf{k}\cdot \textbf{x}}\sqrt{p(\textbf{x},y)}\,\mu(dy).
$ {\normalfont (Proof in Appendix A theorem~\ref{theorem_4})}
\end{theorem}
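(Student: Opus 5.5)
The plan is to reduce everything to the classical Parseval identity for Fourier series, applied fibrewise in the auxiliary variable $y$. Since the frequencies range over $\mathbb{Z}^d$, I will take $E$ to be the periodic cell $[0,2\pi)^d$. I will use the orthonormal Fourier basis $\{(2\pi)^{-d/2}e^{i\textbf{k}\cdot\textbf{x}}\}_{\textbf{k}\in\mathbb{Z}^d}$, absorbing the normalising constant into $e_{\textbf{k},y}$ so that it matches the displayed formulas. I will also read the target space $L^2(E\times\chi)$ in the statement as $\ell^2(\mathbb{Z}^d)\otimes L^2(\chi,\mu)\cong L^2(\mathbb{Z}^d\times\chi)$, which is what the second displayed identity actually asserts.

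\textbf{Step 1 (fibrewise functions).} For $f\in L^2(E)$ and $y\in\chi$, set $g_y(\textbf{x})=\sqrt{p(\textbf{x},y)}\,f(\textbf{x})$. By Tonelli and the normalisation $\int_\chi p(\textbf{x},y)\mu(dy)=1$,
\begin{equation*}
\int_\chi \|g_y\|_{L^2(E)}^2\,\mu(dy)=\int_E |f(\textbf{x})|^2\int_\chi p(\textbf{x},y)\,\mu(dy)\,d\textbf{x}=\|f\|_{L^2(E)}^2 .
\end{equation*}
Hence $g_y\in L^2(E)$ for $\mu$-a.e.\ $y$, and $\hat f_{\textbf{k},y}$ is exactly the $\textbf{k}$-th Fourier coefficient of $g_y$. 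I also need the measurability of $(\textbf{k},y)\mapsto\hat f_{\textbf{k},y}$, which follows from Fubini since $g_y\in L^1(E)$ on the bounded cell.

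\textbf{Step 2 (isometry).} For a.e.\ $y$, the classical Parseval identity gives $\sum_{\textbf{k}}|\hat f_{\textbf{k},y}|^2=\|g_y\|^2$. Integrating in $y$ and invoking Step 1 yields
\begin{equation*}
\int_\chi\sum_{\textbf{k}\in\mathbb{Z}^d}|\hat f_{\textbf{k},y}|^2\,\mu(dy)=\|f\|_{L^2(E)}^2 .
\end{equation*}
This is the claimed isometry. Since a linear isometry is injective, $\mathcal A$ is a bijection onto $\mathrm{Im}(\mathcal A)$. By polarisation, the isometry also gives $\mathcal A^*\mathcal A=\mathrm{Id}$.

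\textbf{Step 3 (inverse formula).} I will identify $\mathcal A^{-1}$ on the image with the adjoint $\mathcal A^*$. Pairing $\mathcal A f$ against $c\in L^2(\mathbb{Z}^d\times\chi)$ and exchanging the integrals gives
\begin{equation*}
\mathcal A^*c(\textbf{x})=\int_\chi\sqrt{p(\textbf{x},y)}\sum_{\textbf{k}}c_{\textbf{k},y}\,e^{i\textbf{k}\cdot\textbf{x}}\,\mu(dy),
\end{equation*}
which is the stated formula. As a direct check, for $c=\mathcal A f$ the inner series is the Fourier series of $g_y$, which converges to $g_y$ in $L^2(E)$. Then
\begin{equation*}
\int_\chi\sqrt{p(\textbf{x},y)}\,g_y(\textbf{x})\,\mu(dy)=f(\textbf{x})\int_\chi p(\textbf{x},y)\,\mu(dy)=f(\textbf{x}).
\end{equation*}

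The main obstacle is making Step 3 rigorous, because the $\textbf{k}$-series converges only in $L^2$ and must be interchanged with the $\mu$-integral. I would handle this by working with partial sums $S_N$ in $\textbf{k}$. By Cauchy--Schwarz in $y$ and the normalisation,
\begin{equation*}
\Big\|\int_\chi\sqrt{p(\cdot,y)}\,(g_y-S_Ng_y)\,\mu(dy)\Big\|_{L^2(E)}^2\le\int_\chi\|g_y-S_Ng_y\|^2\,\mu(dy).
\end{equation*}
The right-hand side tends to $0$ by dominated convergence, with dominating function $\|g_y\|^2$, which is $\mu$-integrable by Step 1. So the inverse formula holds in the $L^2(E)$ sense, and pointwise wherever the series converges.
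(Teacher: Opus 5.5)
Your proof is correct and follows essentially the paper's route (Theorem~\ref{theorem_3}, Corollary~\ref{corollary3.2} and Theorem~\ref{theorem_4}). The paper likewise applies Fourier completeness to $\sqrt{p(\cdot,y)}\,f$ for each fixed $y$ and then integrates out $y$ using $\int_\chi p\,d\mu=1$. Where it uses the formal identity $\sum_{\textbf{k}}e^{i\textbf{k}\cdot(\textbf{x}-\textbf{x}')}=(2\pi)^d\delta(\textbf{x}-\textbf{x}')$, you use fibrewise classical Parseval plus your partial-sum and dominated-convergence argument, which makes the interchanges rigorous.

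There is one bookkeeping slip. With your convention $e_{\textbf{k},y}=(2\pi)^{-d/2}\sqrt{p}\,e^{i\textbf{k}\cdot\textbf{x}}$, the adjoint carries a factor $(2\pi)^{-d/2}$ that your displayed formula for $\mathcal A^*c$ omits. So the isometry and the stated inverse cannot both hold as you wrote them. To get both, take $e_{\textbf{k},y}=\sqrt{p}\,e^{i\textbf{k}\cdot\textbf{x}}$ together with the normalized measure $d\textbf{x}/(2\pi)^d$ on the torus.
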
 
This result guarantees that the transformation is information-preserving and norm-stable, ensuring that operator learning in the transformed space remains faithful to the original signal. 
Further, any operator $\mathcal{G}:L^2(E)\rightarrow L^2(E)$ admits the unique representation
$
\mathcal{G}=\mathcal{A}^{-1}\circ \hat{\mathcal{G}}\circ \mathcal{A},
$
where $\hat{\mathcal{G}}$ acts on $L^2(\mathbb{R}^d\times \chi)$.
In practice, the above formulation is discretized on a finite grid and implemented via FFT.
More generally, for any spectral basis $e_{k}(x)$, one may define
$
e_{k,y}(x)=e_{k}(x)\sqrt{p(x,y)},
$
yielding a generalized transformation $\mathcal A_{gen}$. But in practice, only the Fourier basis admits efficient FFT implementation.

\begin{figure}[t!]
\centering
\begin{subfigure}{0.45\textwidth}
	\centering
	\begin{tikzcd}
		L^2(E)\arrow[r,"\mathcal{T}_{FNO}"]\arrow[d,"\mathcal{F}"]&L^2(E)\arrow[d,"\mathcal{F}"]\\
		L^{2*}(E)\arrow[r,"\mathcal{R}"] & L^{2*}(E)
	\end{tikzcd}\caption{Standard Fixed Fourier basis representation}
    \end{subfigure}
    \hfill
    \begin{subfigure}[b]{0.45\textwidth}
    \centering
	\begin{tikzcd}
		L^2(E)\arrow[r,"\mathcal{G}"]\arrow[d,"\mathcal{A}"]&L^2(E)\arrow[d,"\mathcal{A}"]\\
		\text{Im}(\mathcal{A})\subset L^{2}(E\times\chi)\arrow[r,"\hat{G}"] & \text{Im}(\mathcal{A})
	\end{tikzcd}\caption{ABLE: Adaptive basis representation}\label{able basis}
    \end{subfigure}
    \caption{\textbf{Breaking the limitations of fixed spectral representations.}
Spectral operators, including Fourier neural operators, rely on a fixed global basis, restricting them to translation-invariant structure. ABLE learns the representation via a data-dependent lifting, enabling spatially adaptive operators.}
 \label{fig:scheme}
\end{figure}

\subsection{ABLE Neural Operator: Adaptive Basis Operator Learning}
We introduce an \textbf{Adaptive Basis Learning} Neural Operator (ABLE-NO), \textit{a new class of neural operators} defined in a data-dependent spectral representation induced by our adaptive transformation $\mathcal A$. Unlike classical neural operators, where learning is restricted to spectral coefficients in a fixed basis, ABLE-NO jointly learns both the representation and the operator itself, thereby shifting the source of expressivity from the coefficients to the underlying basis.
We adopt the same unitary Fourier convention as in the previous subsection. In this formulation, the lifted operator $\hat{\mathcal G}$ is modeled as a learnable spectral multiplier $R(k,y)$ acting in the adaptive basis, yielding
$
\mathcal{T}_{ABLE}
=
\mathcal A^{-1}\circ R \circ \mathcal A,
$
which maps an input function $f$ to $g=\mathcal{T}_{ABLE}(f)$ via
\begin{equation}
g(x)
=
\int_{\chi}
\int_{\mathbb{R}^d}
e_{k,y}(x)\, R(k,y)\, \langle e_{k,y},f\rangle
\,dk\,\mu(dy).
\end{equation}
Here the adaptive basis functions are given by
$
e_{k,y}(x)=\sqrt{p(x,y)}\,\frac{1}{(2\pi)^{d/2}} e^{ik\cdot x},
$
so that the representation explicitly couples spatial modulation with spectral structure.
Substituting the definition of the coefficients
$
\langle e_{k,y},f\rangle
=
\frac{1}{(2\pi)^{d/2}}
\int_E \sqrt{p(z,y)} e^{-ik\cdot z} f(z)\,dz,
$
into \eqref{ABLE_NO}, we obtain an equivalent kernel representation:
\begin{align} \label{ABLE_NO}
g(x)
&=
\int_{\chi}
\int_{\mathbb{R}^d}
\sqrt{p(x,y)} \frac{1}{(2\pi)^{d/2}} e^{ik\cdot x} R(k,y)
\left(
\frac{1}{(2\pi)^{d/2}}
\int_E \sqrt{p(z,y)} e^{-ik\cdot z} f(z)\,dz
\right)
dk\,\mu(dy) \nonumber\\
&=
\int_E
\left[
\int_{\chi}
\sqrt{p(x,y)}
\left(
\frac{1}{(2\pi)^d}
\int_{\mathbb{R}^d}
e^{ik\cdot(x-z)} R(k,y)\,dk
\right)
\sqrt{p(z,y)}
\mu(dy)
\right]
f(z)\,dz.
\end{align}
Defining the inverse Fourier transform of the multiplier
$
R(x-z;y)
=
\frac{1}{(2\pi)^d}
\int_{\mathbb{R}^d}
e^{ik\cdot(x-z)} R(k,y)\,dk,
$
the induced kernel takes the form
$
K_{ABLE}(x,z)
=
\int_{\chi}
\sqrt{p(x,y)}\,R(x-z;y)\,\sqrt{p(z,y)}\,\mu(dy),
$
and therefore
$
g(x)=\int_E K_{ABLE}(x,z)f(z)\,dz.
$
This formulation reveals a fundamental departure from Fourier neural operators. Even when $p(x,y)$ is independent of the input $f$, the resulting kernel is generally not necessarily translation invariant, since
$
K_{ABLE}(x,z)\neq K(x-z).
$
Consequently, ABLE-NO naturally captures spatially heterogeneous and locally modulated interactions that cannot be represented within the translation-invariant framework of classical spectral operators.
Moreover, the density can be parameterized as an input-dependent function $p_\theta(x,f(x),y)$, in which case the representation itself becomes adaptive to the input. The induced kernel then takes the form
\begin{equation}
K_{ABLE}(x,z;f)
=
\int_{\chi}
\sqrt{p_\theta(x,f(x),y)}\,
R(x-z;y)\,
\sqrt{p_\theta(z,f(z),y)}
\,\mu(dy),
\end{equation}
introducing nonlinearity directly at the level of the basis rather than through successive nonlinear layers.
For efficient implementation, we restrict $\chi=[M]$ to a finite set with counting measure. Then \eqref{ABLE_NO} reduces to
\begin{align}
g(x)
&=
\sum_{m=1}^{M}
\sqrt{p(x,m)}
\mathcal F^{-1}
\Big[
R(k,m)\,
\mathcal F\big(\sqrt{p(\cdot,m)}f(\cdot)\big)
\Big](x) \nonumber\\
&=
\int_E
\left[
\sum_{m=1}^{M}
\sqrt{p(x,m)}\,R(x-z;m)\,\sqrt{p(z,m)}
\right]
f(z)\,dz.
\end{align}

The first line exactly preserves the FFT-based computational structure of FNO, with complexity $O(MN\log N)$, while significantly enlarging the class of representable operators.
\par
Finally, FNO is recovered as a special case. If $\chi=\{a\}$ and $p(x,a)=1$, then $e_{k,a}(x)=\frac{1}{(2\pi)^{d/2}} e^{ik\cdot x}$ and
$
g(x)
=
\int_{\mathbb{R}^d}
e^{ik\cdot x} R(k)\langle e_k,f\rangle\,dk,
$
which coincides with the Fourier neural operator. Thus, ABLE-NO strictly generalizes FNO while introducing a fundamentally more expressive, spatially adaptive operator representation. This distinction is illustrated in Figure~\ref{fig:scheme}, where ABLE replaces the fixed Fourier transform with a data-dependent lifting $\mathcal{A}$ that defines an adaptive spectral representation. This statement can be formalized as follows (with proof in Appendix A theorem~\ref{theorem_5}).\begin{theorem}[Expressivity of ABLE] The ABLE operator defines a strictly larger class of operators than the Fourier Neural Operator.
FNO is just a special case of ABLE: when $p(x,y)$ takes constant, ABLE transformation reduces to Fourier transformation, thus the ABLE operator 
$\mathcal{G}_{ABLE} = \mathcal{A}^{-1} \circ \hat{\mathcal{R}}_{ABLE} \circ \mathcal{A}$ 
reduces to the Fourier Neural Operator
$\mathcal{G}_{FNO} = \mathcal{F}^{-1} \circ \hat{\mathcal{R}} \circ \mathcal{F}$.
Furthermore, the ABLE operator induces kernels not necessarily translation-invariant, therefore obtains strict inclusion.
\end{theorem}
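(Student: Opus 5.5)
The plan is to prove the two inclusions separately: first that every FNO layer is an ABLE layer, then that some ABLE layer is not an FNO layer.

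\emph{Inclusion.} Take $\chi=\{a\}$ with $\mu=\delta_a$, or more generally any probability space with $p(x,y)\equiv 1$; the normalization $\int_\chi p\,d\mu=1$ then holds trivially. The basis functions become $e_{k,y}(x)=(2\pi)^{-d/2}e^{ik\cdot x}$, so $\mathcal A f(k,y)=\mathcal F f(k)$ for every $y$. By the inverse formula in the Isometry theorem, $\mathcal A^{-1}\hat f=\int_\chi \mathcal F^{-1}\hat f(\cdot,y)\,\mu(dy)$. Given an FNO multiplier $R(k)$, set $R(k,y):=R(k)$. Then
\[
\mathcal T_{ABLE}f=\int_\chi \mathcal F^{-1}\bigl(R\,\mathcal F f\bigr)\,\mu(dy)=\mathcal F^{-1}\bigl(R\,\mathcal F f\bigr)=\mathcal T_{FNO}f .
\]
The same identity can be read off the kernel formula: $K_{ABLE}(x,z)=\int_\chi R(x-z;y)\,\mu(dy)=R(x-z)$. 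Composing layers with the same $W_l$ and $\sigma$ shows that every FNO is an ABLE-NO.

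\emph{Strictness.} It suffices to exhibit a single ABLE operator whose kernel is not of the form $K(x-z)$. Any FNO layer commutes with translations, that is, $\mathcal T\tau_h=\tau_h\mathcal T$. So the plan is to build an ABLE operator that fails translation equivariance.

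Take $M=2$ with counting measure, a smooth cutoff $\phi:E\to[0,1]$ that is not constant, and set $p(x,1)=\phi(x)$, $p(x,2)=1-\phi(x)$; these satisfy the normalization. Choose $R(k,1)=1$ and $R(k,2)=0$. The kernel is then $K_{ABLE}(x,z)=\sqrt{\phi(x)}\,\delta(x-z)\sqrt{\phi(z)}$, so $g=\phi f$. Multiplication by a non-constant function is not translation-invariant: applying $\mathcal T$ to $f$ and to $\tau_h f$ gives $\phi\,\tau_h f\neq\tau_h(\phi f)$ whenever $\phi(\cdot+h)\neq\phi$ on the support of $f$.

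If one prefers to avoid a delta kernel, use a smooth band-limited $R(\cdot;1)$ instead. Then check at two specific point pairs, $(x,z)$ and $(x+h,z+h)$, that $\sqrt{\phi(x)\phi(z)}\,R(x-z;1)\neq\sqrt{\phi(x+h)\phi(z+h)}\,R(x-z;1)$, which follows from choosing $\phi$ with $\phi(x)\phi(z)\neq\phi(x+h)\phi(z+h)$ and $R(x-z;1)\neq 0$.

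The main obstacle is making the comparison rigorous at the level of the full architecture rather than a single linear layer. In the periodic, discretized setting the obstacle is also that the pointwise term $W_l$ together with the activation $\sigma$ preserves translation equivariance. My approach is to argue that the FNO class consists of translation-equivariant operators: each of $\mathcal F^{-1}R\mathcal F$, $W_l$ and $\sigma$ commutes with shifts, hence so do their compositions. An ABLE-NO built with the layer above and identity lifting/projection breaks equivariance, so it cannot equal any FNO. On the torus, $\phi$ must be periodic but non-constant, which is still fine. If input-dependent $p_\theta(x,f(x),y)$ is allowed, the argument only becomes easier, but I would rely on the input-independent construction to keep the proof clean.
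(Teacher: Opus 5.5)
Your proposal is correct and follows the same two-step route as the paper. The inclusion uses a trivial $\chi$ (or $p\equiv 1$) with $R(k,y):=R(k)$, and strictness comes from the kernel $K_{ABLE}(x,z)=\sum_m \sqrt{p(x,m)}\,R(x-z;m)\,\sqrt{p(z,m)}$ not being of the form $K(x-z)$. The paper only asserts that this kernel need not be translation-invariant, whereas your explicit witness $g=\phi f$ and the translation-equivariance invariant (which also rules out compensation by the pointwise $W_l$ term and $\sigma$) actually prove the strictness step.
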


\textbf{Adaptive Spectral Decomposition.} We restrict $\chi=[M]$ and obtain the discretized ABLE layer:
\begin{equation}
f_{l+1}(x)
=
\sum_{m=1}^{M}
\sqrt{p_{\theta}(x,f_l,m)}
\mathcal{F}^{-1}
\Big[
R(k,m)\,
\mathcal{F}\big(\sqrt{p_{\theta}(\cdot,f_l,m)}f_l(\cdot)\big)
\Big](x).
\end{equation}
This formulation admits a clear interpretation. The input $f_l$ is first decomposed into $M$ spatially adaptive components through the weights $\sqrt{p_{\theta}(x,f_l,m)}$, which act as a soft partition of unity over the domain. Each component is then processed independently in the spectral domain via $R(k,m)$, before being recombined in physical space.
ABLE can be viewed as lifting the input into multiple adaptive spectral subspaces indexed by $m$, where the interaction between subspaces is controlled by the learned density $p_{\theta}(x,f_l,m)$. This mechanism provides a flexible trade-off between global spectral structure and localized representation. Cross-interactions between components can be introduced via
\begin{equation}
f_{l+1}(x)
=
\sum_{m=1}^{M}
\sqrt{p_{\theta}(x,f_l,m)}
\mathcal{F}^{-1}
\Big[
\sum_{m'=1}^{M}
R(k;m,m')\,
\mathcal{F}\big(\sqrt{p_{\theta}(\cdot,f_l,m')}f_l(\cdot)\big)
\Big](x),
\end{equation}
which allows information exchange between different adaptive components. This increases parameterization to $O(M^2 k_{max} C^2)$ while preserving the same asymptotic computational complexity dominated by FFT.
\subsection{When the Basis Heats Up: Phase Behavior in ABLE}
We parameterize the density $p_\theta(x,m)$ in adaptive learnable basis $e_{k,m}(x)=\sqrt{p(x,m)}e^{ikx}$ with $m\in\chi=[M]$ via a Softmax‑MLP.  
An MLP maps local features $f(x)$ to $M$ real numbers $\epsilon_\theta(x,m)=\mathrm{MLP}_\theta(f(x))_m$, which can be interpreted as energy‑like quantities.  The density is then
\begin{align}
    p_{\theta}(x,m)=\text{Softmax}(\epsilon_{\theta}({x},m))=\frac{\exp( \epsilon_{\theta}({x},m)/T)}{\sum_{m'=1}^{M}\exp(\epsilon_{\theta}({x},m')/T)}\end{align}
where $T>0$ is a temperature parameter (either fixed or learned).
This construction admits a natural analogy with statistical physics.  The spatial grid $E$ can be viewed as a lattice, where each point $x$ has $M$ possible \textit{micro‑states} with energy $\epsilon_\theta(x,m)$.  The distribution $\{p_\theta(x,m)\}_m$ then resembles a local Boltzmann distribution.  In this picture, ABLE learns how to distribute probability mass across these states to construct a spatially adaptive representation, while the spectral multipliers govern the evolution of the field on this heterogeneous lattice. When $M=1$, the model reduces to FNO—a homogeneous lattice with a single state.
The temperature $T$ controls the sharpness of the distribution. In the limit $T\to 0$, the distribution becomes nearly deterministic: each spatial location selects and freezes to a dominant state, enabling the representation of localized or discontinuous features. Conversely, as $T\to\infty$, all states become equally likely, recovering a fully mixed representation identical to multi‑channel spectral operators. Varying $T$ therefore induces a transition between these two regimes. We refer to this as \emph{phase-like behavior}—an emergent property of the adaptive basis rather than a physical phase transition in the thermodynamic sense.
A formal analysis of the temperature‑dependent expressivity is given in Theorem~\ref{thm: Appendix theorem 10} of the appendix A.

\section{Experimental Results}
\textbf{Experimental Setting.} We evaluate ABLE on three PDEs of increasing complexity: 1D Burgers', 2D Darcy flow, and 2D Navier–Stokes (NS)~\citep{li2020fourier}. For Burgers', we consider multiple viscosities (including $\nu=10^{-3}$) to cover regimes from smooth to shock-dominated. Darcy flow follows the standard FNO setting and represents a smooth, diffusion-dominated case. For NS, we use the low-viscosity regime ($\nu=10^{-5}$), characterized by turbulent, multiscale dynamics challenging for fixed-basis methods.  Experiments use an RTX 4090 (1D) and RTX 5090 (2D); training times are comparable only across identical hardware. All models are trained under identical conditions.

\textbf{Baselines.}
We compare against neural operator baselines spanning neural networks (DeepONet~\citep{lu2022comprehensive}, U-Net~\citep{DBLP:journals/corr/RonnebergerFB15}), spectral methods (FNO~\citep{DBLP:journals/corr/abs-2010-08895}, SNO~\citep{Fanaskov2022SpectralNO}, WaveletNO~\citep{Tripura2022WaveletNO}, GF-NO~\citep{QI2024106239}), attention-based latent spectral models (Galerkin Transformer~\citep{Cao2021ChooseAT}, LSM~\citep{Wu2023SolvingHP}, HPM~\citep{Yue2024HolisticPS}), and hybrid architectures (U-FNO~\citep{Wen2021UFNOA}, FreqMoE~\cite{Chen2025FreqMoEDF}, SAOT~\citep{Zhou2025SAOTAE}). To assess generality, we integrate ABLE into existing operators, including U-ABLE (from U-FNO~\citep{Wen2021UFNOA}) for Darcy flow and ABLE-SAOT and ABLE-HPM for Navier–Stokes, consistently improving performance.
ABLE acts as a general spectral refinement with minimal overhead, complementing existing architectures by making the representation adaptive rather than modifying the operator itself. For example, in HPM~\citep{Yue2024HolisticPS}, ABLE replaces the fixed Laplacian basis with a learnable one; details are provided in Appendix~B.3.

\textbf{ABLE parameterization.} We model $p(\mathbf{x},m)$ via an MLP-Softmax. Burgers uses a 4-layer MLP on finite-difference features $[f_i,f_i',f_i'',1]$, while Darcy and Navier–Stokes use a 2-layer MLP; channels share a basis except in Navier–Stokes, where they are independent. Outputs use a temperature-controlled Softmax over $M$.
The spectral kernel is diagonal or includes cross-phase interactions. Complexity remains $O(MN\log N)$ with negligible overhead over FNO, and the ABLE layer directly replaces the Fourier layer (Appendix B.2).

\subsection{Main Results \& Discussion.}
\textbf{Burger's equation} As shown in Tab.~1, ABLE achieves the best accuracy across all viscosity regimes, with the largest gains in the low-viscosity case ($\nu=10^{-3}$), where solutions exhibit sharp shocks and strong high-frequency components. Compared to FNO, ABLE reduces the error from $\sim24\%$, demonstrating its advantage in capturing discontinuities and localized structures. While classical spectral methods (FNO, GF-NO) perform well in smoother regimes, their fixed global bases limit their ability to represent shock-dominated dynamics, whereas ABLE adapts the basis to the data, improving representation across frequencies. Notably, ABLE maintains competitive computational cost while outperforming all baselines, indicating that the gain stems from improved representation rather than increased model complexity.

\begin{table}[t!]
\centering
\caption{\textbf{Results on 1D Burgers equation ($s=256$).} ABLE attains the best accuracy across viscosity regimes, with the largest gains in the challenging low-viscosity case ($\nu=10^{-3}$). Best results (lowest error) are highlighted in green.}
\label{tab: 1D Burgers new}
\begin{tabular}{cccccc}
\toprule
Baseline & Flops  & Relative l2 loss($\nu=10^{-1}$) & $\nu=10^{-2}$ & $\nu=10^{-3}$ \\
\midrule
DeepONet & 17.01M & 2.35e-2 & 8.49e-3 & 2.02e-1  
\\
SNO & 1.48M & 3.48e-2 & 1.96e-2 & 1.96e-1
\\
WNO  & 6.36M  & 1.00e-1 & 5.49e-2 & 1.95e-1
    \\
    Galerkin & 78.78M & 6.71e-4 & 7.10e-4 & 1.26e-2
    \\
    FNO & 6.36M & 6.43e-4 & 5.96e-4 & 6.12e-3
    \\
    GF-NO & 6.36M & {4.80e-4} & {5.62e-4} & {5.07e-3}
    \\
    ABLE(Ours) & 12.17M & \cellcolor[HTML]{D9FFD9}{3.90e-4} & \cellcolor[HTML]{D9FFD9}{3.82e-4}& \cellcolor[HTML]{D9FFD9}{4.65e-3}
    \\
    \bottomrule
    \end{tabular}
    \end{table}

    \begin{table}[t!]
    \centering
    \caption{  \textbf{Results on 2D Darcy flow ($s=141\times141$).} ABLE improves standard spectral operators, and further enhances hybrid architectures (U-FNO $\rightarrow$ U-ABLE), achieving the best overall accuracy. Best results (lowest error) are highlighted in green. }
    \label{tab: 2D Darcy Flow}
    \begin{tabular}{ccccc}
    \toprule
    Baseline & Training velocity(s/epoch) & Flops & Best learning rate & Relative l2 loss \\
    \midrule
    UNet & $1.00\pm 0.14$ & 2.99G & 2e-3 & 0.00852
    \\
    FNO & $1.17\pm 0.03$ & 0.18G & 6e-3 & 0.00626
    \\
    WNO & $4.20\pm0.17$ & 0.58G & 3e-3 & 0.0285
    \\
    GF-NO & $1.53\pm0.43$ & 0.18G & 3.6e-2(1000) & 0.00701 
    \\
    LSM & $4.56\pm0.02$ & 3.76G & 1e-3 & 0.00684
    \\
    AFNO & $6.85\pm 0.02$ & 1.48G & 4e-3 & 0.0336
    \\
    SAOT & $6.51\pm 0.47$ & 0.85G & 3e-3 & 0.00800
    \\
    FreqMoE & $5.48\pm0.07$ & 0.18G & 5e-3 (2stage) & 0.00615
    \\
    ABLE(Ours) & $2.22\pm0.05$ & 0.25G & 5e-3 &  \cellcolor[HTML]{D9FFD9}{0.00552}
    \\ \hline
    U-FNO & $2.61\pm0.03$ & 6.11G & 3e-3 & 0.00445
    \\
    U-ABLE(Ours) & $4.49\pm 0.02$ & 6.19G & 2e-3 &  \cellcolor[HTML]{D9FFD9}{0.00394}
    \\
    \bottomrule
    \end{tabular}
    \end{table}

\textbf{Darcy flow.} As shown in Tab.~\ref{tab: 2D Darcy Flow}, ABLE consistently improves over standard spectral operators, reducing the relative $\ell_2$ error of FNO by $\sim11.8\%$ and outperforming GF-NO and FreqMoE. Despite the smooth, low-frequency nature of Darcy flow, this gain indicates that fixed global bases remain suboptimal even in elliptic regimes, and that adapting the representation improves global approximation accuracy. Importantly, the improvement is not limited to standalone models: replacing the spectral branch in U-FNO yields U-ABLE, which further reduces the error by $\sim11.5\%$, achieving the best overall performance. This demonstrates that ABLE acts as a complementary refinement of the spectral component rather than an alternative architecture, consistently enhancing both classical spectral operators and strong hybrid models.

\textbf{Navier–Stokes.}
As shown in Tab.~\ref{tab: 2D Navier Stokes} and Fig.~\ref{fig:FIG_TEASER}, the low-viscosity regime ($\nu=10^{-5}$) presents a significantly more challenging setting, characterized by long-term prediction, strong nonlinearity, and coupled low- and high-frequency structures. In this regime, ABLE substantially improves over Fourier-based baselines, reducing the error of FNO2D by $\sim20\%$ while maintaining $O(N\log N)$ complexity. The qualitative results further show that ABLE better preserves sharp gradients and coherent structures, reducing the structured errors observed in fixed-basis models.
Compared to HPM, which performs adaptive spectral selection over a fixed Laplacian basis, ABLE achieves competitive or superior accuracy with significantly lower computational cost, benefiting from FFT-compatible structure. Importantly, the improvement extends to more expressive models: integrating ABLE into HPM (ABLE-HPM) further reduces the error from $0.0767$ to $0.0705$ ($\sim8.1\%$), achieving the best overall performance. 
These results highlight that, in highly multiscale and turbulent regimes, the limitation lies not only in operator complexity but in the representation itself. By learning the basis, ABLE provides a systematic improvement that enhances both efficient spectral operators and more expressive latent spectral models.

\begin{table}[t!]
    \centering
    \caption{\textbf{Results on 2D Navier–Stokes ($\nu=10^{-5}$).}
ABLE improves spectral baselines and further enhances hybrid architectures, achieving the best accuracy.
}
    \label{tab: 2D Navier Stokes}
    \begin{threeparttable}
\resizebox{\textwidth}{!}{%
    \begin{tabular}{ccccc}
    \toprule
    Baseline & Training velocity(s/epoch) & Flops(/step) & Best learning rate & Relative l2 loss \\
    \midrule
    WNO & $11.54\pm 0.13$ & 26.51M & 1e-4(800)\tnote{\dagger} & 0.3371
    \\
    GF-NO & $6.93\pm 0.23$ & 18.55M & 2.2e-2(1000)\tnote{\dagger} & 0.1229
    \\
    LSM & $10.41\pm 0.31 $  &  744.58M & 1e-3 & 0.1635
    \\
    FreqMOE & $9.13\pm0.12$ & 18.55M & 3e-3(2-stage)\tnote{\dagger} & 0.1399
    \\
    SAOT  & $30.41\pm 0.19$ & 477.63M & 1e-3 & 0.248
    \\
    FNO2D & $4.74\pm 0.19$ & 32.47M & 3e-3 & 0.1237
    \\
    ABLE(Ours)-SAOT\tnote{1} & $21.78\pm 1.18$ & 785.91M & 1e-3 & 0.1005
    \\
    ABLE 2D(Ours) & $6.08\pm0.19$ & 60.49M & 3e-3 & \cellcolor[HTML]{D9FFD9}{0.0985}
    \\ \hline
    HPM & $109.31\pm 0.06 $& 46806.34M & 1e-3 & 0.0767
    \\
    ABLE(Ours)-HPM\tnote{2} & $194.19\pm 0.21$\tnote{*} & 49104.81M & 1e-3 & \cellcolor[HTML]{D9FFD9}{0.0705}
    \\
    \bottomrule
    \end{tabular}
}
    \begin{tablenotes}
    \parbox{0.95\linewidth}{
    \item[] [1]SAOT: FNO $\rightarrow$ ABLE. [2] HPM: fixed $\rightarrow$ adaptive Laplacian basis.  $^*$Gradient checkpointing used for HPM. $\dagger$ Baselines use optimal hyperparameters.}
    \end{tablenotes}
    \end{threeparttable}
    \end{table}

\begin{figure*}[t!]
  \centering
  \includegraphics[width=1\textwidth]{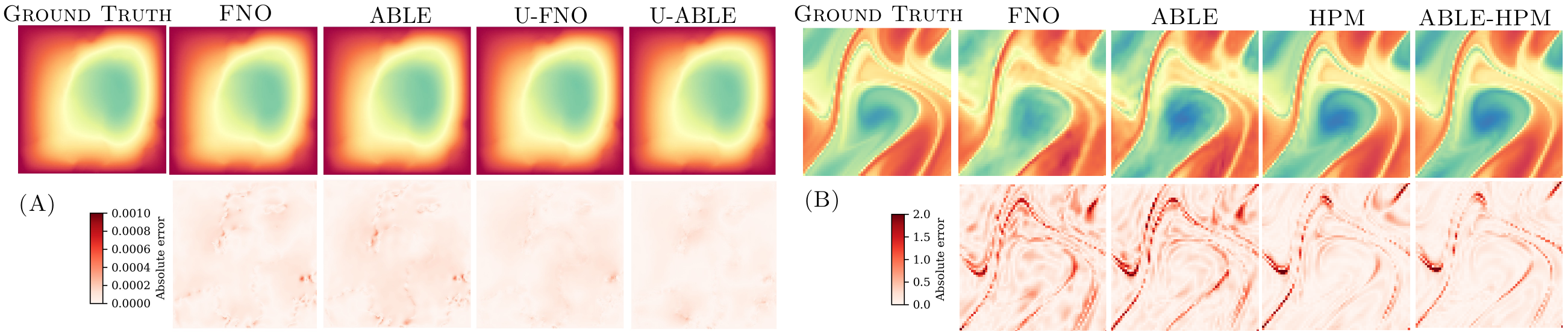}\\
  \caption{
  \textbf{Qualitative comparison.}
(A) Darcy flow: ABLE reduces error and improves U-FNO. 
(B) Navier–Stokes ($\nu=10^{-5}$): ABLE better captures sharp structures; ABLE-HPM achieves the best reconstruction.
  }
  \label{fig:visualisation1}
\end{figure*}

\begin{figure*}[t!]
  \centering
  \includegraphics[width=1\textwidth]{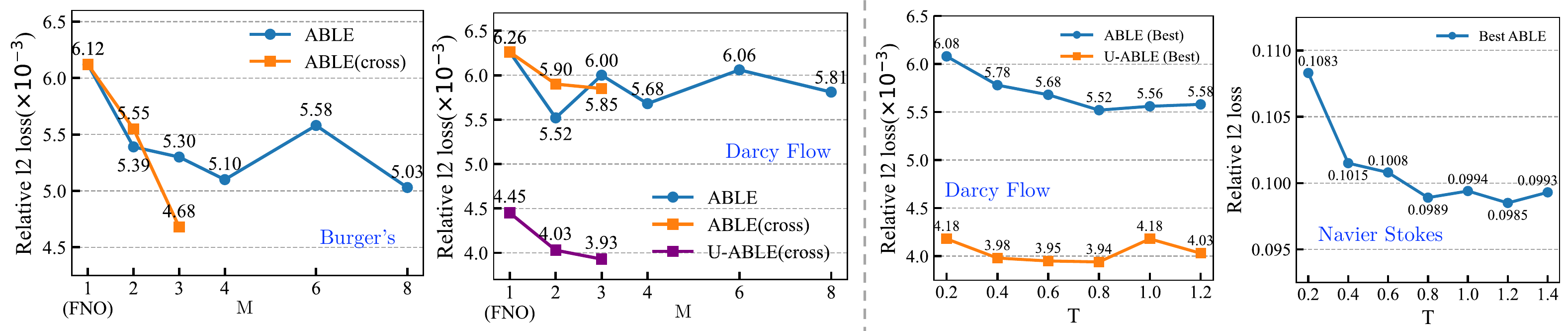}\\
  \caption{
  \textbf{Ablation on basis size $M$ and temperature $T$.}
Performance improves with increasing $M$ up to a moderate value, while optimal accuracy is achieved at intermediate temperatures, highlighting the need for balanced basis capacity and adaptivity.
  }
  \label{fig:ablation}
\end{figure*}

\textbf{Ablation on basis size and temperature.}
Fig.~\ref{fig:ablation} studies the effect of the basis size $M$ and temperature $T$. Increasing $M$ initially improves performance, confirming that a richer adaptive basis enhances expressivity; however, beyond a moderate size the gain saturates or degrades, indicating that excessive basis capacity can lead to overfitting or redundant representations. This behavior is consistent across both Burgers and Darcy, with hybrid models (U-ABLE) benefiting more from larger $M$, suggesting that spatial regularization stabilizes higher-capacity bases.
The temperature $T$ controls the sharpness of the basis allocation. As shown in Darcy and Navier–Stokes, intermediate values of $T$ achieve the best performance, while too small or too large values degrade accuracy. This indicates that neither fully deterministic nor fully uniform basis selection is optimal; instead, a balanced, softly adaptive allocation is required. Overall, these results highlight that ABLE benefits from moderate basis complexity and controlled adaptivity, supporting its role as a flexible yet stable representation mechanism.

\section{Conclusion}
We introduced ABLE, a framework that replaces fixed spectral bases with adaptive, learnable ones for neural operator learning. By shifting expressivity from the operator to the representation, ABLE improves both standard spectral models and hybrid architectures while retaining $O(N\log N)$ efficiency. The learned basis forms a Parseval frame, ensuring stable and information-preserving transformations, and strictly generalizes classical Fourier-based operators. 
Empirically, ABLE consistently improves accuracy across PDE regimes, from smooth elliptic problems to highly multiscale turbulent dynamics, and integrates seamlessly with existing architectures. These results suggest that the choice of representation, rather than operator complexity alone, is a key bottleneck in neural operator design. 
Beyond PDEs, the idea of learning the basis opens a new direction for representation learning in structured domains, where adaptive and data-dependent decompositions may offer significant advantages.
\section*{Acknowledgments}
XZ thanks Qiuzhen College, Tsinghua University, for supporting this research. AIAR gratefully acknowledges the support of the Yau Mathematical Sciences Center, Tsinghua University. This work is also supported by the Tsinghua University Dushi Program, and the  Tsinghua University Initiative Scientific Research Program. We thank Jiayi Li for helpful discussions.

\bibliography{ref.bib}

@article{DBLP:journals/corr/abs-2010-08895,
  author       = {Zongyi Li and
                  Nikola B. Kovachki and
                  Kamyar Azizzadenesheli and
                  Burigede Liu and
                  Kaushik Bhattacharya and
                  Andrew M. Stuart and
                  Anima Anandkumar},
  title        = {Fourier Neural Operator for Parametric Partial Differential Equations},
  journal      = {CoRR},
  volume       = {abs/2010.08895},
  year         = {2020},
  url          = {https://arxiv.org/abs/2010.08895},
  eprinttype   = {arXiv},
  eprint       = {2010.08895},
  bibsource    = {dblp computer science bibliography, https://dblp.org}
}

@article{Guibas:2021,
  author       = {John Guibas and
                  Morteza Mardani and
                  Zongyi Li and
                  Andrew Tao and
                  Anima Anandkumar and
                  Bryan Catanzaro},
  title        = {Adaptive Fourier Neural Operators: Efficient Token Mixers for Transformers},
  journal      = {CoRR},
  volume       = {abs/2111.13587},
  year         = {2021},
  url          = {https://arxiv.org/abs/2111.13587},
  eprinttype   = {arXiv},
  eprint       = {2111.13587},
  bibsource    = {dblp computer science bibliography, https://dblp.org}
}

@article{Wen2021UFNOA,
  title={U-FNO - an enhanced Fourier neural operator based-deep learning model for multiphase flow},
  author={Gege Wen and Zong-Yi Li and Kamyar Azizzadenesheli and Anima Anandkumar and Sally M. Benson},
  journal={ArXiv},
  year={2021},
  volume={abs/2109.03697},
  url={https://api.semanticscholar.org/CorpusID:237442093}
}

@inproceedings{Chen2025FreqMoEDF,
  title={FreqMoE: Dynamic Frequency Enhancement for Neural PDE Solvers},
  author={Tianyu Chen and Haoyi Zhou and Ying Li and Hao Wang and Zhenzhen Zhang and Tianchen Zhu and Shanghang Zhang and Jianxin Li},
  booktitle={International Joint Conference on Artificial Intelligence},
  year={2025},
  url={https://api.semanticscholar.org/CorpusID:278501699}
}

@article{Tripura2022WaveletNO,
  title={Wavelet neural operator: a neural operator for parametric partial differential equations},
  author={Tapas Tripura and Souvik Lal Chakraborty},
  journal={ArXiv},
  year={2022},
  volume={abs/2205.02191},
  url={https://api.semanticscholar.org/CorpusID:248512807}
}

@article{Fanaskov2022SpectralNO,
  title={Spectral Neural Operators},
  author={Vladimir Fanaskov and I. Oseledets},
  journal={Doklady Mathematics},
  year={2022},
  volume={108},
  pages={S226 - S232},
  url={https://api.semanticscholar.org/CorpusID:248986643}
}

@inproceedings{Cao2021ChooseAT,
  title={Choose a Transformer: Fourier or Galerkin},
  author={Shuhao Cao},
  booktitle={Neural Information Processing Systems},
  year={2021},
  url={https://api.semanticscholar.org/CorpusID:235253946}
}

@article{Wu2023SolvingHP,
  title={Solving High-Dimensional PDEs with Latent Spectral Models},
  author={Haixu Wu and Tengge Hu and Huakun Luo and Jianmin Wang and Mingsheng Long},
  journal={ArXiv},
  year={2023},
  volume={abs/2301.12664},
  url={https://api.semanticscholar.org/CorpusID:256389386}
}

@inproceedings{Yue2024HolisticPS,
  title={Holistic Physics Solver: Learning PDEs in a Unified Spectral-Physical Space},
  author={Xihang Yue and Linchao Zhu and Yi Yang},
  booktitle={International Conference on Machine Learning},
  year={2024},
  url={https://api.semanticscholar.org/CorpusID:273350739}
}

@article{QI2024106239,
title = {Gabor-Filtered Fourier Neural Operator for solving Partial Differential Equations},
journal = {Computers \& Fluids},
volume = {274},
pages = {106239},
year = {2024},
issn = {0045-7930},
doi = {https://doi.org/10.1016/j.compfluid.2024.106239},
url = {https://www.sciencedirect.com/science/article/pii/S0045793024000719},
author = {Kai Qi and Jian Sun}
}

@article{Li2022FourierNO,
  title={Fourier Neural Operator with Learned Deformations for PDEs on General Geometries},
  author={Zong-Yi Li and Daniel Zhengyu Huang and Burigede Liu and Anima Anandkumar},
  journal={J. Mach. Learn. Res.},
  year={2022},
  volume={24},
  pages={388:1-388:26},
  url={https://api.semanticscholar.org/CorpusID:250450893}
}

@article{Wang2025LaplacianEN,
  title={Laplacian eigenfunction-based neural operator for learning nonlinear reaction-diffusion dynamics},
  author={Jindong Wang and Wenrui Hao},
  journal={Journal of computational physics},
  year={2025},
  volume={543},
  url={https://api.semanticscholar.org/CorpusID:276249588}
}

@inproceedings{Vaswani2017AttentionIA,
  title={Attention is All you Need},
  author={Ashish Vaswani and Noam Shazeer and Niki Parmar and Jakob Uszkoreit and Llion Jones and Aidan N. Gomez and Lukasz Kaiser and Illia Polosukhin},
  booktitle={Neural Information Processing Systems},
  year={2017},
  url={https://api.semanticscholar.org/CorpusID:13756489}
}

@article{lu2022comprehensive,
  title   = {A comprehensive and fair comparison of two neural operators (with practical extensions) based on FAIR data},
  author  = {Lu, Lu and Meng, Xuhui and Cai, Shengze and Mao, Zhiping and Goswami, Somdatta and Zhang, Zhongqiang and Karniadakis, George Em},
  journal = {Computer Methods in Applied Mechanics and Engineering},
  volume  = {393},
  pages   = {114778},
  year    = {2022}
}

@article{DBLP:journals/corr/RonnebergerFB15,
  author       = {Olaf Ronneberger and
                  Philipp Fischer and
                  Thomas Brox},
  title        = {U-Net: Convolutional Networks for Biomedical Image Segmentation},
  journal      = {CoRR},
  volume       = {abs/1505.04597},
  year         = {2015},
  url          = {http://arxiv.org/abs/1505.04597},
  eprinttype   = {arXiv},
  eprint       = {1505.04597},
  bibsource    = {dblp computer science bibliography, https://dblp.org}
}

@book{evans2022partial,
  title={Partial differential equations},
  author={Evans, Lawrence C},
  volume={19},
  year={2022},
  publisher={American mathematical society}
}

@book{leveque2002finite,
  title={Finite volume methods for hyperbolic problems},
  author={LeVeque, Randall J},
  volume={31},
  year={2002},
  publisher={Cambridge university press}
}

@book{quarteroni1994numerical,
  title={Numerical approximation of partial differential equations},
  author={Quarteroni, Alfio and Valli, Alberto},
  year={1994},
  publisher={Springer}
}

@article{li2020fourier,
  title={Fourier neural operator for parametric partial differential equations},
  author={Li, Zongyi and Kovachki, Nikola and Azizzadenesheli, Kamyar and Liu, Burigede and Bhattacharya, Kaushik and Stuart, Andrew and Anandkumar, Anima},
  journal={arXiv preprint arXiv:2010.08895},
  year={2020}
}

@inproceedings{bryutkin2024hamlet,
  title={HAMLET: Graph Transformer Neural Operator for Partial Differential Equations},
  author={Bryutkin, Andrey and Huang, Jiahao and Deng, Zhongying and Yang, Guang and Sch{\"o}nlieb, Carola-Bibiane and Aviles-Rivero, Angelica I},
  booktitle={International Conference on Machine Learning},
  pages={4624--4641},
  year={2024},
  organization={PMLR}
}

@article{wang2025fourier,
  title={A Fourier neural operator approach for modelling exciton-polariton condensate systems},
  author={Wang, Yuan and Sathujoda, Surya T and Sawicki, Krzysztof and Gandhi, Kanishk and Aviles-Rivero, Angelica I and Lagoudakis, Pavlos G},
  journal={Communications Physics},
  year={2025},
  publisher={Nature Publishing Group UK London}
}

@article{lu2019deeponet,
  title={Deeponet: Learning nonlinear operators for identifying differential equations based on the universal approximation theorem of operators},
  author={Lu, Lu and Jin, Pengzhan and Karniadakis, George Em},
  journal={arXiv preprint arXiv:1910.03193},
  year={2019}
}

@article{wu2024transolver,
  title={Transolver: A fast transformer solver for pdes on general geometries},
  author={Wu, Haixu and Luo, Huakun and Wang, Haowen and Wang, Jianmin and Long, Mingsheng},
  journal={arXiv preprint arXiv:2402.02366},
  year={2024}
}

@article{cheng2025mamba,
  title={Mamba neural operator: Who wins? transformers vs. state-space models for pdes},
  author={Cheng, Chun-Wun and Huang, Jiahao and Zhang, Yi and Yang, Guang and Sch{\"o}nlieb, Carola-Bibiane and Aviles-Rivero, Angelica I},
  journal={Journal of Computational Physics},
  pages={114567},
  year={2025},
  publisher={Elsevier}
}

@article{pathak2022fourcastnet,
  title={Fourcastnet: A global data-driven high-resolution weather model using adaptive fourier neural operators},
  author={Pathak, Jaideep and Subramanian, Shashank and Harrington, Peter and Raja, Sanjeev and Chattopadhyay, Ashesh and Mardani, Morteza and Kurth, Thorsten and Hall, David and Li, Zongyi and Azizzadenesheli, Kamyar and others},
  journal={arXiv preprint arXiv:2202.11214},
  year={2022}
}

@article{guibas2021adaptive,
  title={Adaptive fourier neural operators: Efficient token mixers for transformers},
  author={Guibas, John and Mardani, Morteza and Li, Zongyi and Tao, Andrew and Anandkumar, Anima and Catanzaro, Bryan},
  journal={arXiv preprint arXiv:2111.13587},
  year={2021}
}

@inproceedings{Zhou2025SAOTAE,
  title={SAOT: An Enhanced Locality-Aware Spectral Transformer for Solving PDEs},
  author={Chenhong Zhou and Jie Chen and Zaifeng Yang},
  booktitle={AAAI Conference on Artificial Intelligence},
  year={2025},
  url={https://api.semanticscholar.org/CorpusID:283243811}
}
\bibliographystyle{plain}
%%%%%%%%%%%%%%%%%%%%%%%%%%%%%%%%%%%%%%%%%%%%%%%%%%%%%%%%%%%%

\newpage
\appendix

% \counterwithin{theorem}{section}
% \counterwithin{definition}{section}
% \counterwithin{lemma}{section}
% \counterwithin{remark}{section}

% Show appendix entries only
\addtocontents{toc}{\protect\setcounter{tocdepth}{2}}

\textbf{\Large Don't Fix the Basis -- Learn It: Spectral Representation with Adaptive Basis Learning for PDEs (Appendices)}
\label{sec:appendix}

\noindent\rule{\textwidth}{0.6pt}
\tableofcontents
\noindent\rule{\textwidth}{0.6pt}

\section{Theorem and proof}
\subsection*{Notations}
We refer to some special notations:
\begin{enumerate}
	\item Discrete $M$-point set: $[M]=\{1,2,\cdots, M\}$
	\item Discrete grid: the whole space $E\subset \mathbb{R}^d$ could always be taken as discrete grids dependent on the resolution. For each axis $i\in\{1,2\cdots ,d\}$, there's $N_i$ points. Thus the whole space becomes
	$E=\times_{i=1}^{d}[N_i]=\{(a_1,\cdots a_d)|a_i\in [N_i]\}$
	\item 
	Continuous periodic space: $\mathbb{T}^d=[0,1]^d/\sim$, meaning that this is a unit $d$-cube with periodic boundary conditions(i.e. boundaries glued together).
	\item Different spatial domain $E$ gives Fourier transformation and inverse transformation with difference in the appearance, but equivalent in arithmetic. For $ E$ takes Grid $[N]^d$, Periodic space $\mathbb{T}^d$ or Whole space $\mathbb{R}^d $. The spectral spaces would become $\textbf{k}\in\text{grid} [N]^d,\,\textbf{k}\in \mathbb{Z}^d
	,\,\textbf{k}\in \mathbb{R}^d$ respectively. And the equivalent arithmetic of Fourier and inverse Fourier transformation is 
	\begin{align}
		\mathcal{F}(f)& \Leftrightarrow \frac{1}{\sqrt{N_1\cdots N_d}}\sum_{n_1,\cdots,n_d=1}^{N_1,\cdots N_d} \exp(-i\sum_{j=1}^{d}k_{j}n_j)f(n_1,\cdots ,n_d)\nonumber\\& \Leftrightarrow \frac{1}{(2\pi)^{\frac{d}{2}}}\int_{\mathbb{T}^d} d\textbf{x}\exp(-i\textbf{k}\cdot\textbf{x})f(\textbf{x})\nonumber\Leftrightarrow\frac{1}{(2\pi)^{\frac{d}{2}}}\int_{\mathbb{R}^d} d\textbf{x}\exp(-i\textbf{k}\cdot\textbf{x})f(\textbf{x})\nonumber\\
		\mathcal{F}^{-1}(\hat{f})& \Leftrightarrow \frac{1}{\sqrt{N_1\cdots N_d}}\sum_{n_1,\cdots,n_d=1}^{N_1,\cdots N_d} \exp(i\sum_{j=1}^{d}k_{j}n_j)\hat{f}_{k_1,\cdots ,k_d}\nonumber\\& \Leftrightarrow \frac{1}{(2\pi)^{\frac{d}{2}}}\sum_{\textbf{k}\in\mathbb{Z}^d}\exp(i\textbf{k}\cdot\textbf{x})\hat{f}_{\textbf{k}}\nonumber\Leftrightarrow\frac{1}{(2\pi)^{\frac{d}{2}}}\int_{\mathbb{R}^d} d\textbf{x}\exp(i\textbf{k}\cdot\textbf{x})\hat{f}_{\textbf{k}}\nonumber
	\end{align}
\end{enumerate}
In the main paper, we use $E\subset\mathbb{R}^d$ to show a general arithmetic. While describing the basis properties requires countable property, so we fix $E=\mathbb{T}^d$ as bounded region with periodic boundary conditions, so that the corresponding spectral indices $\textbf{k}\in\mathbb{Z}^d$ which is countable, to phrase all the theorems. In implementation, since we are dealing with finite resolution problems, it directly satisfy $E=\text{Grid} [N]^d$ for implementation. 
\subsection{Fourier basis}
\begin{theorem}\label{theorem_1}
	Fourier series $\mathcal{B}=\{e^{i\textbf{k}\cdot \textbf{x}}\}_{\textbf{k}\in \mathbb{Z}^d}$ is an orthonormal basis of function space $L^2(\mathbb{T}^d,\mathbb{C})=\{f:\mathbb{T}^d\rightarrow \mathbb{C}|\int_{\mathbb{T}^d}|f(\textbf{x})|^2d\textbf{x}<\infty\}$, that satisfies the following properties:
	\begin{enumerate}
		\item Linear independence: $\sum_{i=1}^{n}c_ie_i(\textbf{x})=0\quad a.e.\quad\textbf{x}\in \mathbb{T}^d \Rightarrow c_i=0$   for \newline  $\forall \{e_1,\cdots, e_n\}\subset \mathcal{B};c_1,\cdots,c_n\in \mathbb{C}$
		\item\label{completeness}  Completeness:\newline $\forall f\in L^2(\mathbb{T}^d,\mathbb{C})$, $\Bigl[\forall\textbf{k}\in \mathbb{Z}^d,\langle f,e_{\textbf{k}}\rangle=\int_{\mathbb{T}^d}f(\textbf{x})e^{-i\textbf{k}\textbf{x}}d\textbf{x}=0 \Bigl]\Rightarrow f=0\quad a.e. \textbf{x}\in \mathbb{T}^d$
		\item Orthogonality and Normality: \begin{equation}\langle e_{\textbf{k}},e_{\textbf{k}'}\rangle=\langle e^{i\textbf{x}\cdot \textbf{k}},e^{i\textbf{x}\cdot \textbf{k}'}\rangle=\frac{1}{(2\pi)^d}\int_{\mathbb{T}^d} e^{i\textbf{k}\textbf{x}}e^{-i\textbf{k}'\textbf{x}}d\textbf{x}=\delta_{\textbf{k},\textbf{k}'} \label{eq:f-orthn}\end{equation}
	\end{enumerate}
\end{theorem}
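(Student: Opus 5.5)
We work with the normalized inner product $\langle f,g\rangle=(2\pi)^{-d}\int_{\mathbb{T}^d} f\,\overline{g}\,d\textbf{x}$ appearing in \eqref{eq:f-orthn}. For this normalization to be consistent with integer frequencies $\textbf{k}\in\mathbb{Z}^d$, we identify $\mathbb{T}^d$ with $[0,2\pi]^d$ with opposite faces glued. Equivalently, on the unit cube one would rescale to $e^{2\pi i\textbf{k}\cdot\textbf{x}}$; nothing below depends on which description is used. The three properties are proved in the order orthonormality, then linear independence, then completeness, since each one feeds the next.

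\textbf{Orthonormality.} The integrand factorizes as $e^{i(\textbf{k}-\textbf{k}')\cdot\textbf{x}}=\prod_{j=1}^d e^{i(k_j-k_j')x_j}$. By Fubini, the $d$-dimensional integral is a product of one-dimensional integrals $\frac{1}{2\pi}\int_0^{2\pi}e^{inx}\,dx$ with $n=k_j-k_j'\in\mathbb{Z}$. Each such integral equals $1$ if $n=0$. If $n\neq 0$, it equals $\frac{1}{2\pi}\bigl[\frac{e^{inx}}{in}\bigr]_0^{2\pi}=0$ by periodicity. The product is therefore $\delta_{\textbf{k},\textbf{k}'}$.

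\textbf{Linear independence.} Suppose $\sum_{i=1}^n c_i e_{\textbf{k}_i}=0$ a.e. with distinct frequencies $\textbf{k}_i$. Take the inner product of this relation with $e_{\textbf{k}_j}$. By orthonormality every term except the $j$-th vanishes, leaving $c_j=0$ for each $j$.

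\textbf{Completeness.} This is the main step. Suppose $\langle f,e_{\textbf{k}}\rangle=0$ for all $\textbf{k}$. By linearity, $f$ is then orthogonal to the space $\mathcal{P}$ of all trigonometric polynomials. The plan is to show that $\mathcal{P}$ is dense in $L^2(\mathbb{T}^d)$.
\begin{enumerate}
\item Show $\mathcal{P}$ is uniformly dense in $C(\mathbb{T}^d)$ via the complex Stone--Weierstrass theorem on the compact space $\mathbb{T}^d$. The hypotheses hold: $\mathcal{P}$ is an algebra, since $e_{\textbf{k}}e_{\textbf{k}'}=e_{\textbf{k}+\textbf{k}'}$. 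It contains the constants ($e_{\textbf{0}}=1$). It is closed under conjugation, since $\overline{e_{\textbf{k}}}=e_{-\textbf{k}}$. It separates points, because the coordinate characters $e^{ix_j}$ already distinguish any two distinct points of $\mathbb{T}^d$.
\item Recall that $C(\mathbb{T}^d)$ is dense in $L^2(\mathbb{T}^d)$, a standard fact from Lusin's theorem or regularity of Lebesgue measure.
\item Note that on a space of finite measure, $\|g\|_{L^2}\le (2\pi)^{d/2}\|g\|_\infty$. Uniform approximation therefore implies $L^2$ approximation.
\end{enumerate}
Combining these, $f\perp\overline{\mathcal{P}}=L^2(\mathbb{T}^d)$, so $\langle f,f\rangle=0$ and $f=0$ a.e.

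If a constructive argument is preferred to Stone--Weierstrass, the alternative is the $d$-fold product Fejér kernel $F_N(\textbf{x})=\prod_j F_N(x_j)$. It is a positive approximate identity, so $F_N*f\to f$ in $L^2$. Since $F_N*f$ is built from the Fourier coefficients of $f$, it is identically zero under the hypothesis, which again forces $f=0$ a.e.

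The hardest part is completeness, in particular justifying the passage from uniform density in $C(\mathbb{T}^d)$ to density in $L^2(\mathbb{T}^d)$.
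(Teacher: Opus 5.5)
The paper gives no proof of this theorem. It quotes it as the classical fact and uses its items directly in the proof of the Schauder corollary that follows: orthonormality gives Bessel's inequality and the Cauchy property of the partial sums, and completeness identifies their $L^2$ limit with $f$. So there is no competing argument to compare with. Your proof, which is the standard one, is correct.

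Your renormalization of the torus is necessary rather than cosmetic. Under the paper's own convention $\mathbb{T}^d=[0,1]^d/\!\sim$, three things go wrong:
\begin{itemize}
\item the functions $e^{i\mathbf{k}\cdot\mathbf{x}}$ with $\mathbf{k}\neq\mathbf{0}$ are not periodic;
\item $(2\pi)^{-d}\int_{[0,1]^d}|e^{i\mathbf{k}\cdot\mathbf{x}}|^2\,d\mathbf{x}=(2\pi)^{-d}$ rather than $1$;
\item $\int_0^1 e^{inx}\,dx\neq 0$ for $n\in\mathbb{Z}\setminus\{0\}$.
\end{itemize}
So item 3 fails as literally written. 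One small correction: if you stay on the unit cube with $e^{2\pi i\mathbf{k}\cdot\mathbf{x}}$, you must also replace the prefactor $(2\pi)^{-d}$ by $1$. Your two descriptions agree only after that adjustment, so ``nothing below depends on which description is used'' is slightly too quick.

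The remaining steps are sound, and you check the Stone--Weierstrass hypotheses correctly. The Fej\'er alternative is also valid and more self-contained. It needs only that a positive approximate identity converges in $L^2$ (Minkowski's integral inequality plus continuity of translation), and it produces explicit trigonometric approximants.

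Your closing remark misplaces the difficulty. Passing from uniform to $L^2$ approximation is the trivial step on a finite measure space. The substantive inputs are Stone--Weierstrass itself and the density of $C(\mathbb{T}^d)$ in $L^2(\mathbb{T}^d)$, for instance via $C_c((0,2\pi)^d)$ extended periodically.
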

\begin{corollary}[Schauder]\label{Corollary1.1} Fourier basis is a set of \textbf{Schauder basis} i.e.
	$\forall f\in L^2(\mathbb{T}^d,\mathbb{C})$, $\exists$  \text{unique} $c_\textbf{k}=\hat{f}_\textbf{k}\in \mathbb{C}$ s.t. 
	$f(\textbf{x})=\sum_{\textbf{k}\in\mathbb{Z}^d} \hat{f}_{\textbf{k}} e^{i\textbf{k}\cdot\textbf{x}}\label{eq:f-complete} $
	where parameters $\hat{f}_{\textbf{n}}=\langle f,e^{-i\textbf{n}\cdot\textbf{x}}\rangle=\frac{1}{(2\pi)^d}\int f(\textbf{x})e^{-i\textbf{n}\textbf{x}}d\textbf{x}$, and this gives the Fourier transformation $\mathcal{F}:L^2(\mathbb{T}^d,\mathbb{C})\rightarrow l^2(\mathbb{Z}^d) , f(\textbf{x})\mapsto \hat{f}_{\textbf{n}} $
\end{corollary}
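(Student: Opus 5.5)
The statement to prove is Corollary~\ref{Corollary1.1}: every $f\in L^2(\mathbb{T}^d,\mathbb{C})$ has a unique expansion $f=\sum_{\textbf{k}}\hat f_{\textbf{k}}e^{i\textbf{k}\cdot\textbf{x}}$, with coefficients given by inner products. I would derive it directly from Theorem~\ref{theorem_1}, using only standard Hilbert space facts. Throughout I use the inner product normalized as in \eqref{eq:f-orthn}, so that $\{e_{\textbf{k}}\}$ is orthonormal.

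\textbf{Step 1: convergence.} Fix $f$ and set $\hat f_{\textbf{k}}=\langle f,e_{\textbf{k}}\rangle$.
\begin{itemize}
\item For any finite set $F\subset\mathbb{Z}^d$, put $S_F=\sum_{\textbf{k}\in F}\hat f_{\textbf{k}}e_{\textbf{k}}$.
\item Orthonormality gives $\langle f-S_F,e_{\textbf{k}}\rangle=0$ for every $\textbf{k}\in F$.
\item Hence $\|f\|^2=\|f-S_F\|^2+\sum_{F}|\hat f_{\textbf{k}}|^2$, which is Bessel's inequality $\sum_{\textbf{k}}|\hat f_{\textbf{k}}|^2\le\|f\|^2$.
\item By orthonormality again, $\|S_F-S_{F'}\|^2=\sum_{F\triangle F'}|\hat f_{\textbf{k}}|^2$.
\end{itemize}
So along any exhausting sequence of finite sets the partial sums are Cauchy. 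By completeness of $L^2$ they converge unconditionally to some $g\in L^2(\mathbb{T}^d)$.

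\textbf{Step 2: the limit is $f$.} Continuity of the inner product gives $\langle g,e_{\textbf{k}}\rangle=\hat f_{\textbf{k}}$. Therefore $\langle f-g,e_{\textbf{k}}\rangle=0$ for all $\textbf{k}$. Item~\ref{completeness} of Theorem~\ref{theorem_1} then yields $f=g$ a.e., so the expansion holds in $L^2$.

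\textbf{Step 3: uniqueness.} Suppose $f=\sum c_{\textbf{k}}e_{\textbf{k}}$ with convergence in $L^2$. Pairing with $e_{\textbf{n}}$ and passing the inner product through the limit, orthonormality gives $c_{\textbf{n}}=\langle f,e_{\textbf{n}}\rangle=\hat f_{\textbf{n}}$. Finally, Bessel's inequality shows $\mathcal{F}$ maps into $l^2(\mathbb{Z}^d)$.

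\textbf{Main obstacle.} The argument itself is routine; the difficulty is bookkeeping with the $2\pi$ normalization. The paper writes $\mathbb{T}^d=[0,1]^d$ but uses exponentials $e^{i\textbf{k}\cdot\textbf{x}}$ with a $(2\pi)^{-d}$ prefactor. I would fix the convention in which \eqref{eq:f-orthn} holds exactly, i.e.\ period $2\pi$ with the normalized inner product. I would then note that the coefficient formula $\hat f_{\textbf{n}}=(2\pi)^{-d}\int f e^{-i\textbf{n}\cdot\textbf{x}}\,d\textbf{x}$ is exactly $\langle f,e_{\textbf{n}}\rangle$ under that inner product.
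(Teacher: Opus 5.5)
Your proposal is correct and follows essentially the same route as the paper: Bessel's inequality from orthonormality, Cauchy partial sums converging in the complete space $L^2(\mathbb{T}^d)$, and the completeness item of Theorem~\ref{theorem_1} to identify the limit with $f$. Your Step~3 (uniqueness of the coefficients by pairing with $e_{\textbf{n}}$), the unconditional-convergence bookkeeping over finite sets, and your fix of the $2\pi$-periodic normalization are small but real additions; in particular, the paper asserts uniqueness without actually proving it.
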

\begin{proof}
	First, by orthonormal property, we have Bessel's inequality $\sum_{\textbf{k}\in\mathbb{Z}^d}|\langle f,e_{\textbf{k}}\rangle|^2\le \|f\|^2_{L^2(\mathbb{T}^d)}$.(Define $S_N=\sum_{n=1}^{N}\langle f, e_n\rangle e_n$ where $\{e_n\}_{n=1}^{N}\subset \mathcal{B}$, we obtain $0\le\|f-S_N\|^2=\|f\|^2-\sum_{n=1}^{N}|\langle f, e_n\rangle|^2$. Since $\mathcal{B}$ is countable, thus by taking $N\rightarrow \infty$,$\sum_{\textbf{k}\in\mathbb{Z}^d}|\langle f,e_{\textbf{k}}\rangle|^2\le \|f\|^2_{L^2(\mathbb{T}^d)}$.) The partial sum sequence $S_N = \sum_{n=1}^{N}\langle f, e_n\rangle e_n $ is a Cauchy sequence by $\|S_N-S_M\|^2=\sum _{n=N+1}^{M}|\langle f, e_n\rangle|^2\rightarrow 0$ caused by the boundedness from Bessel's inequality. Then, since $L^2(\mathbb{T}^d)$ is complete, $S_n\xrightarrow{L^2}g=\sum_{\textbf{k}\in\mathbb{Z}^d}\langle f,e_{\textbf{k}}\rangle e_{\textbf{k}}$. By orthonormal property again $\langle f-g,e_n\rangle=0, \forall e_n\in \mathcal{B} $, then by the complete property of Fourier series, $f=g$, therefore taking $c_\textbf{k}=\langle f,e_\textbf{k}\rangle$. 
\end{proof}

\iffalse
\begin{corollary} Parameters $\hat{f}_{\textbf{n}}=\langle f,e^{-i\textbf{n}\cdot\textbf{x}}\rangle=\frac{1}{(2\pi)^d}\int f(\textbf{x})e^{-i\textbf{n}\textbf{x}}d\textbf{x}$, and this gives the Fourier transformation $\mathcal{F}:L^2(\mathbb{T}^d,\mathbb{C})\rightarrow l^2(\mathbb{Z}^d) , f(\textbf{x})\mapsto \hat{f}_{\textbf{n}} $.
	\begin{proof}
		By acting $\langle \cdot, e^{-i\textbf{k}\cdot\textbf{x}}\rangle $ on (\ref{eq:f-complete}), then using (\ref{eq:f-orthn}), \begin{equation}\langle f(\textbf{x}),e^{-i\textbf{k}\cdot\textbf{x}}\rangle=\langle \sum_{\textbf{k}'\in \mathbb{Z}^d}f_{\textbf{k}'}e^{i\textbf{k}'\cdot\textbf{x}} ,e^{-i\textbf{k}\cdot\textbf{x}}\rangle=\sum_{\textbf{k}'\in \mathbb{Z}^d}f_{\textbf{k}'}\langle e^{i\textbf{k}'\cdot\textbf{x}} ,e^{-i\textbf{k}\cdot\textbf{x}}\rangle=\sum_{\textbf{k}'\in \mathbb{Z}^d}f_{\textbf{k}'}\delta_{\textbf{k}'\textbf{k}}=\hat{f}_{\textbf{k}}\end{equation}
	\end{proof}
\end{corollary}
\fi
\begin{remark} For discrete case, $\textbf{x}\in\mathbb{T}^d$ becomes discrete grid $\textbf{n}\in\times_{i=1}^{d}[N_i]$, all the above properties are still satisfied, just completeness (\ref{eq:f-complete}) becomes $f(\textbf{n})=\sum_{\textbf{k}\in\mathbb{Z}^d} \hat{f}_{\textbf{k}} e^{i\textbf{k}\cdot\textbf{n}}$ with $i\textbf{k}\cdot \textbf{n}:=i 2\pi\sum_{i=1}^{N_i}\frac{k_in_i}{N_i}$, and (\ref{eq:f-orthn}) becomes
	$\langle e_{\textbf{k}},e_{\textbf{k}'}\rangle=\langle e^{i\textbf{x}\cdot \textbf{k}},e^{i\textbf{x}\cdot \textbf{k}'}\rangle=\frac{1}{\Pi_{i=1}^{d}N_i}\sum_{n_1,\cdots n_d=1}^{N_1,\cdots N_d} e^{i\textbf{k}\textbf{n}}e^{-i\textbf{k}'\textbf{n}}=\frac{1}{\Pi_{i=1}^{d}N_i}\sum_{n_1,\cdots n_d=1}^{N_1,\cdots N_d} e^{i\textbf{k}\textbf{n}}e^{-i\textbf{k}'\textbf{n}}=\delta_{\textbf{k},\textbf{k}'} $.
\end{remark}

\begin{theorem}[Plancherel Theorem]
	Fourier transformation $\mathcal{F}:L^2(\mathbb{T}^d,\mathbb{C})\rightarrow l^2(\mathbb{Z}^d) , f(\textbf{x})\mapsto \hat{f}_{\textbf{n}} $ is a linear isometry i.e. satisfies:
	\begin{enumerate}
		\item Linearity: $\mathcal{F}(af+bg)=a\mathcal{F}(f)+b\mathcal{F}(g)$
		\item Bijection: $\forall \hat{f}=(\{\hat{f}_{\textbf{k}}\}_{\textbf{k}\in \mathbb{Z}^d})\in l^2(\mathbb{Z}^d)$, $\exists$ unique $f\in L^2(\mathbb{T}^d,\mathbb{C})$ s.t.$\mathcal{F}(f)=\hat{f}$
		\item Parseval's identity/theorem: $\|f\|_{L^2(\mathbb{T}^d)}=\|\hat{f}\|_{l^2(\mathbb{Z}^d)}$ i.e. $\frac{1}{(2\pi)^d}\int_{\mathbb{T}^d}|f(\textbf{x})|^2d\textbf{x}=\sum_{\textbf{k}\in\mathbb{Z}^d}|\hat{f}(\textbf{k})|^2$
	\end{enumerate}\label{thm:plancherel}
	\begin{proof} The linearity is easy to check. For bijection, construct $ \tilde{f}(\textbf{x})=\sum_{\textbf{k}\in\mathbb{Z}^d}\hat{f}_{\textbf{k}}e^{i\textbf{k}\textbf{x}}$, then $\langle \tilde{f}(\textbf{x}),e^{-i\textbf{k}\textbf{x}}\rangle=\langle\sum_{\textbf{k}'\in\mathbb{Z}^d}\hat{f}_{\textbf{k}'}e^{i\textbf{k}'\textbf{x}},e^{-i\textbf{k}\textbf{x}}\rangle =\sum_{\textbf{k}'\in\mathbb{Z}^d}\hat{f}_{\textbf{k}'}\langle e^{i\textbf{k}'\textbf{x}},e^{-i\textbf{k}\textbf{x}}\rangle=\sum_{\textbf{k}'\in\mathbb{Z}^d}\hat{f}_{\textbf{k}'}\delta_{\textbf{k}'\textbf{k}} =\hat{f}_{\textbf{k}}$.By the Schauder property (\ref{Corollary1.1}), $f=\tilde{f}$ is unique. \\For Parseval's theorem, $ \frac{1}{(2\pi)^d}\int_{\mathbb{T}^d}|f(\textbf{x})|^2d\textbf{x}= \frac{1}{(2\pi)^d}\int_{\mathbb{T}^d}\sum_{\textbf{k}\in \mathbb{Z}^d,\textbf{k}'\in \mathbb{Z}^d}\hat{f}_{\textbf{k}}^*\hat{f}_{\textbf{k}'}e^{i\textbf{k}\textbf{x}-i\textbf{k}'\textbf{x}}d\textbf{x}=\sum_{\textbf{k}\in \mathbb{Z}^d,\textbf{k}'\in \mathbb{Z}^d}\hat{f}_{\textbf{k}}^*\hat{f}_{\textbf{k}'}[\frac{1}{(2\pi)^d}\int_{\mathbb{T}^d}e^{i\textbf{k}\textbf{x}-i\textbf{k}'\textbf{x}}d\textbf{x}]=\sum_{\textbf{k}\in \mathbb{Z}^d,\textbf{k}'\in \mathbb{Z}^d}\hat{f}^*_{\textbf{k}}\hat{f}_{\textbf{k}'}\delta_{\textbf{k}\textbf{k}'}=\sum_{k\in \mathbb{Z}^d}|\hat{f}_{\textbf{k}}|^2$
	\end{proof}
\end{theorem}
\begin{proposition}[Fourier basis commutative diagram]
	For any operator $\mathcal{G}:L^2(\mathbb{T}^d,\mathbb{C})\rightarrow L^2(\mathbb{T}^d,\mathbb{C})$, there's a unique spectral representation $\hat{G}:l^2(\mathbb{Z}^d)\rightarrow l^2(\mathbb{Z}^d)$, s.t. commutative diagram (\ref{cd: fourier}) $\hat{G}\circ \mathcal{F}=\mathcal{F}\circ \mathcal{G}$.
	\begin{proof}
		By Plancherel theorem (\ref{thm:plancherel}), $\mathcal{F}$ is linear isometry, thus invertible, then we can construct $ \tilde{G}=\mathcal{F}\circ\mathcal{G}\circ\mathcal{F}^{-1}$ then it satisfy $\tilde{G}\circ \mathcal{F}=\mathcal{F}\circ\mathcal{G}$. Then $\hat{G}=\tilde{G}$ by the proof of uniqueness: if $\hat{G}_1\circ\mathcal{F}=\mathcal{F}\circ\mathcal{G}$ and $\hat{G}_2\circ\mathcal{F}=\mathcal{F}\circ\mathcal{G}$, then $(\hat{G}_1-\hat{G}_2)\circ\mathcal{F}=0$, since $\mathcal{F}$ is bijection, directly right composite it with $\mathcal{F}^{-1}$, then $\hat{G}_1=\hat{G}_2$. Thus $\hat{G}=\tilde{G}$ which is unique.
	\end{proof}
	\begin{figure}[htbp]
		\centering
		\begin{subfigure}{0.45\textwidth}
			\centering
			\begin{tikzcd}
				L^2(\mathbb{T}^d,\mathbb C)\arrow[r,"\mathcal{G}"]\arrow[d,"\mathcal{F}"]&L^2(\mathbb{T}^d,\mathbb C)\arrow[d,"\mathcal{F}"]\\
				l^2(\mathbb{Z}^d)\arrow[r,"\hat{G}"] & l^2(\mathbb{Z}^d)
			\end{tikzcd}\caption{Fourier basis commutative diagram}\label{cd: fourier}
		\end{subfigure}
		\hfill
		\begin{subfigure}{0.45\textwidth}
			\centering 
			\begin{tikzcd}
				L^2(E,\mathbb R)\arrow[r,"\mathcal{G}"]\arrow[d,"\mathcal{F}_{gen}"]&L^2(E,\mathbb R)\arrow[d,"\mathcal{F}_{gen}"]\\
				l^2(\mathbb{Z}^d)\arrow[r,"\hat{G}_{gen}"] & l^2(\mathbb{Z}^d)
			\end{tikzcd}\caption{Generalized fourier basis commutative diagram}\label{cd:generalized basis}
		\end{subfigure}
		\caption{Isometry structure of Fourier and generalized Fourier transformation}
	\end{figure}
\end{proposition}
\begin{proposition}[Generalization to any orthonormal basis] For any generalized orthogonormal basis $\mathcal{B}=\{e_{\textbf{k}}(\textbf{x})\}_{\textbf{k}\in\mathbb{Z}^d}$ of $L^2(\mathbb{T}^d,\mathbb{C})$, all items of theorem (\ref{theorem_1}) and (\ref{thm:plancherel}) still holds, by modifying
	\begin{enumerate}
		\item Fourier basis components: $e^{i\textbf{k}\textbf{x}} \rightarrow e_{\textbf{k}}(\textbf{x})$, $e^{-i\textbf{k}\textbf{x}}\rightarrow e_{\textbf{k}}^*(\textbf{x})$
		\item Fourier transformation $\mathcal{F} $ as  Generalized Fourier transformation $ \mathcal{F}_{gen}: L^2(\mathbb{T}^d,\mathbb{C})\rightarrow l^2(\mathbb{Z}^d), f(\textbf{x})\mapsto \hat{f}_k=\langle f,e_{\textbf{k}}\rangle= \int f(\textbf{x}) e^*_{\textbf{k}}(\textbf{x})d\textbf{x}$
		\item The spectral representation of operator $\mathcal{G}$ changes from $\hat{G}=\mathcal{F}\circ\mathcal{G}\circ\mathcal{F}^{-1}$ to $\hat{G}_{gen}=\mathcal{F}_{gen}\circ\mathcal{G}\circ\mathcal{F}^{-1}_{gen}$ and satisfy the new commutation relation $\hat{G}_{gen}\circ\mathcal{F}_{gen}=\mathcal{F}_{gen}\circ\mathcal{G}$
	\end{enumerate}
	Thus the commutative diagram becomes (\ref{cd:generalized basis})
\end{proposition}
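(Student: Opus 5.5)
The statement is the Generalization to any orthonormal basis proposition. The plan is to re-read the proofs of Theorem~\ref{theorem_1}, Corollary~\ref{Corollary1.1} and Theorem~\ref{thm:plancherel}, and to check that each step uses only three facts: orthonormality $\langle e_{\textbf{k}},e_{\textbf{k}'}\rangle=\delta_{\textbf{k},\textbf{k}'}$, completeness (the only element orthogonal to every $e_{\textbf{k}}$ is $0$), and countability of the index set $\mathbb{Z}^d$. None of them uses the exponential form of $e^{i\textbf{k}\cdot\textbf{x}}$. Once this is confirmed, the substitution $e^{i\textbf{k}\textbf{x}}\mapsto e_{\textbf{k}}(\textbf{x})$, $e^{-i\textbf{k}\textbf{x}}\mapsto e^*_{\textbf{k}}(\textbf{x})$ carries every argument over verbatim.

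\textbf{Step 1: items of Theorem~\ref{theorem_1}.} Orthonormality and completeness are exactly the hypothesis that $\mathcal{B}$ is an orthonormal basis, so they need no proof. For linear independence, suppose $\sum_{i=1}^n c_ie_i=0$ a.e. Pairing with $e_j$ and using orthonormality gives $c_j=0$ for each $j$.

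\textbf{Step 2: Schauder property.} I will redo the proof of Corollary~\ref{Corollary1.1} with $\hat f_{\textbf{k}}=\langle f,e_{\textbf{k}}\rangle=\int f e_{\textbf{k}}^*$. Bessel's inequality follows from expanding $\|f-S_N\|^2=\|f\|^2-\sum_{n\le N}|\langle f,e_n\rangle|^2\ge 0$, which uses only orthonormality. Then the partial sums $S_N$ form a Cauchy sequence, and completeness of $L^2(\mathbb{T}^d)$ gives a limit $g$. Continuity of the inner product together with orthonormality gives $\langle f-g,e_{\textbf{k}}\rangle=0$ for all $\textbf{k}$, so $f=g$ by completeness of $\mathcal{B}$. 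The coefficients are unique because if $\sum_{\textbf{k}} c_{\textbf{k}}e_{\textbf{k}}=0$ in $L^2$, pairing with each $e_{\textbf{k}}$ forces $c_{\textbf{k}}=0$.

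\textbf{Step 3: Plancherel for $\mathcal{F}_{gen}$.} Linearity of $\mathcal{F}_{gen}$ follows from linearity of the inner product in its first slot. For surjectivity, given $\hat f\in l^2(\mathbb{Z}^d)$, I set $\tilde f=\sum_{\textbf{k}}\hat f_{\textbf{k}}e_{\textbf{k}}$. This series converges by the same Cauchy argument, since the tail norm equals the $l^2$ tail. Pairing with $e_{\textbf{k}}$ returns $\hat f_{\textbf{k}}$. Injectivity is the completeness property. For Parseval, I expand $\|f\|^2=\langle \sum_{\textbf{k}} \hat f_{\textbf{k}}e_{\textbf{k}},\sum_{\textbf{k}'} \hat f_{\textbf{k}'}e_{\textbf{k}'}\rangle$ and use orthonormality to collapse it to $\sum_{\textbf{k}}|\hat f_{\textbf{k}}|^2$.

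\textbf{Step 4: commutative diagram.} The proof of the Fourier-basis commutative-diagram proposition used only that $\mathcal{F}$ is a bijection. Since $\mathcal{F}_{gen}$ is now a bijection, I define $\hat G_{gen}=\mathcal{F}_{gen}\circ\mathcal{G}\circ\mathcal{F}_{gen}^{-1}$, and it satisfies $\hat G_{gen}\circ\mathcal{F}_{gen}=\mathcal{F}_{gen}\circ\mathcal{G}$. For uniqueness, if two operators satisfy this relation, right-composing their difference with $\mathcal{F}_{gen}^{-1}$ shows they are equal.

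\textbf{Main obstacle.} The content here is routine, so the only delicate point is analytic. In Step 3 the Parseval expansion interchanges a double infinite sum with the inner product, and the original Fourier proof does this informally. To justify it, I will apply the identity to the finite partial sums $S_N$, where it is exact, and then pass to the limit using $S_N\to f$ in $L^2$ and continuity of the norm.

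A secondary issue is the normalization convention. The paper's $(2\pi)^{-d}$ factor must be absorbed into the inner product, or equivalently into the normalization of $e_{\textbf{k}}$, so that the orthonormality relation \eqref{eq:f-orthn} holds exactly. I will fix this convention at the outset.
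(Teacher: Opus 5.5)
Your proposal is correct and takes the same route as the paper. The paper gives no separate argument for this proposition; it justifies it by transporting the proofs of Theorem~\ref{theorem_1}, Corollary~\ref{Corollary1.1}, Theorem~\ref{thm:plancherel} and the Fourier commutative-diagram proposition verbatim under the substitution $e^{i\textbf{k}\textbf{x}}\mapsto e_{\textbf{k}}(\textbf{x})$, with orthonormality and completeness now taken as hypotheses. Your version is simply more careful than the paper's: your partial-sum justification of Parseval, and your explicit fixing of the $(2\pi)^{-d}$ normalization, are genuine improvements rather than departures (the paper's stated Parseval identity carries that factor, but its definition $\hat f_{\textbf{k}}=\int f e^*_{\textbf{k}}$ does not).
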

\subsection{Frame theory of ABLE}
\begin{definition}[Adaptive Learnable Basis]
	For any basis $\mathcal{B}=\{e_\textbf{k}\}_{\textbf{k}\in \mathbb{Z}^d}$ of $L^2(\mathbb{T}^d,\mathbb{C})$, one could extend $\mathcal{B}$ to \textbf{Adaptive Learnable Basis} $\mathcal{B}_{able}=\{e_{\textbf{k},y}\}_{\textbf{k}\in \mathbb{Z}^d,y\in \chi}=\{e_\textbf{k}(\textbf{x})p(\textbf{x},y)^{\frac{1}{2}}\}_{\textbf{k}\in \mathbb{Z}^d,y\in \chi}$, where 
	\begin{enumerate}
		\item $\chi$ is an ancillary measurable space $(\chi,\mathcal{A}_{\chi},\mu)$
		\item $p$ is a probability kernel function $p:\mathbb{T}^d \times \chi \rightarrow \mathbb{R}_{\ge0}$ i.e. $\forall \textbf{x}\in \mathbb{T}^d$, $\int_{\chi} p(\textbf{x},y)\mu(dy)=1$
	\end{enumerate}
	\begin{remark}
		In our paper, we only refer $\chi$ to two simplest  special cases: real space case with Lebesgue measure $(\chi,\mathcal{A}_{\chi},\mu)=(\mathbb{R},\mathcal{B}(\mathbb{R}),\lambda)$ (i.e.$\int_{\mathbb{R}} p(\textbf{x},y)dy=1$) or the $M$-point finite set case with counting measure $([M],\mathcal{P}([M]),\mu_c)$(i.e.$\sum_{m=1}^{M}p(\textbf{x},m)=1$).
		Theoretically, finite case could be extended to a countable set case with $\Sigma_{m=1}^{\infty}p(\textbf{x},m)=1$.
	\end{remark}
\end{definition}
\begin{theorem}[ABLE is Parseval tight frame]
	\label{theorem_3}
	Adaptive learnable basis $\mathcal{B}_{able}=\{e_{\textbf{k},y}\}_{\textbf{k}\in \mathbb{Z}^d,y\in \chi}=\{p(\textbf{x},y)^{\frac{1}{2}}e^{i\textbf{k}\cdot \textbf{x}}\}_{\textbf{k}\in \mathbb{Z}^d,y\in \chi}$ is a \textbf{Parseval tight frame} of function space $L^2(\mathbb{T}^d,\mathbb{C})=\{f:\mathbb{T}^d\rightarrow \mathbb{C}|\int_{\mathbb{T}^d}|f(\textbf{x})|^2d\textbf{x}<\infty\}$, that satisfies the following properties:
	\begin{enumerate}
		\item Frame inequality: $\forall f\in L^2(\mathbb{T}^d,\mathbb{C})$, $ \exists 0<A\le B<\infty$ s.t. $A\|f\|_{L^2(\mathbb{T}^d)}^2\le\sum_{e_i\in \mathcal{B}}|\langle f,e_i\rangle|^2\le B\|f\|^2_{L^2(\mathbb{T}^d)}$
		\item Parseval: $A=B=\frac{1}{(2\pi)^d}$ i.e. $\frac{1}{(2\pi)^d}\|f\|_{L^2(\mathbb{T}^d)}^2=\sum_{e_i\in \mathcal{B}_{able}}|\langle f,e_i\rangle|^2=\int \mu(dy)\Sigma_{\textbf{k}\in \mathbb{Z}^d} |\hat{f}_{\textbf{k},y}|^2$ where $\hat{f}_{\textbf{k},y}=\langle f,e_{\textbf{k},y}\rangle$
	\end{enumerate}
	\begin{proof}
		Since Parsevel $\Rightarrow$ frame inequality, it suffices to prove the Parseval's property of ABLE.
		As long as we assume $p$ s.t. $\int |f(\textbf{x})|^2p(\textbf{x},y)d\textbf{x}<\infty$ (this is always satisfied in our paper, since we take $\chi=[M]$, and thus each $p(x,M)\in [0,1]$, then $\int |f(\textbf{x})|^2p(\textbf{x},y)d\textbf{x}\le \int |f(\textbf{x})|^2d\textbf{x}=\|f\|_{L^2}<\infty$), then $p(\textbf{x},y)^{1/2}f(\textbf{x})\in L^2(\mathbb{T}^d)$. 
		\newline
		Expand RHS as 
		\begin{equation}\int \mu(dy)\Sigma_{\textbf{k}\in \mathbb{Z}^d} |\hat{f}_{\textbf{k},y}|^2=\int \mu(dy)\Sigma_{\textbf{k}\in \mathbb{Z}^d}\frac{1}{(2\pi)^d}\int e^{i\textbf{k}\textbf{x}}f^*(\textbf{x})p(\textbf{x},y)^{\frac{1}{2}}d\textbf{x}\frac{1}{(2\pi)^d}\int e^{-i\textbf{k}\textbf{x}'}f(\textbf{x}')p(\textbf{x}',y)^{\frac{1}{2}}d\textbf{x}'\end{equation},
		since  $p(\textbf{x},y)^{1/2}f(\textbf{x})\in L^2(\mathbb{T}^d)$, we could interchange the integration order like what we did in Fourier case
		\begin{align}&\int \mu(dy)|\hat{f}_{\textbf{k},y}|^2=\int \mu(dy)\int d\textbf{x} f^*(\textbf{x})p(\textbf{x},y)^{\frac{1}{2}}\int d\textbf{x}'f(\textbf{x}')p(\textbf{x}',y)^{\frac{1}{2}}\Sigma_{\textbf{k}\in \mathbb{Z}^d}\frac{1}{(2\pi)^{2d}} e^{i\textbf{k}\textbf{x}}e^{-i\textbf{k}\textbf{x}'}=\nonumber\\&\int \mu(dy)\int d\textbf{x}f^*(\textbf{x})p(\textbf{x},y)^{\frac{1}{2}}\int d\textbf{x}' f(\textbf{x}')p(\textbf{x}',y)^{\frac{1}{2}}\frac{1}{(2\pi)^{d}}\delta(\textbf{x}-\textbf{x}')=\frac{1}{(2\pi)^d}\int \mu (dy)\int_{\mathbb{T}^d} d\textbf{x} |f(\textbf{x})|^2p(\textbf{x},y)\end{align}. 
		Then since the integration kernel is non-negative, we can use Fubini's theorem to interchange the integration and take advantage of probability kernel property of $p$ ($\int \mu(dy)p(\textbf{x},y)dy=1$)
		\begin{align}
			\int \mu(dy)|\hat{f}_{\textbf{k},y}|^2 =\frac{1}{(2\pi)^d}\int_{\mathbb{T}^d} d\textbf{x}|f(\textbf{x})|^2[\int \mu (dy)p(\textbf{x},y)]=\frac{1}{(2\pi)^d}\int_{\mathbb{T}^d} d\textbf{x}|f(\textbf{x})|^2
		\end{align}
	\end{proof}
\end{theorem}

\begin{corollary} The frame inequality directly implies the completeness of the tight frame i.e. $\forall f\in L^2(\mathbb{T}^d,\mathbb{C})$, $\Bigl[\forall\textbf{k}\in \mathbb{Z}^d , y\in \chi,\langle f,e_{\textbf{k},y}\rangle=\int_{\mathbb{T}^d}f(\textbf{x})e^{-i\textbf{k}\textbf{x}}p(\textbf{x},y)^{\frac{1}{2}}d\textbf{x}=0 \Bigl]\Rightarrow f=0\quad a.e. \textbf{x}\in \mathbb{T}^d$
	\begin{proof}
		Since $A\|f\|_{L^2(\mathbb{T}^d)}^2\le\sum_{e_i\in \mathcal{B}}|\langle f,e_i\rangle|^2$, the RHS vanishes. Then because $A>0$, $\|f\|=0$, i.e. $ f=0\quad a.e.$
	\end{proof}
\end{corollary}
\begin{remark}
	For tight frames, we still have the completeness, but linear independence and orthogonality are not necessarily satisfied.
\end{remark}
\begin{definition}[ABLE transfomration]
	The ABLE transformation is defined as $\mathcal{A}:L^2(\mathbb{T}^d,\mathbb{C})\rightarrow L^2(\mathbb{Z}^d\times \chi) , f(\textbf{x})\mapsto \hat{f}_{\textbf{k},y} $ s.t. 
	\begin{equation}\hat{f}_{\textbf{k},y}=\langle f,e^{i\textbf{k}\cdot\textbf{x}}p(\textbf{x},y)^{\frac{1}{2}}\rangle=\frac{1}{(2\pi)^d}\int f(\textbf{x})p(\textbf{x},y)^{\frac{1}{2}}e^{-i\textbf{k}\cdot\textbf{x}}d\textbf{x}\end{equation} 
\end{definition}
\begin{corollary}[Semi-Schauder]\label{corollary3.2}
	ABLE is a frame s.t. $\forall f\in L^2(\mathbb{T}^d,\mathbb{C})$, $\exists$  \text{(not necessarily unique)} $c_{\textbf{k},y}\in \mathbb{C}$(i.e. $c$ could be out of $L^2(\mathbb{Z}^d\times \chi)$ ) s.t. $f(\textbf{x})=\int\mu(dy)\sum_{\textbf{k}\in\mathbb{Z}^d }c_{\textbf{k},y}e_{\textbf{k},y}(\textbf{x})$, one of the choice is $f(\textbf{x})=\int_{\chi}\mu(dy)\sum_{\textbf{k}\in\mathbb{Z}^d}\hat{f}_{\textbf{k},y}e_{\textbf{k},y}(\textbf{x})=\int_{\chi}\mu(dy)\sum_{\textbf{k}\in\mathbb{Z}^d} \hat{f}_{\textbf{k},y} e^{i\textbf{k}\cdot\textbf{x}} $
	where parameters $\hat{f}_{\textbf{k},y}=\langle f,e_{\textbf{k},y}\rangle=\frac{1}{(2\pi)^d}\int f(\textbf{x})p(\textbf{x},y)^{\frac{1}{2}}e^{-i\textbf{k}\cdot\textbf{x}}d\textbf{x}$. 
	\begin{proof} By assumption that $p(\textbf{x},y)^{\frac{1}{2}}f(\textbf{x})\in L^2(\mathbb{T}^d)$ , we could interchange the integration
		$ \newline f(\textbf{x})=\int_{\chi}\mu(dy)\sum_{\textbf{k}\in\mathbb{Z}^d}\hat{f}_{\textbf{k},y}e_{\textbf{k},y}(\textbf{x})=\int_{\chi}\mu(dy)\sum_{\textbf{k}\in\mathbb{Z}^d}\frac{1}{(2\pi)^d}\int f(\textbf{x}')p(\textbf{x}',y)^{\frac{1}{2}}e^{-i\textbf{k}\cdot\textbf{x}'}d\textbf{x}' p(\textbf{x},y)^{\frac{1}{2}}e^{i\textbf{k}\cdot\textbf{x}}=\int_{\chi}\mu(dy)\int d\textbf{x}'f(\textbf{x}')p(\textbf{x}',y)^{\frac{1}{2}} p(\textbf{x},y)^{\frac{1}{2}} [\frac{1}{(2\pi)^d}\sum_{\textbf{k}\in\mathbb{Z}^d}e^{-i\textbf{k}\cdot\textbf{x}'}e^{i\textbf{k}\cdot\textbf{x}}]=\int_{\chi}\mu(dy)\int d\textbf{x}'f(\textbf{x}')p(\textbf{x}',y)^{\frac{1}{2}} p(\textbf{x},y)^{\frac{1}{2}}\delta(\textbf{x}-\textbf{x}')=\int_{\chi}\mu(dy)f(\textbf{x})p(\textbf{x},y)=f(\textbf{x})[\int_{\chi}\mu(dy)p(\textbf{x},y)]=f(\textbf{x})
		$.
	\end{proof}
	\begin{remark}
		The uniqueness is not satisfied since the orthogonality fails to hold generally.
	\end{remark}
\end{corollary}

\begin{theorem}\label{theorem_4}
	ABLE transformation $\mathcal{A}:L^2(\mathbb{T}^d,\mathbb{C})\rightarrow \text{Im}(\mathcal{A})\subset L^2(\mathbb{Z}^d\times \chi) , f(\textbf{x})\mapsto \hat{f}_{\textbf{k},y}$ 
	is an isometry (in implementation we could set $p(\textbf{x},m)$ dependent on $f(\textbf{x})$,  this will deviate from the definition of frame, but still have the isometry property).
	\begin{enumerate}
		\item Bijection: $\forall\hat{f}=( \hat{f}_{\textbf{x},y})\in \text{Im}(\mathcal{A})$, $\exists \text{unique} f\in L^2(\mathbb{T}^d,\mathbb{C})$ s.t. $\mathcal{A}(f)=\hat{f}$
		\item Parseval's identity/theorem :$\|f\|_{L^2(\mathbb{T}^d)}=\|\hat{f}\|_{L^2(\mathbb{T}^d\times\chi)}$ i.e. $\frac{1}{(2\pi)^d}\|f\|_{L^2(\mathbb{T}^d)}^2=\sum_{e_i\in \mathcal{B}_{able}}|\langle f,e_i\rangle|^2=\int \mu(dy)\Sigma_{\textbf{k}\in \mathbb{Z}^d} |\hat{f}_{\textbf{k},y}|^2$
	\end{enumerate}
	\begin{proof}
		Parseval's theorem is already proved in theorem(\ref{theorem_3}).
		The inverse transformation construction is in (\ref{corollary3.2}):
		\begin{equation}
			\mathcal{A}^{-1}(\hat{f})=\int \mu(dy)\sum_{\textbf{k}\in \mathbb{Z}^d}\hat{f}_{\textbf{k},y}e^{i\textbf{k}\textbf{x}}p(\textbf{x},y)^\frac{1}{2}
		\end{equation}.The uniqueness is obtained here because we consider $\text{Im}(\mathcal{A})$ rather than $L^2(\mathbb{Z}^d\times \chi)$. 
	\end{proof}
\end{theorem} 
\begin{proposition}[Commutative diagram of ABLE]
	For any operator $\mathcal{G}:L^2(\mathbb{T}^d,\mathbb{C})\rightarrow L^2(\mathbb{T}^d,\mathbb{C})$, there's a unique ABLE representation $\hat{G}_{ABLE}:\text{Im}(\mathcal{A})\subset L^2(\mathbb{Z}^d\times \chi)\rightarrow \text{Im}(\mathcal{A})\subset L^2(\mathbb{Z}^d\times \chi)$, s.t. commutative diagram (\ref{cd:able}) $\hat{G}_{ABLE}\circ \mathcal{A}=\mathcal{A}\circ \mathcal{G}$.\newline
	Thus for such $\mathcal{G}$, $\exists$ \text{unique} $\hat{G}_{ABLE}$ \text{s.t.} $\mathcal{G}=\mathcal{A}^{-1}\circ\hat{G}_{ABLE}\circ\mathcal{A} $
	\begin{proof}
		By theorem (\ref{theorem_4}), $\mathcal{A}$ is an isometry, thus invertible, then we can construct $ \tilde{G}=\mathcal{A}\circ\mathcal{G}\circ\mathcal{A}^{-1}$ then it satisfies $\tilde{G}\circ \mathcal{A}=\mathcal{A}\circ\mathcal{G}$. Then $\hat{G}_{ABLE}=\tilde{G}$ by the proof of uniqueness: if $\hat{G}_1\circ\mathcal{A}=\mathcal{A}\circ\mathcal{G}$ and $\hat{G}_2\circ\mathcal{A}=\mathcal{A}\circ\mathcal{G}$, then $(\hat{G}_1-\hat{G}_2)\circ\mathcal{A}=0$, since $\mathcal{A}$ is bijection, directly right composite it with $\mathcal{A}^{-1}$, then $\hat{G}_1=\hat{G}_2$. Thus $\hat{G}=\tilde{G}$ which is unique. 
	\end{proof}
	\begin{figure}[htbp]
		\centering
		\begin{subfigure}[b]{0.45\textwidth}
			\centering
			\begin{tikzcd}
				L^2(\mathbb{T}^d,\mathbb C)\arrow[r,"\mathcal{G}"]\arrow[d,"\mathcal{A}"]&L^2(\mathbb{T}^d,\mathbb C)\arrow[d,"\mathcal{A}"]\\
				\text{Im}(\mathcal{A})\subset L^{2}(\mathbb{T}^d\times\chi,\mathbb C)\arrow[r,"\hat{G}_{ABLE}"] & \text{Im}(\mathcal{A})
			\end{tikzcd}\caption{ABLE commutative diagram}\label{cd:able}
		\end{subfigure}
		\hfill
		\begin{subfigure}{0.45\textwidth}
			\centering 
			\begin{tikzcd}
				L^2(\mathbb{T}^d,\mathbb C)\arrow[r,"\mathcal{G}"]\arrow[d,"\mathcal{A}_{gen}"]&L^2(\mathbb{T}^d,\mathbb C)\arrow[d,"\mathcal{A}_{gen}"]\\
				\text{Im}(\mathcal{A}_{gen})\subset L^{2}(\mathbb{T}^d\times\chi,\mathbb C)\arrow[r,"\hat{G}_{gen}"] & \text{Im}(\mathcal{A}_{gen})
			\end{tikzcd}\caption{Generalized ABLE commutative diagram}\label{cd: generalized able basis}
		\end{subfigure}	
		\caption{ABLE Isometry induced from Fourier and generalized Fourier transformation}
	\end{figure}
	\begin{remark}
		Surprisingly, we found that even if a function $f\in L^2(\mathbb{T}^d,\mathbb{C})$ could have multiple preimages in $\hat{c}\in L^2(\mathbb{Z}^d\times \chi)$, however, when it comes to the case that one need to find a pre-image in $Im(\mathcal{A})$ (like $\mathcal{A}\circ\mathcal{G}$), it has only a unique choice. Fortunately, this is just the case for constructing an ABLE representation for operators. Thus, ABLE is still a unique representation $\mathcal{G}=\mathcal{A}^{-1}\circ\hat{G}_{ABLE}\circ\mathcal{A}$.
	\end{remark}
\end{proposition}
\begin{proposition}[ABLE generalization for any orthonormal basis] For any generalized orthogonormal basis $\mathcal{B}=\{e_{\textbf{k}}(\textbf{x})\}_{\textbf{k}\in\mathbb{Z}^d}$ of $L^2(\mathbb{T}^d,\mathbb{C})$, they have the correspondent ABLE extension $\mathcal{B}_{able}=\{e_{\textbf{k}}(\textbf{x})p(\textbf{x},y)^{\frac{1}{2}}\}_{\textbf{k}\in\mathbb{Z}^d, y\in \chi}$. All items of theorem (\ref{theorem_3}) and (\ref{theorem_4}) still holds, by modifying
	\begin{enumerate}
		\item ABLE components: $e^{i\textbf{k}\textbf{x}}p(\textbf{x},y)^{\frac{1}{2}} \rightarrow e_{\textbf{k}}(\textbf{x})p(\textbf{x},y)^{\frac{1}{2}}$, $e^{-i\textbf{k}\textbf{x}}p(\textbf{x},y)^{\frac{1}{2}}\rightarrow e_{\textbf{k}}^*(\textbf{x})p(\textbf{x},y)^{\frac{1}{2}}$
		\item ABLE transformation $\mathcal{A} $ as  Generalized ABLE transformation $ \mathcal{A}_{gen}: L^2(\mathbb{T}^d,\mathbb{C})\rightarrow \text{Im}(\mathcal{A})\subset L^2(\mathbb{Z}^d\times \chi), f(\textbf{x})\mapsto \hat{f}_{\textbf{k},y}=\langle f,e_{\textbf{k},y}\rangle= \int f(\textbf{x}) e^*_{\textbf{k}}(\textbf{x})p(\textbf{x},y)^{\frac{1}{2}}d\textbf{x}$
		\item The ABLE representation of operator $\mathcal{G}$ changes from $\hat{G}_{ABLE}=\mathcal{A}\circ\mathcal{G}\circ\mathcal{A}^{-1}$ to $\hat{G}_{gen}=\mathcal{A}_{gen}\circ\mathcal{G}\circ\mathcal{A}^{-1}_{gen}$ and satisfy the new commutation relation $\hat{G}_{gen}\circ\mathcal{A}_{gen}=\mathcal{A}_{gen}\circ\mathcal{G}$
	\end{enumerate}
	Thus the commutative diagram becomes (\ref{cd: generalized able basis})
\end{proposition}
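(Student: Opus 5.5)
The plan is to avoid the distributional identity $\sum_{\textbf{k}}e^{i\textbf{k}\cdot\textbf{x}}e^{-i\textbf{k}\cdot\textbf{x}'}=(2\pi)^d\delta(\textbf{x}-\textbf{x}')$ used in Theorem~\ref{theorem_3}. Instead, I would apply the Parseval identity and the Schauder expansion of the \emph{generalized} orthonormal basis (the generalized Proposition after Theorem~\ref{thm:plancherel}) to an auxiliary family of functions. For fixed $f\in L^2(\mathbb{T}^d,\mathbb{C})$ and $y\in\chi$, set $g_y(\textbf{x})=p(\textbf{x},y)^{1/2}f(\textbf{x})$. As in the proof of Theorem~\ref{theorem_3}, I assume $g_y\in L^2(\mathbb{T}^d)$. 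This holds automatically for $\chi=[M]$, since then $0\le p\le 1$. The key observation is that
\[
\hat f_{\textbf{k},y}=\langle f,e_{\textbf{k}}p(\cdot,y)^{1/2}\rangle=\langle g_y,e_{\textbf{k}}\rangle,
\]
so the generalized ABLE coefficients are exactly the ordinary $\mathcal{F}_{gen}$-coefficients of $g_y$.

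\textbf{Step 1 (Parseval, generalizing Theorem~\ref{theorem_3}).} By Parseval for the orthonormal basis $\{e_{\textbf{k}}\}$,
\[
\sum_{\textbf{k}}|\hat f_{\textbf{k},y}|^2=\|g_y\|^2=\int_{\mathbb{T}^d}|f(\textbf{x})|^2p(\textbf{x},y)\,d\textbf{x}
\]
for every $y$. I then integrate over $y$ against $\mu$. Everything is nonnegative, so Tonelli lets me swap $\int_\chi\mu(dy)$ with $\int d\textbf{x}$. The normalization $\int_\chi p(\textbf{x},y)\,\mu(dy)=1$ then gives $\int_\chi\sum_{\textbf{k}}|\hat f_{\textbf{k},y}|^2\,\mu(dy)=\|f\|^2$, with the same normalization constant as the chosen inner-product convention. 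The frame inequality and completeness follow exactly as in the Corollary after Theorem~\ref{theorem_3}.

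\textbf{Step 2 (reconstruction, generalizing Corollary~\ref{corollary3.2} and Theorem~\ref{theorem_4}).} By the Schauder property, $\sum_{\textbf{k}}\hat f_{\textbf{k},y}e_{\textbf{k}}=g_y$ in $L^2$ for each $y$. I multiply by $p(\cdot,y)^{1/2}$, which is a bounded multiplication operator when $p\le 1$, so it commutes with the $L^2$ limit. This gives $\sum_{\textbf{k}}\hat f_{\textbf{k},y}e_{\textbf{k},y}=p(\cdot,y)f$. Integrating over $y$ and using the normalization yields $\mathcal{A}_{gen}^{-1}\mathcal{A}_{gen}f=f$. Injectivity of $\mathcal{A}_{gen}$ follows from the isometry: $\mathcal{A}_{gen}f=0$ forces $\|f\|=0$. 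Hence $\mathcal{A}_{gen}$ is a bijection onto $\mathrm{Im}(\mathcal{A}_{gen})$, and its inverse is given by the formula above. Linearity is immediate.

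\textbf{Step 3 (commutative diagram).} With invertibility of $\mathcal{A}_{gen}$ on its image established, I define $\hat G_{gen}=\mathcal{A}_{gen}\circ\mathcal{G}\circ\mathcal{A}_{gen}^{-1}$. Uniqueness then follows verbatim from the argument in the ABLE commutative-diagram proposition: right-compose with $\mathcal{A}_{gen}^{-1}$. This yields diagram~\ref{cd: generalized able basis}.

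\textbf{Main obstacle.} I expect the main difficulty to be the interchange in Step 2 of the $L^2$-convergent $\textbf{k}$-series with the $y$-integral when $\chi$ is infinite, e.g.\ $\chi=\mathbb{R}$. For finite $\chi=[M]$ the interchange is trivial, since the $y$-integral is a finite sum. In the general case I would first view $y\mapsto g_y$ as an element of the Bochner space $L^2(\chi;L^2(\mathbb{T}^d))$; its norm squared is $\|f\|^2$ by Step 1. The truncated sums $\sum_{|\textbf{k}|\le N}$ converge to it in that space by dominated convergence, because each tail is bounded by $\|g_y\|^2$. Finally, the bounded synthesis map $h\mapsto\int_\chi p(\cdot,y)^{1/2}h_y\,\mu(dy)$ passes the limit inside. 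That this synthesis map is bounded follows from Cauchy--Schwarz together with the normalization of $p$.
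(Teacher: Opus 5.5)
Your proof is correct, but it takes a genuinely different route from the paper. The paper gives no separate argument for this proposition. It justifies it by substituting $e_{\textbf{k}}$ for $e^{i\textbf{k}\cdot\textbf{x}}$ in the proofs of Theorem~\ref{theorem_3}, Corollary~\ref{corollary3.2} and Theorem~\ref{theorem_4}. Those proofs expand $\sum_{\textbf{k}}|\hat f_{\textbf{k},y}|^2$ as a double integral in $(\textbf{x},\textbf{x}')$ and move the $\textbf{k}$-sum inside. They then replace the resulting kernel by $\delta(\textbf{x}-\textbf{x}')$ via the completeness relation $\sum_{\textbf{k}}e_{\textbf{k}}(\textbf{x})e^*_{\textbf{k}}(\textbf{x}')=\delta(\textbf{x}-\textbf{x}')$, and finish with Fubini in $y$ and the normalization of $p$.

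You instead note that $\hat f_{\textbf{k},y}=\langle g_y,e_{\textbf{k}}\rangle$. So $\mathcal{A}_{gen}$ factors as the lifting $f\mapsto(p(\cdot,y)^{1/2}f)_{y\in\chi}$, an isometry into $L^2(\chi;L^2(\mathbb{T}^d))$, followed by $\mathcal{F}_{gen}$ applied in each fibre. Parseval and the Schauder expansion of $\{e_{\textbf{k}}\}$, applied to each $g_y$, then do the work of the delta function. The reconstruction formula is the adjoint $\mathcal{A}_{gen}^*$: fibrewise synthesis followed by your bounded map $h\mapsto\int_\chi p(\cdot,y)^{1/2}h_y\,\mu(dy)$. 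Your Step~3 coincides with the paper's uniqueness argument.

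What each approach buys: the paper's substitution makes it visible at the level of formulas that nothing Fourier-specific is used. However, its interchange of the $\textbf{k}$-sum with the spatial integrals is only formal. The kernel series converges only distributionally, and $p^{1/2}f\in L^2$ alone does not license the swap. Your version is rigorous and uses only orthonormality and completeness of $\{e_{\textbf{k}}\}$. Through the Bochner-space argument it also covers $\chi=\mathbb{R}$ with Lebesgue measure. There $p(\cdot,y)$ can be unbounded, so both your bounded-multiplier step and the paper's appeal to $p\le 1$ break down.

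Two minor remarks. First, you need not assume $g_y\in L^2(\mathbb{T}^d)$: your own Tonelli computation gives $\int_\chi\|g_y\|^2\,\mu(dy)=\|f\|^2<\infty$, hence $g_y\in L^2$ for $\mu$-a.e.\ $y$. Second, for infinite $\chi$ your argument proves the inverse formula in the form $\lim_{N\to\infty}\int_\chi\sum_{|\textbf{k}|\le N}(\cdots)\,\mu(dy)$, with convergence in $L^2(\mathbb{T}^d)$. This agrees trivially with the paper's written order $\int_\chi\sum_{\textbf{k}}$ when $\chi$ is finite.
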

\subsection{Representation ability of ABLE}
\begin{theorem}[Super-set]\label{theorem_5}
	ABLE neural operator $\mathcal{G}_{able}=\mathcal{A}^{-1}\circ\hat{R}_{able}\circ\mathcal{A} $ could \textbf{strictly contains} Fourier Neural operator $\mathcal{G}_{FNO}=\mathcal{A}^{-1}\circ\hat{R}\circ\mathcal{A} $.
	\begin{proof}
		First, we show $\forall \mathcal{G}_{FNO}\in \mathcal{T}_{FNO}$, $ \mathcal{G}_{FNO}\in \mathcal{T}_{ABLE}$. 
		\newline
		ABLE could be written as
		\begin{equation}
			\mathcal{G}_{ABLE}(f)(\textbf{x})=\int_{\chi}\mu(dy)\sum_{\textbf{k}\in \mathbb{Z}^d}e^{i\textbf{k}\textbf{x}}p(\textbf{k},y)^{\frac{1}{2}}R(\textbf{k},y)\int_{\mathbb{T}^d} f(\textbf{x}')e^{-i\textbf{k}\textbf{x}'}p(\textbf{x}',y)^{\frac{1}{2}}d\textbf{x}'
		\end{equation}
		For discrete/lattice space case with $\mathbb{T}^d\rightarrow L= \times_{i=1}^{d}[N_i]$ and $\chi=[M]$, this is 
		\begin{equation}
			\mathcal{G}_{ABLE}(f)(\textbf{x})=\sum_{m=1}^{M}\sum_{\textbf{k}\in  L }e^{i\textbf{k}\textbf{x}}p(\textbf{x},m)^{\frac{1}{2}}R(\textbf{k},m)\sum_{\textbf{k}\in L} f(\textbf{x}')e^{-i\textbf{k}\textbf{x}'}p(\textbf{x}',m)^{\frac{1}{2}}
		\end{equation}
		Thus by setting $\chi=\{1\}$ as trivial space and $p(\textbf{x},1)=1$, it becomes
		\begin{equation}
			\mathcal{G}_{ABLE}(f)(\textbf{x})=\sum_{\textbf{k}\in L}e^{i\textbf{k}\textbf{x}}R(\textbf{k},1)\sum_{\textbf{x}'\in L }f(\textbf{x}')e^{-i\textbf{k}\textbf{x}'}d\textbf{x}'
		\end{equation}
		Since $R(\textbf{k},1)$ could set as $R(\textbf{k})$, ABLE Neural operator reduce to FNO. Thus ABLE contains FNO.
		\newline 
		Second, we need to show ABLE strictly contains FNO(i.e.$\exists \mathcal{G}_{ABLE}\in \mathcal{T}_{ABLE}$, $\mathcal{G}_{ABLE}\notin \mathcal{T}_{FNO}$).
		\newline We just need to consider the discrete case.
		By interchange the order of integration, ABLE could also be written as 
		\begin{align}
			\mathcal{G}_{ABLE}(f)(\textbf{x})&=\sum_{\textbf{x}'\in L}\sum_{m=1}^{M}[\sum_{\textbf{k}\in  L }e^{i\textbf{k}\textbf{x}-i\textbf{k}\textbf{x}'}R(\textbf{k},m)]p(\textbf{x},m)^{\frac{1}{2}}p(\textbf{x}',m)^{\frac{1}{2}}f(\textbf{x}')\nonumber
			\\&=\sum_{\textbf{x}'\in L}[\sum_{m=1}^{M}K(\textbf{x}-\textbf{x}',m) p(\textbf{x},m)^{\frac{1}{2}}p(\textbf{x}',m)^{\frac{1}{2}}]f(\textbf{x}')
			\nonumber 
		\end{align}
		Thus the Kernel function $K(\textbf{x},\textbf{x}')=\sum_{m=1}^{M}R(\textbf{x}-\textbf{x}',m) p(\textbf{x},m)^{\frac{1}{2}}p(\textbf{x}',m)^{\frac{1}{2}}$ do not need to satisfy translation invariant form $K(\textbf{x}-\textbf{x}')$, by which FNO is strictly restricted. Thus ABLE is a strict larger family than FNO.
	\end{proof}
\end{theorem}
\begin{definition}[Neural network modeled ABLE-basis]
	Set $\chi=[M]$, and ancillary function \begin{equation}
		p(\textbf{x},m)=\text{Softmax}(\frac{\text{MLP}_{\theta}(f(\textbf{x}))}{T})
	\end{equation} with $\text{MLP}_{\theta}:F(\mathbb{T}^d,\mathbb{R}^{d_{hid}})\rightarrow F(\mathbb{T}^d,\mathbb{R}^m) $, $\text{Softmax}:F(\mathbb{T}^d,\mathbb{R}^m) \rightarrow F(\mathbb{T}^d,\Delta^m) $ and temperature $T\in \mathbb{R}_{>0}$.
\end{definition}
\begin{theorem}[Universal approximation theorem]
	By stacking sufficient numbers of layers, ABLE neural operator could approximate arbitrary exact operator with arbitrary precision.
	\begin{equation}
		\forall \epsilon >0, \exists N,M,\theta=\{ W_l,R_l,\theta_{able}\}_{l=1}^{N}, s.t. \|\mathcal{G}_{\theta}-G^{\dagger}\|_{\infty} \le\epsilon
	\end{equation}
	where $W_l$ is spatial-MLP parameter, $R_l$ is paramter for spectral learning, $\theta_{able}$ is parameter for adaptive basis learning.
	\begin{proof}
		Generally ABLE could be written as a kernel integration with $K(\textbf{x},\textbf{x}',f(\textbf{x}),f(\textbf{x})')$ that contains linear kernel $K(\textbf{x},\textbf{x}')$. Thus by the universal approximation theorem of linear kernel-integration operator, the $N,W_l,R_l$ could be set the same as them and $M=1, p(\textbf{x},f(\textbf{x}),1)=1$, this property is naturally satisfied.
	\end{proof}
\end{theorem}
\begin{theorem}[ABLE's endogenous non-linearity]
	ABLE neural operator could be designed with endogenous non-linearity without the help of extra activation functions $\sigma$.
\end{theorem}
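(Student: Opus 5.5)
The plan is to prove the statement constructively: we exhibit an ABLE layer with no activation function $\sigma$ whose input-output map is not linear (indeed not additive and not homogeneous). We use the input-dependent density of the Definition of the neural-network modeled ABLE-basis, $p_\theta(\mathbf{x},m)=\mathrm{Softmax}(\mathrm{MLP}_\theta(f(\mathbf{x}))/T)_m$. The layer is
\begin{equation*}
\mathcal{G}(f)(\mathbf{x})=\sum_{m=1}^{M}\sqrt{p_\theta(\mathbf{x},f,m)}\,\mathcal{F}^{-1}\Big[R(\mathbf{k},m)\,\mathcal{F}\big(\sqrt{p_\theta(\cdot,f,m)}\,f\big)\Big](\mathbf{x})
=\int K_{ABLE}(\mathbf{x},\mathbf{z};f)f(\mathbf{z})\,d\mathbf{z}.
\end{equation*}
By Theorem~\ref{theorem_4}, the isometry, and in particular the reconstruction identity, survives this $f$-dependence pointwise in $f$. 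So $\mathcal{G}$ is a well-defined, norm-controlled map, and the only thing to show is nonlinearity.

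First, we reduce to the kernel form above. The kernel $K_{ABLE}(\mathbf{x},\mathbf{z};f)=\sum_m\sqrt{p(\mathbf{x},f(\mathbf{x}),m)}\,R(\mathbf{x}-\mathbf{z};m)\sqrt{p(\mathbf{z},f(\mathbf{z}),m)}$ depends on $f$. Hence $\mathcal{G}(f)$ is a bilinear-type expression in $f$ and a nonlinear function of $f$, unless the $f$-dependence cancels.

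Second, we build a concrete witness with $M=2$ and a trivial spectral part. Take $R(\mathbf{k},1)=1$ and $R(\mathbf{k},2)=0$. Then $\mathcal{F}^{-1}R\mathcal{F}$ is the identity on component $1$, and $\mathcal{G}(f)(\mathbf{x})=p_\theta(\mathbf{x},f,1)f(\mathbf{x})$. Choose the MLP to be the affine map $\epsilon(\cdot,1)=a f(\mathbf{x})$, $\epsilon(\cdot,2)=0$ with $a\neq0$. This gives
\begin{equation*}
\mathcal{G}(f)(\mathbf{x})=\frac{f(\mathbf{x})}{1+e^{-af(\mathbf{x})/T}},
\end{equation*}
which is exactly a SiLU/swish-type gated activation, $f\,\mathrm{sigmoid}(af/T)$. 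This map fails homogeneity: taking $f\equiv c$ constant, $\mathcal{G}(2f)\neq 2\mathcal{G}(f)$ for generic $c$. So the layer is nonlinear even though no $\sigma$ was applied; the nonlinearity comes entirely from the basis.

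Third, we note the generalization. In the limit $T\to0$ the gate becomes the hard switch $\mathbb{1}[af>0]$, which recovers $\mathrm{ReLU}$ up to scaling. For nontrivial $R$, the same argument shows that the Fr\'echet derivative of $\mathcal{G}$ depends on $f$. The main obstacle is making ``endogenous non-linearity'' precise, since the statement is informal. I would formalize it as: there exist parameters for which the ABLE layer without $\sigma$ is not a linear operator, and it can realize gated activations such as SiLU. One must also check that a spectral multiplier $R(\mathbf{k},1)\equiv1$ is admissible; if mode truncation forbids it, we use a low-pass $R$ and test on a constant (zero-mode) input, where the computation is unchanged.
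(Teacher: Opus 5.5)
Your proposal is correct and uses the same mechanism as the paper: an input-dependent density $p_\theta(\mathbf{x},f(\mathbf{x}),m)$ turns the induced kernel into $K(\mathbf{x},\mathbf{x}';f)$. The paper stops there and asserts that an $f$-dependent kernel gives a nonlinear operator; you go further and exhibit an explicit witness.

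That extra step matters. Formal $f$-dependence of the kernel does not by itself imply nonlinearity. For instance, $R(\mathbf{k},m)\equiv 1$ for every $m$ gives $\mathcal{G}(f)=\sum_m p(\cdot,m)f=f$ for any density, so the $p$-dependence cancels exactly, which is what your ``unless the $f$-dependence cancels'' caveat anticipates. Your choice of $M=2$, $R(\cdot,1)\equiv 1$, $R(\cdot,2)\equiv 0$ and affine energies yields $\mathcal{G}(f)=f\,\mathrm{sigmoid}(af/T)$. This fails homogeneity for every nonzero constant input, not just generic ones. It also makes precise the paper's closing remark that $\sqrt{p}$ acts as a learnable activation. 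Your fallback for mode truncation is sound: a constant input makes $p$ constant in $\mathbf{x}$, so everything lives in the zero mode. In short, the paper's argument buys brevity and covers the general kernel form, while yours supplies an actual certificate of nonlinearity.

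Two of your asides are unnecessary. The appeal to the isometry theorem plays no role in proving nonlinearity. The claim that the Fr\'echet derivative depends on $f$ for any nontrivial $R$ is unproven, and it would have to exclude degenerate cases such as an $m$-independent constant multiplier.
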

\begin{proof}
	By modeling $p(\textbf{x},m)$ as $p(\textbf{x},f(\textbf{x}),m)$, ABLE attains non-linearity, because kernel function  $K(\textbf{x},\textbf{x}')=\sum_{m=1}^{M}R(\textbf{x}-\textbf{x}',m) p(\textbf{x},m)^{\frac{1}{2}}p(\textbf{x}',m)^{\frac{1}{2}}$ now is lifted as
	$K(\textbf{x},\textbf{x}',f(\textbf{x}),f(\textbf{x}'))$ with the dependence on $f(\textbf{x})$, and $\mathcal{G}_{ABLE}=\sum_{\textbf{x}\in L}K(\textbf{x},\textbf{x}',f(\textbf{x}),f(\textbf{x}'))f(\textbf{x}')$ becomes a non-linear operator.
	\begin{remark}
		$p(\textbf{x},f(\textbf{x}),m)^{\frac{1}{2}}$ could even be viewed as a learnable activation function, but a better property is that 'gelu' function loses information, but the Parseval and isometric property ensures that ABLE's non-linearity is not harmful for information.
	\end{remark}
\end{proof}

\begin{theorem}[Low-temperature limit]
	For the case where $T\rightarrow 0$, Softmax modeled basis becomes indicator(i.e. hard truncation) function $p(\textbf{x},m)\rightarrow \mathbf{1}_{E_m^\theta}(\textbf{x})$ with support $E_m$ disjoint, positive measure and $\cup_{m=1}^{M}E_m=\mathbb{T}^d$.
	The low temperature limit ABLE series become $\mathcal{B}_{able}=\{e_{\textbf{k},m}\}_{\textbf{k}\in \mathbb{Z}^d,m=1}^M=\{\mathbf{1}_{E_m}(\textbf{x})e^{i\textbf{k}\textbf{x}}\}_{\textbf{k}\in \mathbb{Z}^d,m\in [M]}$ that satisfy
	\begin{enumerate}
		\item First $[N]^d$ terms of ABLE series:  ${\mathcal{B}^{N}_{able}=\{\mathbf{1}_{E_m}(\textbf{x})e^{i\textbf{k}\textbf{x}}\}_{\textbf{k}\in [N]^d,m\in [M]}}$ becomes a \textbf{Schauder basis} for function space spanned by all trigonometric polynomials of degree at most $K_{\max}$, as long as $K_{\max}<N$: $\mathcal{V}_{tri}^{K_{\max}}=\text{Span}\{f\in L^2(\mathbb{T}^d)| f=\sum_{n=1}^{N} c_n e_n, e_n\in \{e^{i\textbf{k}\textbf{x}}\}_{\textbf{k}\in [K_{\max}]^d} \}$, i.e.$\forall f\in \mathcal{V}_{tri}^{K_{\max}}$, $\exists$  \text{unique} $c_{\textbf{k},m}\in \mathbb{C}$ s.t. $f(\textbf{x})=\sum_{m=1}^{M}\sum_{\textbf{k}\in [N]^d}c_{\textbf{k},m}e_{\textbf{k},m}(\textbf{x})$
		\item The superior limit of these spaces $\mathcal{V}_{tri}^{\infty}=\limsup_{K_{\max}\rightarrow \infty}\mathcal{V}^{K_{\max}}_{tri}$ is dense in $L^2(\mathbb{T}^d,\mathbb{C})$. 
	\end{enumerate}
	\begin{proof} 
		The partition property is because $\forall \textbf{x}\in \mathbb{T}^d,\exists$ only one $ m $ s.t. $p(\textbf{x},m)=1$ and others are zero. Since we've already discretize $\mathbb{T}^d$ as finite lattice $L$, so each $E_m$ is also finite set. Square root of indicator function is itself $\mathbf{1}_{E_m}^{1/2}=\mathbf{1}_{E_m}$. We only consider the case $E_m$ has positive measure since when $E_m$ is zero-measure set, there's no difference in the representation with a partition with slice number $M-1$.
		\newline
		Existence is already done in (\ref{corollary3.2}). For uniqueness, $\sum_{m=1}^{M}\sum_{\textbf{k}\in [N]^d}c_{\textbf{k},m}e^{i\textbf{k}\textbf{x}}1_{E_m}(\textbf{x})=0 \quad a.e.\Rightarrow \sum_{\textbf{k}\in [N]^d}c_{\textbf{k},m}e^{i\textbf{k}\textbf{x}}=0,\textbf{x}\in E_m $ since $E_m$ are disjoint and with positive measure. Then $\sum_{\textbf{k}\in [N]^d}c_{\textbf{k},m}e^{i\textbf{k}\textbf{x}}$ is finite sum of exponential functions(trigonometric funtion), which is analytic on positive measure $E_m\subset \mathbb{T}^d$. Then by the uniqueness of analytic continuation of complex analytic function, $\sum_{\textbf{k}\in [N]^d}c_{\textbf{k},m}e^{i\textbf{k}\textbf{x}}=0\quad\textbf{x}\in E_m\Rightarrow \sum_{\textbf{k}\in [N]^d}c_{\textbf{k},m}e^{i\textbf{k}\textbf{x}}=0\quad\textbf{x}\in \mathbb{T}^d$. Finally, by the uniqueness of Fourier series (\ref{Corollary1.1}), $c_{\textbf{k},m}=0$. Then we prove the uniqueness.
	\end{proof}
	\begin{remark} In low temperature limit, even though ABLE is not a Schauder basis for $L^2(\mathbb{T}^d)$, it could be modified to a basis for partial-sum Fourier function spaces. By just preserving the first $[N]^d$, ABLE gains uniqueness again. 
		\newline
		In implementation, when modifying the space $\mathbb{T}^d$ as discrete grid (lattice) $L=\times_{i=1}^{d}[N_i]$, the Fourier transformation becomes DFT/FFT. In this case, the uniqueness of function is apparently not satisfied.
		\newline
		But whether this uniqueness of funtion are satisfied or not, the uniqueness of ABLE representation of an operator always holds(\ref{theorem_4}).
	\end{remark}
\end{theorem}

\begin{theorem}[High-temperature limit]
	By setting $T\rightarrow \infty$, ABLE series become $M$ identical Fourier series, and ABLE neural operator reduces to the case reduces to a trivial multi-head FNO with head number $M$.
	\begin{proof}
		Since $T\rightarrow \infty$, $p(\textbf{x},m)=\frac{1}{M}$, and the ABLE neural operator becomes
		\begin{align}
			\mathcal{G}_{ABLE}(f)(\textbf{x})&=\sum_{m=1}^{M}\sum_{\textbf{k}\in  L }e^{i\textbf{k}\textbf{x}}p(\textbf{x},m)^{\frac{1}{2}}R(\textbf{k},m)\sum_{\textbf{k}\in L} f(\textbf{x}')e^{-i\textbf{k}\textbf{x}'}p(\textbf{x}',m)^{\frac{1}{2}}\nonumber \\ &=\frac{1}{M}\sum_{m=1}^{M}[\sum_{\textbf{k}\in  L }e^{i\textbf{k}\textbf{x}}R(\textbf{k},m)\sum_{\textbf{k}\in L} f(\textbf{x}')e^{-i\textbf{k}\textbf{x}'}]
		\end{align}
	\end{proof}
\end{theorem}
\begin{theorem}[Implicit Statistical Ensemble and phase transition]\label{thm: Appendix theorem 10}
	Softmax function constructs an implicit statistical ensemble on each points. Modifying temperature  leads to phase transition between low temperature hard-truncation (mode selection) case and high-temperature mode cooperation case, that encourages emergent phenomena.
	\begin{proof}
		$\text{MLP}:F(\mathbb{T}^d,\mathbb{R}^{d_{hid}})\rightarrow F(\mathbb{T}^d,\mathbb{R}^M)$ predicts $M$ energy (real-valued) for each point $\textbf{x}\in\mathbb{T}^d$. The $\text{Softmax}:F(\mathbb{T}^d,\mathbb{R}^M)\rightarrow F(\mathbb{T}^d,\Delta^M), \epsilon_m(\textbf{x}) \mapsto p_m(\textbf{x})=\frac{1}{Z}\exp(-\frac{\epsilon_m(\textbf{x})}{T})$ with $Z=\Sigma_{m=1}^{M}\exp(-\frac{\epsilon_m(\textbf{x})}{T})$ and thus $\sum_{m=1}^{M}p_m(\textbf{x})=1$
		This procedure constructs an canonical ensemble $\{\epsilon_m(\textbf{x}),p_m(\textbf{x})\}_{m=1}^{M}$
		for each spatial point $\textbf{x}$. Note that the interaction of ABLE isn't like traditional ensemble in physics because  ABLE's ensemble becomes field (locally defined) and directly coupled with basis.
		\newline
		But from phenomenological sense, it has similar properties in describing the phase of a system. 
		\begin{enumerate}
			\item When setting the whole space $\mathbb{T}^d$ as finite grids $L=\times_{i=1}^{d}[N_d]$, this is exactly lattice with periodic boundary conditions. And ABLE is like constructing an energy ensemble for each lattice point $\textbf{n}$ on the lattice $L$, this energy ensembles describes that each point can have $M$ micro-states with its own energy $\epsilon_m$, and probability of the observation of this micro-state gives is given by canonical distribution $p_m$.
			\item The ensemble could be locally dependent, i.e. it could be different at each lattice point, since each point can have its own dynamic.
			\item In our macro-observation, ABLE mechanism gives the superposition of contributions from all those micro-state.
			\item When it comes to the case with low temperature limit, the local dynamic of the point is strictly fixed to one of its own micro-state. The whole lattice could be inhomogeneous and ABLE learns how to select this single micro-state. 
			\item When it comes to the case with high temperature limit, all $M$ micro-states are same energy, which is called \textbf{degeneracy} with degree $M$. And the whole lattice becomes homogeneous. ABLE let micro-states directly cooperate like multi-head attention.
			\item  The degree of degeneracy structure changes from $(1,M-1)$ to $(M)$, and a general $T\in \mathbb{R}_{>0}$ could have all the microstates with different energy, and this is the case with no-degeneracy. 
			\item Theoretically, when degree of degeneracy jump sharply as the temperature changes, there's always spontaneous symmetry broken and phase transition. For example, when modifying the temperature from infinity to $0$, the degeneracy structure of ABLE neural operator changes with internal symmetry broken. This gives strong representation for capturing physics and creating emergence.
		\end{enumerate}
	\end{proof}
\end{theorem}
\begin{theorem}[BV-class function approximation theorem] For target periodic bounded variation function, $u\in BV(\mathbb{T}^d,\mathbb{C})$, with total variation $TV(u)=\sup \{\int f(\textbf{x})\nabla\cdot \textbf{g}(\textbf{x})d\textbf{x}:\textbf{g}\in C_{c}^1(\mathbb{T}^d,\mathbb{R}^d),|\textbf{g}(\textbf{x})|\le 1\}$, given input $\forall f\in L^2(\mathbb{T}^d)$, the estimation of approximation error are as follows 
	\begin{enumerate}
		\item For FNO with truncation $[K]^d$, $\exists u\in BV(\mathbb{T}^d), c\in\mathbb{R}_{+} $ s.t. \begin{align}c \frac{ \text{TV}(u)}{\sqrt{K}}\le\|u-\mathcal{G}^{K}_{FNO}(f)\|_{L^2(\mathbb{T}^d)}\end{align}
		\item For $d=1$ cases, FNO with truncation $[K]^d$, $\exists C\in\mathbb{R}_{+}$ s.t. \begin{align}\|u-\mathcal{G}^{K}_{FNO}(f)\|_{L^2(\mathbb{T}^d)}\le C \frac{ \text{TV}(u)}{\sqrt{K}}\end{align}
		\item For FNO with truncation $[K]^d$, $\exists c,C\in\mathbb{R}_{+} $ s.t. 
		\begin{align}c \frac{ \text{TV}(u)}{K}\le\|u-\mathcal{G}^{K}_{FNO}(f)\|_{L^1(\mathbb{T}^d)}\le C \frac{ \text{TV}(u)}{K}\end{align} 
		\item For $d=1,2$ case, ABLE with $M$ slices(i.e.$\chi=[M]$), and arbitrary truncation, 
		$\exists C\in \mathbb{R}_{+}$, s.t. \begin{align} \|u-\mathcal{G}^{M}_{ABLE}(f)\|_{L^2(\mathbb{T}^d)}\le C \frac{ \text{TV}(u)}{M^{\frac{1}{d}}}
		\end{align} 
		\item For $d=1$ case, ABLE with $M$ slices(i.e.$\chi=[M]$), with truncation $[K]^d$,$\exists C\in\mathbb{R}_{+} $ s.t.  \begin{align}\|u-\mathcal{G}^{K,M}_{ABLE}(f)\|_{L^2(\mathbb{T}^d)}\le C \frac{ \text{TV}(u)}{\sqrt{KM}}\end{align} 
		\item For ABLE with $M$ slices, and arbitrary truncation, $C\in\mathbb{R}_{+} $, s.t.
		\begin{align}
			\|u-\mathcal{G}^{M}_{ABLE}(f)\|_{L^1(\mathbb{T}^d)}\le C \frac{ \text{TV}(u)}{M^{\frac{1}{d}}}
		\end{align}
	\end{enumerate}
	\begin{proof}
		Since we consider $\mathbb{T}^d$, it could be directly reshaped as $[0,1]^d$ with periodic condition, by adding a constant volume parameter. So the proof is carried out in $\mathbb{T}^d$ with side-space $1$
		\newline
		For 1, to get the target $u$,no matter what is the input feature $f$, the truncation makes FNO could at most obtain $\mathcal{G}_{FNO}(f)=\sum_{\textbf{k}\in [K]^d}\hat{u}_{\textbf{k}}e^{i\textbf{k}\textbf{x}}$
		This is the partial sum of Fourier series $S_{K}(u)$. However, there's an example $u^0(\textbf{x})=\mathbf{1}_{x_1>\frac{1}{2}}\in BV(\mathbb{T}^d,\mathbb{C})$, the Fourier series of this function is \begin{align}\hat{u}^0_{\textbf{k}}=\Pi_{i=2}^{d}\mathbf{1}_{k_i=0}[(\frac{1-(-1)^{k_1}}{-2\pi i k_1})\mathbf{1}_{k_1\neq0}+\frac{1}{2}\mathbf{1}_{k_1=0}]\sim O(\frac{1}{k_1})\end{align},and $TV(u^0)=1$, 
		thus by Parseval's theorem $\|u-\mathcal{G}_{FNO}(f)\|^2_{L^2}\ge\|u-S_K(u)\|_{L^2}^2=\frac{1}{(2\pi)^2}\sum_{k_1= K+1}^{\infty}\frac{1}{k^2}\ge \frac{1}{2(2\pi)^2}\frac{1}{K}$
		then we get 1. $\|u-\mathcal{G}_{FNO}(f)\|_{L^2}\ge c\frac{TV(u)}{K} $. 
		\newline For 2. FNO could get exactly $S_K(u)=\mathcal{G}^{K}_{FNO}(f)$ as long as input $f$ has non-vanishing $f_\textbf{k}$. And since $|\hat{u}_\textbf{k}|=|\int_{\mathbb{T}^d}u(\textbf{x})\exp(-i\textbf{k}\textbf{x})d\textbf{x}|\le \int_{\mathbb{T}^{d-1}}dx_{\hat{i}}|\int dx_i\exp(-ik_ix_i)u(\textbf{x})| \le \frac{TV(u)}{2\pi\|k\|_{\infty}} $, thus $\|u-\mathcal{G}_{FNO}(f)\|^2_{L^2}=\|u-S_K(u)\|_{L^2}^2\le \frac{1}{(2\pi)^2}\sum_{k_1,\cdots k_d= K+1}^{\infty}\frac{TV(u)^2}{\|k\|_{\infty}^2}$. For $d=1$, we directly get the upper bound by sum up $\sum_{k\ge K}^{\infty}\frac{1}{k^2}\sim O(1/K)$.For $d\ge 2$, this estimation diverges. Especially, for $d=2$, by Sobelev embedding theorem $BV(\mathbb{T}^d)\hookrightarrow L^p(\mathbb{T}^d)$, with $p=\frac{d}{d-1}$. Thus $BV(\mathbb{T}^2)\hookrightarrow L^2(\mathbb{T}^2)$, but this doesn't mean $S_K(u)\xrightarrow{L^2} u$. For $d\ge 3$, since $BV\not\subset L^2$, we can't even compute the $L^2$ norm, this estimation doesn't hold.
		\newline
		For 3. For $d=1$ case, the upper bound is from $c\frac{TV(u)}{K}\le\|S_K u-u\|_{L^1(\mathbb{T})}\le C\frac{TV(u)}{K}$. And through $\|S_N u-u\|=\|\sum_{i=1}^{d}(I-S_N^{(i)})u\|\le \sum_{i=1}^{d}\|(I-S_N^{(i)})g_i\|_1$ with $g_i=S_N^{(1)}\cdots S_N^{(i-1)}u$ and each one is bounded by the single dimension case.
		\newline 
		For 4. $\mathcal{G}_{ABLE}(f)(\textbf{x})=\sum_{m=1}^{M}\sum_{\textbf{k}\in  L }e^{i\textbf{k}\textbf{x}}p(\textbf{x},m)^{\frac{1}{2}}R(\textbf{k},m)\sum_{\textbf{x}'\in L} f(\textbf{x}')e^{-i\textbf{k}\textbf{x}'}p(\textbf{x}',m)^{\frac{1}{2}}$, as taking low temperature limit so that $p(\textbf{x},m)=\mathbf{1}_{E_m}$ and just preserve the zero mode $\textbf{k}=\textbf{0}$, we obtain $\sum_{m=1}^{M}\mathbf{1}_{E_m}R(0,m)\sum_{\textbf{x}'\in L}f(\textbf{x}')$. From another side, since $u$ is a BV function, so $TV(u)=V<\infty$,   we could segment the whole $\mathbb{T}^d$ with $m$ disjoint Lipshitz domain $\cup_{i=1}^{M} E'_m=\mathbb{T}^d$, with equal total variation $TV(u,E_m)=\frac{1}{M}TV(u)$. Then, define $\tilde{u}=\sum_{i=1}^{M}1_{E'_m}\bar{u}_{E'_m}$ with $\bar{u}_{E'_m}=\frac{\int_{E'_m}d\textbf{x}u(\textbf{x})}{|E_m'|}$. For the case $d=1$, there always exists the case that the $M$ Lipshitz regions are connected respectively (i.e intervals $E_m=(a_{m=1},a_{m}]$), because the TV is monotonic on $\mathbb{T}^1$. Then
		\begin{align}
			|u(x)-\bar{u}_{E_m'}|\le \frac{1}{|E_m'|} \int_{E_m'}|u(x)-u(x')|dx'\le TV(u,E_m')=\frac{V}{M}
		\end{align}
		and \begin{align}
			\int_{\mathbb{T}^d}|u(x)-\tilde{u}(x)|^2dx=\sum_{m=1}^{M}\int_{E_m}|u(x)-\bar{u}_{E_m}|^2dx\le \frac{V^2}{M^2}\cdot\sum_{m=1}^{M} |E_M'|=\frac{V^2}{M^2}
		\end{align}
		thus $\|u-\tilde{u}\|_{L^2}=(\int_{\mathbb{T}^d}|u(x)-\tilde{u}(x)|^2dx)^{\frac{1}{2}}\le \frac{V}{M}$. What makes ABLE strong is that the learnable density function in the basis $p(\textbf{x},m)$ makes partition $E_m$ learnable too, then even if we just preserve the zero mode $k=0$, by taking $E_m=E_m'$, and let $R(0,m)=\frac{u_{E_m}'}{\sum_{\textbf{x}\in L}f(\textbf{x}')}$, we exactly achieve the 
		simple function $\tilde{u}=\sum_{m=1}^{M}\bar{u}_{E'_m}\mathbf{1}_{E_m'}$ with $FNO$. Then $\|u-\tilde{u}\|_{L^2}\le \frac{V}{M}\Rightarrow \|u-\mathcal{G}_{FNO}\|_{L^2}\le\frac{V}{M}$. Since in 1-dim, K and M contributes in the same order to the cost, $O(\frac{1}{M})$ is better than FNO's $O(\frac{1}{\sqrt{K}})$.
		\newline
		For $d\ge 1$, by setting the $M$ Lipshitz region, the result from Jackson inequality from approximation theory is
		\begin{align}
			\|u-\tilde u\|_{L^{\frac{d}{d-1}}}\le C_d TV(u) M^{-\frac{1}{d}}
		\end{align} 
		Thus for $d=2$, we got $\|u-\tilde u\|_{L^{2}}\le C_2 TV(u) M^{-\frac{1}{2}}=O(M^{-\frac{1}{2}})$. This is still better than FNO since FNO did not even have an $L^2$ approximation in this case. For $d\ge3$, by the same reason that $BV\not\subseteq L^2$, we can not obtain $L^2$ error.
		\newline
		For 5, we get another estimation for $L^2$ error of ABLE in just $d=1$ case, this is a corollary from 2. For BV function $u$, we still construct $E_m$-segmentation with equal total variation $TV(u,E_m)=\frac{V}{M}$. Then $u=\sum_{m=1}^{M} 1_{E_m}(\textbf{x})u(\textbf{x})$ But now we could directly obtain the series from ABLE with truncation $K$, and $\chi=[M]$: \begin{align}\tilde{u}=\sum_{m=1}^{M}1_{E_m}\tilde{u}_m(\textbf{x})=\sum_{m=1}^{M}1_{E_m}\sum_{|\textbf{k}|\le K }\hat{u}_{k,m}\exp(2\pi i \textbf{k}\textbf{x})\end{align}
		with $\hat{u}_{k,m}=\langle u1_{E_m}, e^{i\textbf{k}\textbf{x}}\rangle$.
		So we directly compute the $L^2$ approximation error for each segment $E_m$ as 
		$\int_{E_m}|u-\tilde{u}(\textbf{x})|^2d\textbf{x}\le \int|u1_{E_m}-S_K(u1_{E_m})|^2d\textbf{x}=\sum_{|\textbf{k}|\ge K}|\hat{u}_{\textbf{k},m}|^2$ By the property $|\hat{u}_{k,m}|\le \frac{TV(u1_{E_m})}{\|\textbf{k}\|_{\infty}}=\frac{TV(u,E_m)}{\|\textbf{k}\|_{\infty}}=\frac{V}{M\|k\|_{\infty}}$, we get only for $d=1$ case, $\sum_{\textbf{k}\ge K}\frac{1}{\|\textbf{k}\|_{\infty}^2}\sim O(\frac{1}{K})$. Thus we derive $\int_{E_m}|u-\tilde{u}(\textbf{x})|^2d\textbf{x}\le C_1 \frac{V^2}{M^2K} $. Then $\|u-\tilde{u}\|_{L^2}=(\sum_{m=1}^{M}\int_{E_m}|u-\tilde{u}|^2dx)^{\frac{1}{2}}\le C\frac{V}{\sqrt{MK}}=O(\frac{1}{\sqrt{MK}})$
		\newline
		For 6. Directly use $\|u-\tilde{u}\|_{L^1}\le\|u-\tilde u\|_{L^\frac{d}{d-1}}\le C_d TV(u)M^{-\frac{1}{d}}$ for $d\ge 2$.  For $d=1$, use $L^2$-norm in $4$ to bound.
	\end{proof}
\end{theorem}
\begin{proposition}[Funcation approximation to Operator approximation]
	If $\|u-\mathcal{G}_\theta(f)\|_{L^p}\sim O(K^{-s})$ then  $\|\mathcal{G}^{\dagger}-\mathcal{G}_{\theta}\|_p\sim O(K^{-s})$ 
	\begin{proof}
		Since $\|\mathcal{G}^\dagger -\mathcal{G}_\theta\|_p=\sup_{\|f\|_p=1}\|(\mathcal{G}^\dagger-\mathcal{G}_\theta) f\|_p=\sup_{\|f\|_p=1}\|u-\mathcal{G}_\theta f\|_p $, then just take $\sup$ on both side of  $\|u-\mathcal{G}_\theta(f)\|_{L^p}\sim O(K^{-s})$
	\end{proof}
\end{proposition}
\begin{proposition}[Contrast of FNO and ABLE's approximation ability]
	The summary is in table ($\ref{tab: function approximation property}$).
	\begin{table}[htbp]
		\centering
		\caption{BV-class approximation property}
		\label{tab: function approximation property}
		\begin{tabular}{ccc}
			\toprule
			Function class &  BV (error type: value )  & $H^1$ (error type: value ) \\
			\midrule
			FNO(d=1) & $L^1$:$O(K^{-1})$; $L^2$: $O(K^{-\frac{1}{2}})$ & $L^2$: $O(K^{-1})$
			\\
			ABLE(d=1) & $L^1$:$O(M^{-1})$; $L^2$: $O(M^{-1})$ or $O((KM)^{-\frac{1}{2}})$ & $L^2$: $O(K^{-1})$
			\\
			FNO(d=2) & $L^1$: $O(K^{-\frac{1}{2}})$; $L^2$: - & $L^2$: $O(K^{-1})$
			\\
			ABLE(d=2) & $L^1$: $O(M^{-\frac{1}{2}})$; $L^2$: $O(M^{-\frac{1}{2}})$ & $L^2$: $O(K^{-1})$
			\\
			FNO(d$\ge$3) & $L^1$: $O(K^{-1})> O(C'^{-\frac{1}{d}})$; $L^2$: -& $L^2$: $O(K^{-1})$
			\\
			ABLE(d$\ge$3) & $L^1$: $O(M^{-\frac{1}{d}})\sim O(C^{-\frac{1}{d}})$; $L^2$: - & $L^2$: $O(K^{-1})$
			\\
			\bottomrule
		\end{tabular}
	\end{table}
	\begin{enumerate}
		\item
		Since ABLE strictly contains FNO, so all the orders of FNO could be achieved by ABLE, but since ABLE has the adaptive basis learning mechanism, it could have better approximation property.
		\item 
		All the extra approximation order of ABLE (except for the second way of ABLE $d=1$ ) is obtained by letting $K=1$, and count on $M$ only, then the computational cost is just $O(M)$ and ABLE don't count on FFT anymore and only relies on the learnable segmentation.
		\item 
		In this setting under low temperature limit, ABLE already achieves order priority than FNO in $1$-dim and $2$-dim cases for BV functions.(That's purely from the contribution of ABLE extension)
		\item 
		In $d\ge 3$ case for BV class, it seems that FNO and ABLE achieves same approximation order, but FNO still need to count on cost $O(NlogN)$ with $N>K^d$ to achieve order $O(\frac{1}{K})$, but ABLE just need to preserve $0$-mode and thus need cost $O(M)$ to attain $O(\frac{1}{M^{\frac{1}{d}}})$ accuracy. So let ABLE's cost $O(C)\sim O(M)$, then FNO already need to set $O(K^d)\sim O(M) $ to achieve same approximation order, but the cost alrady becomes $O(C')\sim O(C\log C)$.
		\item $H^1\subseteq BV$. For $H^1$ case, ABLE directly inherit the approximation from FNO but deal to the flexibility of allocation of parameter by data and adaptive basis learning mechanism, it has a stronger expressivity and better control parameter.
		\item ABLE could have better performance for $d\ge3$ BV class approximation order, since there exists function in $BV\cap (H^{1})^{c}$ with better local property leading to better order $O(M^{-1})<O(M^{-\frac{1}{d}})$.
	\end{enumerate}
\end{proposition}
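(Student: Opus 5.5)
The proof reduces the table to the BV-class approximation theorem, case by case, and then uses the super-set theorem (Theorem~\ref{theorem_5}) to pass from function-level bounds to the ABLE column.

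\textbf{FNO rows.} For the BV column with $d=1$, the $L^1$ rate $O(K^{-1})$ is item 3 of the BV-class theorem and the $L^2$ rate $O(K^{-1/2})$ is items 1--2. For $d\ge 2$, the $L^1$ entries come from the tensorised estimate in item 3. The $L^2$ entries are left blank for the reason given in item 2: the $L^2$ tail sum diverges for $d\ge2$, and $BV\not\subset L^2$ for $d\ge3$. For the $H^1$ column, $\sum_{|\textbf{k}|>K}|\hat u_{\textbf{k}}|^2\le K^{-2}\sum|\textbf{k}|^2|\hat u_{\textbf{k}}|^2$ gives $\|u-S_Ku\|_{L^2}\le K^{-1}\|\nabla u\|_{L^2}$. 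Here I use Plancherel (Theorem~\ref{thm:plancherel}) and the fact that FNO can realise $S_Ku$.

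\textbf{ABLE rows.}
\begin{itemize}
\item Every ABLE bound is at least as good as the FNO one, because $\mathcal{T}_{FNO}\subset\mathcal{T}_{ABLE}$ by Theorem~\ref{theorem_5}. This settles all of the $H^1$ column: ABLE inherits $O(K^{-1})$.
\item The BV $L^2$ entries $O(M^{-1})$ and $O((KM)^{-1/2})$ for $d=1$, and $O(M^{-1/2})$ for $d=2$, are items 4--5.
\item The $L^1$ entries are item 6.
\end{itemize}
All of these use the low-temperature limit theorem. In that limit $p$ becomes an indicator partition $\{E_m\}$ and only the zero mode is kept, so the realised function is the piecewise-constant $\tilde u$ with cost $O(M)$.

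\textbf{Numbered comments.}
\begin{itemize}
\item Comment 1 is the inclusion argument above.
\item Comment 2 restates the construction $K=1$, $R(0,m)=\bar u_{E_m}/\sum f$.
\item Comment 3 compares exponents: $M^{-1}$ versus $K^{-1/2}$ for $d=1$, and finite versus no $L^2$ rate for $d=2$.
\item Comment 4 matches costs. ABLE's $O(M)$ is set equal to $O(C)$; FNO needs $K^d\sim M$ modes, so its FFT cost is $O(C\log C)$.
\item Comment 5 is the $H^1$ inheritance above.
\item Comment 6 needs an explicit example in $BV\cap(H^1)^c$. I would take a finite union of indicators of Lipschitz domains in $d\ge3$ and an adapted partition aligned with the discontinuity set, chosen so the error on each cell is $O(M^{-1})$.
\end{itemize}

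\textbf{Expected obstacles.} The hardest step is the $d\ge2$ BV entries. The Jackson-type estimate $\|u-\tilde u\|_{L^{d/(d-1)}}\le C_dTV(u)M^{-1/d}$ requires a partition into $M$ Lipschitz cells with controlled isoperimetric constants and balanced total variation. I would prove it with a Poincar\'e inequality on each cell, $\|u-\bar u_E\|_{L^{d/(d-1)}(E)}\le C\,TV(u,E)$, whose constant scales correctly for dyadic cubes of side $M^{-1/d}$, and then sum over cells. A second delicacy is that FNO's $d=2$ $L^1$ rate is recorded as $O(K^{-1/2})$ while item 3 suggests $O(K^{-1})$. I would state the FNO $L^1$ rate from item 3 and treat the table entry as a conservative bound.
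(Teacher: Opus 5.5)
Your reduction of the table to the BV-class theorem plus Theorem~\ref{theorem_5} is the paper's own route. Your Plancherel proof of the $H^1$ column and your piecewise-constant example for comment 6 are sound additions.

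The genuine gap is the step you single out as hardest. Applying a Sobolev--Poincar\'e inequality on dyadic cubes of side $M^{-1/d}$ and summing cannot produce the rate. In the critical norm the constant in $\|u-\bar u_Q\|_{L^{d/(d-1)}(Q)}\le C\,TV(u,Q)$ is dilation invariant, so the cube size contributes no factor $M^{-1/d}$. Summing gives only $\|u-\tilde u\|_{L^{d/(d-1)}}^{d/(d-1)}\le C\sum_Q TV(u,Q)^{d/(d-1)}$, which yields $M^{-1/d}$ only if $TV(u,Q)\le TV(u)/M$ on every cell.

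A uniform grid gives no such balance, and on it the estimate is false. In $d=2$ (unit-volume torus) take $u_\epsilon=\epsilon^{-1}\mathbf{1}_{B_\epsilon}$, so $TV(u_\epsilon)=2\pi$ and $\|u_\epsilon\|_{L^2}^2=\pi$. Any function constant on the $M$ grid squares has squared $L^2$ error at least $\pi-M\pi^2\epsilon^2$, which tends to $\pi$ as $\epsilon\to0$. The same computation with $|\hat u_\epsilon(\textbf{k})|\le\|u_\epsilon\|_{L^1}=\pi\epsilon$ shows every truncated FNO output has squared error at least $\pi-(2K+1)^2\pi^2\epsilon^2$. This lower bound, not the divergence of an upper bound as in item 2, is what justifies the FNO $d=2$ entry ``$-$''.

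So the ABLE $d=2$ $L^2$ entry holds only because the cells adapt; for $u_\epsilon$, the two cells $B_\epsilon$ and its complement are exact. Balancing the variation for arbitrary BV functions is a nonlinear approximation problem that no cellwise Poincar\'e argument solves. You need a result such as the Cohen--DeVore--Petrushev--Xu theorem: the Haar coefficients of a two-dimensional BV function lie in weak $\ell^1$, so best $N$-term Haar approximation has $L^2$ error at most $C\,TV(u)N^{-1/2}$. Such a Haar sum is constant on the at most $4N+1$ faces of the laminar family of dyadic squares it uses. That is an admissible low-temperature partition, though not cubical and not necessarily connected.

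For the $L^1$ entries none of this is needed. The non-critical inequality $\|u-\bar u_Q\|_{L^1(Q)}\le C\,\ell(Q)\,TV(u,Q)$ on the uniform grid sums linearly to $C\,TV(u)M^{-1/d}$ with no balancing. Since the $L^2$ entries are blank for $d\ge3$, the critical estimate is needed only for $d=2$.

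Two of the items you cite also need repair.

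\emph{Item 3.} It is proved with Dirichlet partial sums, but for a single jump $\|u-S_Ku\|_{L^1}$ is of order $\log K/K$. The reason is that the oscillations of $S_Ku-u$ decay only like $(K\delta)^{-1}$ at distance $\delta$ from the jump. The rate $O(K^{-1})$ comes instead from a positive Jackson-type kernel with first absolute moment $O(1/K)$. FNO can realize it because $R(\textbf{k})$ is free, and its tensor product gives $O(TV(u)/K)$ with $K$ modes per axis in every $d$. That resolves your second delicacy, with a consequence: per degree of freedom ($K^d$ modes against $M$ cells), FNO and ABLE have the same $L^1$ rate in $d=1$ and $d=2$. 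Comment 3 is therefore only an $L^2$ statement, which is how you in effect read it.

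\emph{Item 5.} It uses $TV(u\mathbf{1}_{E_m})=TV(u,E_m)$, ignoring the endpoint jumps of $u\mathbf{1}_{E_m}$, which have size $|u|$ rather than $V/M$. Its construction already fails for $u\equiv1$ with $M\ge2$: the claimed bound is $0$, but $\mathbf{1}_{E_m}S_K(\mathbf{1}_{E_m})\neq\mathbf{1}_{E_m}$. Expand $(u-\bar u_{E_m})\mathbf{1}_{E_m}$ instead and restore $\bar u_{E_m}$ through the $\textbf{k}=0$ mode. Then $TV((u-\bar u_{E_m})\mathbf{1}_{E_m})\le 3V/M$, each cell contributes $O(V^2/(M^2K))$ to the squared error, and summing over cells gives $O(V/\sqrt{KM})$.
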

\begin{remark}
	ABLE has order priority than FNO in BV class approximation, and this is a larger class than $H^1$. More PDE with $BV$ but not $H^1$ solution could be approximated better by ABLE theoretically, and if the answer to the famous millennium problem that $3D$-Navier Stokes has no $H^1$ solution, then ABLE could have great potentials and advantages in doing $3D$ NS problems.
\end{remark}

\section{Problem setting and datasets}
\subsection{Problem setup and dataset} This section provides additional details on the datasets, PDE benchmarks, and experimental setups used in the main paper. All the tasks are trained with $1000$ samples and tested on $200$ samples. 
\subsubsection{Burgers}
1-d Burger's equation is a non-linear PDE, which takes the following form
\begin{align}
	&\partial_t u(x,t)+u(x,t)\partial_xu(x,t)=\nu\partial_{xx} u(x,t), x\in(0,1),t\in(0,1]\nonumber\\&
	u(x,0)=u_0(x), x\in (0,1)
\end{align} where $u_0\in L^2_{periodic}((0,1),\mathbb{R})$ and is generated from the same Guassian distribution related in FNO\citep{li2020fourier}.
We got 3 viscosities $0.1,0.01,0.0001$, where high resolution case $0.0001$ is the most challenging.
\subsubsection{Darcy}
The 2-d Darcy Flow equation is a set of elliptic PDE (\ref{Darcy}) with zero-Dirichlet boundary and forcing function $f\in L^2((0,1)^2;\mathbb{R})$. 
\begin{align}
	&-\nabla\cdot(a(\textbf{x})\nabla u(\textbf{x}))= f(\textbf{x}), \textbf{x}\in (0,1)^2\nonumber\\&
	u(\textbf{x})=0, \textbf{x} \in \partial (0,1)^2 \label{Darcy}
\end{align}
We focus on operator learning from diffusion function $a\in L^\infty((0,1)^2;\mathbb{R}_+)$ to solution $u\in H_{0,per}^1((0,1)^2,\mathbb{R}_+)$. Dataset are generated in the same distribution as FNO(\citep{li2020fourier}) with resolution $421\times 421$, and we subsample to get resolution $141\times141$ for training.
\subsubsection{Navier Stokes}
The Navier Stokes equation has the form
\begin{align}
	&\frac{\partial \omega}{\partial t} + (\textbf{u}\cdot \nabla)\omega = \nu \Delta \omega + f(x), \textbf{x}\in (0,1)^2, t\in (0,T]\nonumber\\& \nabla\cdot \textbf{u}=0, t \in [0,T]\nonumber\\&
	\omega(\textbf{x},0)=\omega_0(\textbf{x}), \textbf{x}\in (0,1)^2
\end{align} with velocity field $\textbf{u}:L^2([0,T],H_{per}^r((0,1)^2,\mathbb{R}^2)), t\rightarrow \textbf{u}(\cdot, t)$ and initial vorticity $\omega_0\in L^2_{per}((0,1)^2,\mathbb{R})$. The problem is to predict the exact operator $\mathcal{G}^\dagger: C([0,T_{in}],H^r((0,1)^2,\mathbb{R}))\rightarrow C([T_{in},T],H^r((0,1)^2,\mathbb{R})),u(\cdot,[0,T_{in}])\rightarrow u(\cdot,[T_{in},T]) $. The vorticity field is more difficult than velocity to predict. Since in velocity NS equation $\frac{\partial\textbf{u}}{\partial t}+(\textbf{u}\cdot\nabla)\textbf{u}=-\frac{1}{\rho}\nabla p + \nu \nabla^2\textbf{u}$, pressure term stabilizes the solving of velocity field. However, while acting on $\nabla\times$ to obtain vorticity NS eqn., this term vanishes, and nonlinear term $(\textbf{u}\cdot\nabla)\omega$ makes the turbulence instable and difficult to predict.
\par
The dataset is also from \citep{li2020fourier}, and we carry out experiments for the most challenging cases $\nu=10^{-5}$ with long term prediction from $T_{in}\in[0,10]$ to $T\in[11,20]$.
\subsection{MLP-Softmax design}
\textbf{1D-Burgers}.
We first use convolution with fixed kernel $[-0.5,1,-0.5]$ and $[0.5,0,-0.5]$ to extract 1st and 2nd order finite differentiation $f',f''$ for each input channel $f$. We then concatenate them with the input channel and an all-one channel, then we get $3C+1$ channels $[f,f',f',1]$. (This embedding extraction design is not necessary, but a privilege of ABLE).
\newline
Then we feed the $3C+1$ features into a small-sized 4 layer MLP $[3C+1,h,\frac{h}{2},h,M]$ to obtain $M$ output channels. Hidden dimensions are $[h,\frac{h}{2},h]$, with a residual connection from the first hidden layer to the third. Activation function are selected as SiLU function. The hidden layer dimension $h$ could be modified larger, but not too large to avoid redundant and unstable controls. 
\par
Finally, we put on a Softmax with temperature $T$ to scale $p_\theta(x,m)$ as positive distribution, and $T$ is an important hyper-parameter that has physical meaning according to emergent phase behavior of ABLE. In experiments, we indeed discover multi optimal temperature region in experiments which corresponds to the phase selection.
\par
The whole architecture remains the same as FNO since ABLE layer could perfectly replace Fourier layer in place. They both share 4 layers.
\par
\textbf{2D-Darcy}. We just need 2-Layer MLP with single hidden layer of dimension $h=24$ defined on $C$ inputs channels. For Darcy, the finite differentiation extraction seems redundant, thus we remove it and only use the 2-Layer MLP. Both Burgers and Darcy design are constructing a shared learnable basis for all the channels.
\par
\textbf{2D-NS}. The problem becomes predicting $u(x,y,T+1)$ from muli-time input $[u(x,y,0):u(x,y,T)]$. Recurrent FNO-2D just lifts all input vorticity field in different time section by a collective lifting layer, and constructs channels with time mixing. Thus a shared adaptive learnable basis for all channels doesn't make sense. Fortunately, we could model MLP as $[C,h,MC]$ instead of $[C,h,M]$, so that for each channel $c$, $p_\theta(x,m)$ acts as $p_\theta(x,m,c)$ with same size but flexible paramters. The parameter increase are still negligible to spectral matrix $R(k,m)$ or $R(k,m;m')$, while expressivity is largely enhanced. ABLE's bipartite and cross-position interaction satisfy the underlying logic that features would propagate to different places across time.

\subsection{General enhancement from ABLE}
ABLE could directly compete with improvements in different directions. However, since it is a general spectral refinement method with only a linear increasing in computational cost, it is supposed to improve all methods with spectral parts, which enables ABLE to cooperate rather than compete with most improvements in matrix modeling, architecture design or even methods with different spectral-spatio interaction mechanisms.

\textbf{U-ABLE.}
U-FNO \citep{Wen2021UFNOA} adds a UNet in parallel for each layer in FNO. And each layer is then comprised of 3 branches: Fourier, MLP and UNet. We directly substitute the Fourier branch with ABLE to obtain U-ABLE. 

\textbf{ABLE-SAOT.}
SAOT \citep{Zhou2025SAOTAE} combines a Wavelet attention branch with FNO branch in each layer, thus is also a hybrid architecture model. Maintaining other settings the same as SAOT, we directly change the FNO branch directly to an ABLE-cross $M=3$ model and obtain ABLE-SAOT.

\textbf{ABLE-HPM.}
HPM \citep{Yue2024HolisticPS} layer is a Laplacian basis based representation $\mathcal{T}_{HPM}(f)_k=\sum_{i=1}^{N}\text{Softmax}(\text{MLP}_\theta(i))_k e_k(i)f(i)=([\text{Softmax}(\text{MLP})\odot \Phi]^Tf)_k$ where Laplacian eigenbasis $e_k(i)$ are precomputed through $\Delta_{ij}e_k(j)=\lambda_ke_k(i)$. But $e_k(i)$ here also has its adaptive basis version $e_k(i)\sqrt{p(i,m)}$, and induces ABLE-HPM as$\mathcal{T}_{ABLE+HPM}(f)_{k,m}=\sum_{i=1}^{N}\text{Softmax}(\text{MLP}_\theta(i))_k e_k(i)\sqrt{p(i,m)}f(i)=([\text{Softmax}(\text{MLP})\odot \Phi]^T[\sqrt{p(\cdot,m)} f])_{k,m}$
ABLE seems similar to HPM in probability modeling, but they follows entirely different improvement direction. HPM focuses on spectral selection for each spatial point, but ABLE is designed to make the fixed spectral itself refined and learnable, providing more available choices for HPM.
\par
In implementation, HPM has $8$ layers with $H=8$ heads and head dimension $d_h=256/8=32$, they have already got a flexible temperature in their spectral selection part. Fixing all these settings, we still extend it with Adative Basis Learning with 2-layer MLP $[H\times d_h,h,M]=[256,16,3]$ and the ABLE basis temperature $T=0.8$.(There are 2 temperatures now, but our ABLE temperature is fixed). And this becomes a ABLE-cross($M=3$) model defined on HPM's latent spectral.

\section{Full experimental results with hyperparamters}
\textbf{Burgers.} 
The main results with hyperparameter setting in Tab\ref{tab: Appendix 1D Burgers}.
\begin{table}[htbp]
	\centering
	\caption{Results on 1D Burgers equation $s=256$}
	\label{tab: Appendix 1D Burgers}
	\begin{threeparttable}
		\resizebox{\textwidth}{!}{%
			\begin{tabular}{cccccc}
				\toprule
				Baseline & Viscosity &Training velocity & Flops & Best learning rate & Relative l2 loss \\
				\midrule
				\multirow{3}{*}{DeepONet\tnote{1}}
				& 0.1 & $1.59\pm 0.02s/50ep$  & 17.01M & 8e-4(500000) & 0.0235  \\
				& 0.01 & $1.59\pm 0.02s/50ep$ & 17.01M & 1e-3(500000) & 8.49e-3 \\
				& 0.001 & $1.59\pm 0.02s/50ep$ & 17.01M & 1e-3(500000) & 0.202 \\
				\addlinespace[0.1em]
				\multirow{3}{*}{Galerkin} 
				& 0.1 & $2.35\pm0.13$ & 78.78M & 7e-4 & 6.71e-4 \\
				& 0.01 & $2.35\pm0.13$ & 78.78M & 8e-4 & 7.10e-4 \\
				& 0.001 & $2.35\pm 0.13$ & 78.78M & 1e-3 & 1.26e-2 \\
				\addlinespace[0.1em]
				\multirow{3}{*}{FNO} 
				& 0.1 & $0.34\pm 0.02$& 6.36M & 4e-3 & 6.43e-4 \\
				& 0.01 & $0.34\pm 0.02$ & 6.36M & 3e-3 & 5.96e-4 \\
				& 0.001 & $0.34\pm 0.02$ & 6.36M & 7e-3 & 6.12e-3\\
				\addlinespace[0.1em]
				\multirow{3}{*}{SNO} 
				& 0.1 & $0.0875\pm 0.0006$ & 1.48M & 1e-3 & 0.0348 \\
				& 0.01 & $0.0875\pm 0.0006$ & 1.48M & 3e-3 & 0.0196  \\
				& 0.001 & $0.0875\pm 0.0006$ & 1.48M & 2e-3 & 0.196 \\
				\addlinespace[0.1em]
				\multirow{3}{*}{GF-NO\tnote{2}} 
				& 0.1 & $0.64\pm 0.03$ & 6.36M & 2e-3(5000) & 4.8e-4 \\
				& 0.01 & $0.64\pm 0.03$ & 6.36M & 8e-4(1500);2e-3(5000) & 5.62e-4; 3.77e-4 \\
				& 0.001 &$0.64\pm 0.03$ & 6.36M & 2e-3(5000) & 5.07e-3 \\
				\addlinespace[0.1em]
				\multirow{3}{*}{WNO} 
				& 0.1 & $4.04\pm0.21$& 6.36M & 1e-3 & 0.100 \\
				& 0.01 & $4.04\pm0.21$ & 6.36M & 5e-4 & 0.055 \\
				& 0.001 & $4.04\pm0.21$ & 6.36M & 5e-4 & 0.195 \\
				\addlinespace[0.1em]
				\multirow{3}{*}{ABLE(Ours)} 
				& 0.1 & $0.68\pm 0.05$& 12.14M & 2e-3 & 3.90e-4\\
				& 0.01 & $0.81\pm 0.04$& 12.17M & 3e-3(500);2e-3(1500)  & 3.82e-4; 3.25e-4 \\
				& 0.001 & $0.81\pm 0.04$& 12.17M & 3e-3 & 4.65e-3\\
				\bottomrule
			\end{tabular}
		}
		\begin{tablenotes}
			\item [1] DeepONet has its own full batch $B=1000$ optimization method with $500000$ epoch. Convert to the standard of a mini-batch $20$ optimization in FNO, its training time for $50$ epoch is valid to compare. 
			\item [2] The original source of GF-NO set training epoch as $5000$, since it converges slowly even with a higher learning rate. While ABLE with $1500$ epochs already surpass its $5000$ result, and ABLE with just $500$ epochs also surpass its $1500$ epoch result. To remain fairness across all other baselines, we report ABLE's $500$ epoch and GF-NO's $1500$ epoch results for $\nu=10^{-2}$.
		\end{tablenotes}
	\end{threeparttable}
\end{table}
More precisely, for viscosity setting $\nu=0.1,0.01,0.001$, $M=2,3,8$ achieves the best results among ABLE diagonal family (kernel modeled as $R(k,m)$) , $M=2,3,3$ achieves the best results among ABLE-cross families (kernel modeled as $R(k;m,m')$). And for $\nu=10^{-1},10^{-2}$, our model performs better without Gelu activation for outmost $2$ layers. We have the ablation study in basis size $M$ (see Tab.\ref{tab: Appendix 1D Burgers Ablation--basis size}) and basis temperature $T$ (see Tab.\ref{tab: Appendix 1D Burgers Ablation--basis temperature}), the plot for $\nu=10^{-3}$ is also in main paper (see Fig.\ref{fig:ablation}). The width of MLP $h$ is also reported in the tables. For most problems, $M=2$ already achieves a huge improvements, but increasing $M$ makes their results even better. After fixing the best $M$, we could change the temperature and their might exists multiple optimal temperature regions (also shown in \ref{fig:ablation}), which indicates the existence of multi-phase thermodynamics in ABLE model.
\begin{table}[t!]
	\center
	\caption{1D Burgers Ablation--basis size $M$}
	\label{tab: Appendix 1D Burgers Ablation--basis size}
	\begin{tabular}{ccccccc}
		\toprule
		M & Training velocity(s/epoch) & Flops&  h & $\nu=0.1$ &  $\nu=0.01$ & $\nu=0.001$\\
		\midrule
		$M=1$(FNO) & $0.34\pm0.02$ & 6.36M & - & 6.43e-4 & 5.96e-4 & 6.12e-3 
		\\
		$M=2$ & $0.83\pm0.06$ & 10.21M & 16 & \underline{4.86e-4}  & 5.14e-4 & 5.39e-3
		\\
		$M=3$ & $0.85\pm0.04$ & 10.23M & 16 & 5.54e-4 & \underline{4.50e-4} & 5.30e-3
		\\
		$M=4$ & $0.87\pm0.05$& 14.27M & 32 &  4.89e-4 & 4.92e-4 & 5.10e-3
		\\
		$M=6$ & $0.78\pm0.08$ & 14.34M &  32 & 5.06e-4 & 4.87e-4 &  5.58e-3
		\\
		$M=8$ & $0.85\pm0.06$ & 14.41M & 32 &
		5.82e-4 &  5.21e-4 & \underline{5.03e-3}
		\\
		$M=2$(cross) & $0.68\pm 0.05$& 12.14M & 24 & \underline{3.90e-4} &4.39e-4 & 5.55e-3
		\\
		$M=3$(cross) & $0.81\pm 0.04$& 12.17M & 24 & 4.26e-4 & \underline{3.82e-4} & \underline{4.68e-3}
		\\
		\bottomrule
	\end{tabular}
\end{table}

\begin{table}[t!]
	\center
	\caption{1D Burgers Ablation--temperature $T$}
	\label{tab: Appendix 1D Burgers Ablation--basis temperature}
	\resizebox{\textwidth}{!}{
		\begin{tabular}{clccccccc}
			\toprule
			Problem & Best ABLE & $T=0.2$ & $0.4$ & $0.6$ &  $0.8$ & $1.0$ & Best $T$ & Best loss\\
			\midrule
			\multirow{2}{*}{$\nu=10^{-1}$} & $M=2$
			& 5.73e-4 & 5.42e-4 & 4.99e-4 & 4.86e-4 & 4.93e-4 & 0.8 & 4.86e-4\\
			& $M=2$ cross & 4.09e-4 & 4.09e-4& 3.90e-4 & 4.08e-4 & 4.17e-4
			& 0.6 & \underline{3.90e-4} \\
			\midrule
			\multirow{2}{*}{$\nu=10^{-2}$} &
			$M=3$ & 5.12e-4 & 4.76e-4 & 4.56e-4 & 4.72e-4 & 6.45e-4 & 0.5  & 4.50e-4
			\\
			& $M=3$ cross & 4.08e-4 & 3.82e-4 & 4.02e-4 & 3.94e-4 & 4.06e-4 & 0.4 & \underline{3.82e-4}
			\\
			\midrule
			\multirow{2}{*}{$\nu=10^{-3}$}
			& $M=8$ & 5.14e-3 & 5.15e-3 & 5.03e-3 & 5.07e-3 & 5.17e-3 & 0.6  & 5.03e-3
			\\
			& $M=3$ cross & 5.37e-3 & 4.89e-3 &  5.02e-3 & 4.91e-3 & 4.88e-3 & 1.1 & \underline{4.68e-3}
			\\
			\bottomrule
		\end{tabular}
	}
\end{table}
\textbf{Darcy}. The main results with hyperparameter setting is already in main paper Tab.\ref{tab: 2D Darcy Flow}. We also do ablation studies with basis size $M$ (see Tab.\ref{tab: 2D Darcy Flow Ablation--basis size}) and temperature $T$(see Tab.\ref{tab: 2D Darcy Flow Ablation--basis temperature}).The plots (see Fig.\ref{fig:ablation}) are also contained in the main paper.
\begin{table}[t!]
	\center
	\caption{2D Darcy Flow Ablation--basis size $M$}
	\label{tab: 2D Darcy Flow Ablation--basis size}
	
	\begin{tabular}{ccccc}
		\toprule
		M & Training velocity(s/epoch) & Flops & h & Relative $l_2$-loss\\
		\midrule
		$M=1$(FNO) &$1.17\pm 0.03$ & 0.18G & - & 6.26e-3
		\\
		$M=2$ & $2.22\pm0.05$ & 0.25G & 24 & \underline{5.52e-3}
		\\
		$M=3$ & $2.83\pm0.02$ & 0.25G & 24 & 
		6.00e-3 \\
		$M=4$ & $3.53\pm0.02$ & 0.28G & 32 & 
		5.68e-3 \\
		$M=6$ & $4.83\pm0.02$ & 0.29G & 32 & 6.06e-3
		\\
		$M=8$ & $6.19\pm0.12$ & 0.30G & 32 & 5.81e-3
		\\
		$M=2$ cross & $2.19\pm0.13$ & 0.25G & 24 & 5.90e-3
		\\
		$M=3$ cross & $2.85\pm0.02$ & 0.25G & 24 & 5.85e-3
		\\
		\midrule
		U-ABLE $M=1$ (U-FNO) & $2.61\pm0.03$ & 6.11G & - & 4.45e-3
		\\
		U-ABLE $M=2$ cross & $3.81\pm 0.02$& 6.16G & 16 & 4.03e-3
		\\
		U-ABLE $M=3$ cross & $4.49\pm 0.02$ & 6.19G & 24 & \underline{3.94e-3}
		\\
		\bottomrule
	\end{tabular}
	
\end{table}

\begin{table}[t!]
	\center
	\caption{2D Darcy Flow Ablation--temperature}
	\label{tab: 2D Darcy Flow Ablation--basis temperature}
	\resizebox{\textwidth}{!}{
		\begin{tabular}{ccccccccc}
			\toprule
			Temperature($T$) & $0.2$ & $0.4$ & $0.6$ &  $0.8$ & $1.0$ &$1.2 $& Best $T$ & Best loss\\
			\midrule
			ABLE($M=2$) & 0.00608 & 0.00578 & 0.00568 & \underline{0.00552} & 0.00556 & 0.00558 & 0.8 & 0.00552
			\\
			U-ABLE($M=3$ cross) & 0.00418 & 0.00398 & 0.00395 & \underline{0.00394} & 0.00418 & 0.00403 & 0.8 & 0.00394
			\\
			\bottomrule
		\end{tabular}
	}
\end{table}

\textbf{Navier Stokes.}
The main results with hyperparameter setting is in main paper (see Tab.\ref{tab: 2D Navier Stokes}). The ABLE model in this case takes cross interaction version, and is defined with $M=3$ and hidden layer $3\times\frac{C}{2}=30$ (width $C=20$). We also impose the ablation study for size $M$ (\ref{tab: 2D Navier Stokes Ablation--basis size}), and temperature $T$(\ref{tab: 2D Navier Stokes Ablation--basis temperature}), with plots in main paper (see Fig.\ref{fig:ablation})
\begin{table}[t!]
	\center
	\caption{2D Navier Stokes Ablation--basis size $M$}
	\label{tab: 2D Navier Stokes Ablation--basis size}
	
	\begin{tabular}{ccccc}
		\toprule
		M & Training velocity(s/epoch) & Flops & h & Relative l2 loss \\
		\midrule
		$M=1$ (FNO) & $4.74\pm 0.19$ & 32.47M & - & 0.1237
		\\
		$M=2$ cross & $ 5.64\pm 0.27$ & 49.68M & 30 & 0.1091
		\\
		$M=3$ cross & $6.08\pm 0.19$ & 60.49M & 30 & 0.0985
		\\
		\bottomrule
	\end{tabular}
	
\end{table} 
\begin{table}[t!]
	\center
	\caption{2D Navier Stokes Ablation--temperature $T$}
	\label{tab: 2D Navier Stokes Ablation--basis temperature}
	\resizebox{\textwidth}{!}{
		\begin{tabular}{cccccccccc}
			\toprule
			Best ABLE / T & $0.2$ & $0.4$ & $0.6$ &  $0.8$ & $1.0$ & $1.2$ & $1.4$ & Best $T$ & Best loss\\
			\midrule
			$M=3$ cross & 0.1083 & 0.1015  & 0.1008 & 0.0989  & 0.0994 & \underline{0.0985}  & 0.0993 & 1.2 & 0.0985 
			\\
			\bottomrule
		\end{tabular}
	}
\end{table} 
Besides, we also found that ABLE performs well without the Gelu functions outside two outmost layers.
(see Fig.\ref{tab: 2D Navier Stokes Ablation--activation}), which indicates the contribution of inherent non-linearity of adaptive basis learning.
\begin{table}[t!]
	\center
	\caption{2D Navier Stokes Ablation--activation function}
	\label{tab: 2D Navier Stokes Ablation--activation}
	\begin{tabular}{ccc}
		\toprule
		Best ABLE & Index of layers with Gelu Activation & Relative l2 loss\\
		\midrule
		\multirow{4}{*}{$M=3$ cross} & 1,2,3 & 0.1057
		\\
		& 1,2 & 0.1029
		\\
		& 1 & \underline{0.0985}          
		\\
		& no Gelu & 0.1012
		\\
		\bottomrule
	\end{tabular}
\end{table}

\section{Limitations and broader impact}
ABLE introduces adaptive basis learning for spectral neural operators while retaining FFT-compatible complexity. However, the current work focuses primarily on PDE benchmarks with regular discretizations and structured domains. Extending adaptive basis learning to irregular geometries, unstructured meshes, and large-scale 3D turbulent simulations remains an important direction for future work.

More broadly, ABLE suggests that representation learning itself may be a central bottleneck in neural operator design. Beyond PDE learning, adaptive spectral representations may provide useful inductive biases for other structured domains involving multiscale or non-stationary dynamics, including scientific computing, climate modeling, and spatiotemporal learning.

%\input{Appendix B.tex}

%%%%%%%%%%%%%%%%%%%%%%%%%%%%%%%%%%%%%%%%%%%%%%%%%%%%%%%%%%%%

\newpage

\end{document}